


\documentclass{article}

\pagestyle{plain} 
\usepackage{balance} 

\usepackage[margin=3cm]{geometry}

\usepackage{amsmath,amssymb,amsthm,mathtools}
\usepackage[utf8]{inputenc}
\usepackage{xspace}
\usepackage{url}
\usepackage{dsfont}
\usepackage{enumerate}
\usepackage{rotating}
\usepackage{lscape}
\usepackage{hyperref}
\usepackage{booktabs}

\usepackage{caption}
\usepackage{subcaption}

\usepackage{balance}
\usepackage[shortcuts]{extdash}

\hyphenation{ana-ly-sis}
\hyphenation{de-li-cate}
\hyphenation{in-di-vi-du-al}
\hyphenation{cross-over}

\usepackage{enumitem}

\usepackage[numbers,longnamesfirst]{natbib}


 \newtheorem{theorem}{Theorem}
 \newtheorem{definition}[theorem]{Definition}
 
 \newtheorem{lemma}[theorem]{Lemma}
 \newtheorem{corollary}[theorem]{Corollary}
 \newtheorem{claim}[theorem]{Claim}
 
 \newtheorem{situation}[theorem]{Situation}
 \numberwithin{theorem}{section}

\usepackage{tikz}
\usetikzlibrary{shapes,arrows}
\usetikzlibrary{decorations}
\usetikzlibrary{plotmarks}
\usetikzlibrary{calc}
\usetikzlibrary{mindmap}
\usetikzlibrary{shadows}
\usetikzlibrary{backgrounds}
\usetikzlibrary{shapes}
\usetikzlibrary{shapes.symbols}

\usepackage{pgfplots}




\clubpenalty=10000
\widowpenalty=10000

\usepackage[algo2e,ruled,vlined,linesnumbered]{algorithm2e}
\SetAlgoSkip{}
\DontPrintSemicolon

\allowdisplaybreaks[3]

\theoremstyle{remark}

\newenvironment{proofofclaim}{\textsc{Proof of Claim.}}{\hfill\scriptsize\scalebox{0.75}{$\blacksquare$}}

\newcommand*{\om}{\textsc{OneMax}\xspace}

\newcommand*{\onemax}{\om}

\newcommand*{\jump}{\textsc{Jump}\xspace}

\newcommand{\hurdle}{\textsc{Hurdle}\xspace}

\DeclareMathOperator{\Prob}{Pr}

\newcommand*{\E}{\mathrm{E}}

\DeclareMathOperator{\Bin}{Bin}

\newcommand{\N}{\mathds{N}}

\newcommand{\eps}{\varepsilon}

\newcommand{\plateau}{\textsc{plateau}\xspace}

\newcommand{\ones}[1]{\left|#1\right|_1\xspace}
\newcommand{\zeros}[1]{\left|#1\right|_0\xspace}

\newcommand{\pr}{\ensuremath{\mathrm{Pr}}}

\newcommand{\EA}{(1+1)~EA\xspace}
\newcommand{\muea}{($\mu$+1)~EA\xspace}
\newcommand{\muga}{($\mu$+1)~GA\xspace}
\newcommand{\mulga}{($\mu$+1)~GA\xspace}
\newcommand{\lambdac}{{\lambda_c}}
\newcommand{\mulgavar}{\texorpdfstring{($\mu$+1)\nobreakdash-$\lambdac$\nobreakdash-GA\xspace}{(mu+1)-lambda-c-GA}}

\newcommand{\mutation}{\mathrm{mutation}}
\newcommand{\mut}{\mathrm{mut}}
\newcommand{\cross}{\mathrm{c}}

\newcommand{\poly}{\mathrm{poly}}
\newcommand{\popt}{p^*}
\newcommand{\rem}{\mathrm{rem}}

\newcommand{\toggleplot}[1]{{\textcolor{red}{Plots removed to increase compilation speed. Use command $\backslash$toggleplot in preamble to reinsert them.}}}


\newcommand{\newedit}[1]{\textcolor{black}{#1}}
\newcommand{\neweditx}[1]{\textcolor{black}{#1}}
\newcommand{\newedity}[1]{{#1}}


\author{
  Andre Opris\\
  University of Passau\\
  Passau, Germany
  \and Johannes Lengler\\
  ETH Z\"urich\\
  Z\"urich, Switzerland
  \and Dirk Sudholt\\
  University of Passau\\
  Passau, Germany
  }



\title{Achieving Tight $O(4^k)$ Runtime Bounds on Jump$_k$ by Proving that Genetic Algorithms Evolve Near-Maximal Population Diversity}

\begin{document}

\maketitle

\begin{abstract}
The $\jump_k$ benchmark was the first problem for which crossover was proven to give a speed-up over mutation-only evolutionary algorithms. Jansen and Wegener (2002) proved an upper bound of $O(\poly(n) + 4^k/p_c)$ for the ($\mu$+1)~Genetic Algorithm (\muga), but only for unrealistically small crossover probabilities~$p_c$. To this date, it remains an open problem to prove similar upper bounds for realistic~$p_c$; the best known runtime bound\neweditx{, in terms of function evaluations,} for $p_c = \Omega(1)$ is $O((n/\chi)^{k-1})$, $\chi$ a positive constant.
We provide a novel approach and analyse the evolution of the population diversity, measured as sum of pairwise Hamming distances, for a variant of the \muga on $\jump_k$. 
The \mulgavar creates one offspring in each generation either by applying mutation to one parent or by applying crossover $\lambdac$ times to the same two parents (followed by mutation), to amplify the probability of creating an accepted offspring in generations with crossover.
We show that population diversity in the \mulgavar converges to an equilibrium of near-perfect diversity. This yields an improved time bound of $O(\mu n \log(\mu) + 4^k)$ \neweditx{function evaluations} for a range of~$k$ under the mild assumptions $p_c = O(1/k)$ and $\mu \in \Omega(kn)$. For all constant~$k$\neweditx{,} the restriction is satisfied for some $p_c = \Omega(1)$ and it implies that the expected runtime for all constant~$k$ \neweditx{and an appropriate $\mu = \Theta(kn)$} is bounded by $O(n^2 \log n)$, irrespective of~$k$. For larger~$k$\neweditx{,} the expected time of the \mulgavar is $\Theta(4^k)$, which is tight for a large class of unbiased black-box algorithms and faster than the original \muga by a factor of $\Omega(1/p_c)$. We also show that our analysis can be extended to other unitation functions such as $\jump_{k, \delta}$ and \hurdle.
\end{abstract}



\textbf{Keywords:} Runtime analysis, diversity, population dynamics

\section{Introduction and Motivation}

        Evolutionary algorithms (EAs) are randomised search heuristics inspired by principles of natural evolution. They evolve a population (a multiset) of search points to explore different parts of the search space in parallel. In one generation, they apply evolutionary operators such as mutation, crossover (also called recombination) and selection to create new search points from the current population, and to create a new population from the most promising search points. 
        
        Evolutionary algorithms are known for their ability to generate novel and unexpected solutions---a hallmark of creativity~\cite{Lehman2023}---and have been described as the ``next deep learning''~\cite{Miikkulainen2021} and the future of creative AI~\cite{VenturebeatArticle}.
        EAs are applicable in a black-box setting where knowledge about the problem in hand is not available or difficult to model. Their population-based approach implies that EAs search the solution space more broadly than many traditional optimisation methods, which increases the likelihood of discovering a global optimum. 

        A major drawback is that their dynamic behaviour is as little understood as it was almost two decades ago~\cite{HornbyYuECSurvey}. It is often not clear when and why EAs perform well, and which components contribute most to their success~\cite{BookNeuWit,Jansen2013,doerr-neumann-book}. Even some of the most fundamental questions are unresolved. 
        Since the invention of evolutionary algorithms (EAs), researchers have wondered about the usefulness of the crossover operator. For decades, only empirical evidence was available\newedit{,} suggesting that using crossover in addition to mutation improves performance.

        In a breakthrough result, \citet{Jansen2002} presented the first proof that the use of crossover can speed up the expected optimisation time of a \muga, for the $\jump_k$ \newedit{benchmark} (defined in Section~\ref{sec:preliminaries}) in which a ``jump'' of changing $k$ bits is typically required to reach a global optimum. However, their result had a severe limitation: 
        the crossover probability had to be chosen as $p_c = O(1/(kn))$, which for large problem sizes $n$ is unrealistically small. Many researchers \newedit{already succeeded} to improve on these results~\cite{Koetzing2011a,Dang2017,Doerr2024} (see Table~\ref{tab:overview-runtime-results} and related work below), and  $\jump_k$ is by now the most intensely studied theoretical benchmark for the benefits of crossover. However, despite intensive research, the basic questions remained wide open. Excluding the addition of aggressive diversity-enhancing mechanisms~\cite{Dang2016,Ren2024} or voting mechanisms~\cite{Friedrich2016,Whitley2018,Rowe2019}, the results fall into two categories: one line of work could show strong effects of crossover, but as in~\cite{Jansen2002} one has to restrict to unrealistically small $p_c$ that go to zero with $n$. For constant $k$\neweditx{,} no better asymptotic than $p_c = O(1/n)$ could be achieved in this case. A second line of research made less artificial assumptions on~$p_c$, but could only show very moderate benefits from crossover. The best known runtime bound in this range was $O((n/\chi)^{k-1})$~\cite{Doerr2024}, where $\chi/n$ is the mutation \neweditx{probability}. For the default choice $\chi =1$ and constant $k$, this saves a moderate factor of $n$ compared to the runtime of $O(n^k)$ for mutation-only algorithms. All this was in strong contrast to
        empirical evidence~\cite{Dang2017,Li2023}, which suggests that the expected optimisation time is much smaller than $O((n/\chi)^{k-1})$, also for large~$p_c$.

        \subsection{Our contribution} \newedit{We give massively improved bounds} in the regime of large $p_c$. 
        We study a variant of the \muga that we call \emph{\mulga with competing crossover offspring}, denoted as \mulgavar for short, under a mild condition on $p_c$. In a generation executing crossover, the \mulgavar creates $\lambdac$ offspring by crossing the same parents, applying mutation and then continuing the replacement selection with the best such offspring. In generations without crossover, it creates one offspring as usual. 
        Hence, the \mulgavar remains a steady-state algorithm, creating one new offspring in each generation.
        The reasons for creating multiple offspring in generations with crossover are as follows. When the whole population consists of local optima of $\jump_k$, which are search points with $n-k$ bits set to~1, crossing two local optima with a large Hamming distance is likely to create an individual of worse fitness that would be removed immediately. If the two parents have small Hamming distance then the offspring is more likely to be accepted. Hence, among local optima of $\jump_k$, pairs of small Hamming distance are more likely to generate accepted offspring than pairs of large Hamming distance. The \mulgavar evens out this imbalance: by creating several offspring from the same pair of parents, it amplifies the probability of creating at least one acceptable offspring per generation. Our choice of $\lambdac$ will ensure that all pairs of parents reach roughly the same probability of almost $1$. Hence, the \mulgavar prevents idle generations with crossover. 
        
        A second effect of the \mulgavar is that it amplifies the probability of creating the global optimum in a generation executing crossover. This makes sense since, as we will show, the algorithm evolves sufficient diversity such that generations with crossover are much more likely to create the optimum than those using only mutation. Steps without crossover are still needed to evolve and maintain a sufficient diversity. But to achieve this, a single offspring is sufficient.

        Our main results are as follows. 
        We show an upper bound \neweditx{on the expected number of function evaluations} of 
        \begin{equation}
            \label{eq:final-bound-in-intro}
            O \big(\mu n\log(\mu) + 4^k \big)
        \end{equation}
        when $k = O(\sqrt{n})$ and for suitable $\lambdac = \Omega(\sqrt{k} \neweditx{\; \log}(\mu) + 1/p_c)$, $p_c = O(1/k)$ and polynomial $\mu = \Omega(kn)$ \neweditx{(see Theorem~\ref{the:Runtime-2})}. When $k \ge \log(\mu n \log(\mu))/2$ and $k = O(\sqrt{n})$,  this is dominated by the term $4^k$, \newedit{transferring the upper bound $O(4^k/p_c)$ by~\citet{Jansen2002} to large $p_c$ and improving it by a factor of $1/p_c$}. Our analysis applies to crossover probabilities that are larger by a factor of~$n$ compared to~\cite{Jansen2002} and it holds for a larger range of gap lengths, $k= O(\sqrt{n})$ as opposed to $k = O(\log n)$ from~\cite{Jansen2002}. Another interesting regime is $k \le \log(\mu n \log(\mu))/2$, 
        where we obtain an upper bound $O(\mu n \log(\mu))$. \neweditx{When using $\mu = \Theta(nk)$, this bound simplifies to $O(kn^2 \log(kn)) \subseteq O(n^2\log^2 n)$, and for constant~$k$, we obtain $O(n^2\log n)$.} In both cases, the expected optimisation time can be bounded by a polynomial that does not depend on~$k$. 
        This is a massive improvement over the previously mentioned best bound of $O(n^{k-1})$ for the original \muga~\cite{Dang2017,Doerr2024} in this range.

        In addition, while working towards proving~\eqref{eq:final-bound-in-intro} we prove a more general upper bound of
        \begin{equation}
            \label{eq:general-bound-in-intro}
            O \left(\lceil\lambdac p_c\rceil \mu(n\log(1/\eps) + \log(\mu)) + \lambdac + \frac{1}{p_c} + 4^{k-\lfloor 8\eps k\rfloor}\left(n/\chi\right)^{\lfloor8\eps k\rfloor} \left(1 + \frac{1}{p_c\lambdac}\right) \right)
        \end{equation}
        \neweditx{expected function evaluations} that makes much fewer and looser assumptions on parameters (see Theorem~\ref{the:Runtime-1}). The bound holds for all $\eps >0$, but we require $p_c = O(\eps)$ with a suitable hidden constant. Loosely speaking, the optimal choice of $\eps$ indicates how close population diversity at the equilibrium state is to the maximum possible diversity. The simpler bound~\eqref{eq:final-bound-in-intro} chooses $\eps = \eps(k) = \Theta(1/k)$ such that $\lfloor 8\eps k\rfloor = 0$ and the term $(n/\chi)^{\lfloor 8 \eps k\rfloor}$ vanishes. However, since we require $p_c = O(\eps)$, this limits crossover probabilities to $p_c = O(1/k)$ in~\eqref{eq:final-bound-in-intro}. The general bound above~\eqref{eq:general-bound-in-intro} also holds for (small) constant values of $p_c$ at the expense of a generally looser bound than~\eqref{eq:final-bound-in-intro}. In this case, $\eps$ can be chosen as a small constant value, and the factor $(n/\chi)^{\lfloor8\eps k\rfloor}$ becomes dominant. This is still a significant speed-up compared to the best known $O((n/\chi)^{k-1})$ bound, showing that constant crossover probabilities can already yield drastic speed-ups.
        

        Our analysis reveals insights into the evolutionary dynamics of the population diversity. While previous analyses for large $p_c$ could only show that crossover happens between parents of Hamming distance at least $1$ (see below for details), our results for $\eps = \Theta(1/k)$ prove that a constant fraction of all crossovers happens between parents of Hamming distance $2k$, which is the largest possible Hamming distance \newedit{on the set of local optima of} $\jump_k$, \newedit{which we call \plateau}. Our improved analyses build on \newedit{our} previous {work~\cite{Lengler2024}}, \newedit{which} studied equilibrium states for population diversity on a flat (i\neweditx{.}e.\newedit{,}\ constant) fitness function. Here we translate \newedit{this} approach to \newedit{the set \plateau}. Diversity is measured as the pairwise Hamming distance between search points in the population, and we show that this measure is well\newedit{-}suited to capture the dynamic behaviour of the \mulgavar. 

        \subsection{Related work on crossover for $\jump_k$}
        \citet{Jansen2002} presented the first rigorous theoretical proof that crossover can speed up optimisation. On the function $\jump_k$ (formally defined in Section~\ref{sec:preliminaries})\neweditx{,} evolutionary algorithms typically get stuck on \newedit{\plateau} and then have to cross a fitness valley to ``jump'' to the optimum $1^n$ by flipping \newedit{the remaining} $k$ zeros to one. For 
        $k = O(\log n)$\newedit{,} they proved that a ($\mu$+1)~Genetic Algorithm (\muga) using crossover optimises the function in expected polynomial time, whereas the \EA using only standard bit mutation requires expected time $\Theta(n^k)$.
        This holds for the standard \muga and a variant that prevents replicates from entering the population in steps where no crossover is executed.
        The reason for crossover's superior performance is that, once the population only contains individuals with $n-k$ ones, crossover can combine parents having zeros at different positions to create the global optimum $1^n$. In the best case, two parents have zeros at disjoint positions. This is called a \emph{complementary pair} and then the probability of uniform crossover turning all these $2k$ positions to~1 is $(1/2)^{2k} = 4^{-k}$. 

        \Citet{Jansen2002} showed that the \muga typically evolves a population in which at any fixed bit position at most $\mu/(2k)$ individuals carry a zero. Then \newedit{for any fixed parent~$x$\neweditx{,} and any of its \neweditx{zero}-bits, there are at most $\mu/(2k)$ other search points sharing a zero at this position, and there are at most $k \cdot \mu/(2k) = \neweditx{\mu/2}$ search points sharing any \neweditx{zero}-bit with~$x$. Thus,} the probability of selecting a complementary pair as parents is at least $1/2$. 
        This analysis, however, comes with two limitations. It only works for small~$k$, $k = O(\log n)$. More importantly, it requires the crossover probability $p_c$ to be unrealistically small: $p_c = O(1/(kn))$. The reason is that it was unclear how crossover affects the diversity of the population and this was handled using pessimistic \newedit{arguments}.
        It was left as an open problem to analyse the \muga with more realistic crossover probabilities.
        
        The results were later refined by~\citet{Koetzing2011a} using a different analysis for the \muga avoiding replicates in mutation steps. By showing a sequence of events that suffice to evolve one complementary pair, they showed a time bound\footnote{The original proof in~\cite{Koetzing2011a} only works for $p_c = k/n$ as smaller $p_c$ are not considered when estimating the probability of the event E6. We expand on this and show how to fix this issue in the appendix.} of $O(\mu n \log(n) + e^{6k}  \mu^{k+2}  k/p_c)$ for all $p_c \le k/n$.
        Efficient performance could be shown for a larger range of~$k$, $k = o(\sqrt{n})$ and $p_c = O(k/n)$. However, this \newedit{$p_c$} is still unrealistically small as $n$ becomes large.
        
        \Citet{Dang2016} showed that, when equipping the \muga with explicit diversity-enhancing mechanisms such as fitness sharing, expected times of $O(n \log(n) + \mu^2 k n \log(\mu k) + 4^k/p_c)$ can be achieved. A similar result was presented recently in~\cite{Ren2024}.
        Since only the dynamics on the plateau of search points with $n-k$ ones matters, it suffices to use these diversity mechanisms in the \muga{}'s tie-breaking rule. However, this does not answer the question why the standard \muga is efficient.
        
        \Citet{Dang2017} were the first to upper bound the expected time for the standard \muga with crossover probability $p_c=1$, that is, always using crossover. They measured diversity in terms of the size of the largest species, where a species consists of copies of the same genotype. They showed that the \muga efficiently evolves and keeps a good amount of diversity, such that there is a probability of $\Omega(1)$ of selecting two parents of different genotypes. This implies that not all bits must be flipped by mutation\neweditx{,} since some bits can be more easily set to~1 by a uniform crossover. They showed a time bound of $O(n^{k-1} \log n)$ for the best possible population size, which improves to $O(n^{k-1})$ when increasing the mutation probability from $1/n$ to $\chi/n$ for a constant $\chi > 1$. 
        
        Very recently, \citet{Doerr2024} improved the analysis from~\cite{Dang2017} and showed that diversity, in terms of the largest species, remains for much longer, for a time exponential in the population size. This yields an improved bound of $O((n/\chi)^{k-1})$ for constant $\chi$ and $p_c = \Omega(1)$ and the best possible choice of the population size~$\mu$.
        An overview of all previous runtime bounds\footnote{We simplify the bound stated in~\citet{Doerr2024} using simple algebraic arguments. Details are shown in the appendix.} (excluding those using aggressive diversity mechanisms) is shown in Table~\ref{tab:overview-runtime-results}.

\begin{sidewaystable}
\caption{Overview of runtime bounds for the \muga and \mulgavar and restrictions on parameters. Here $\varepsilon > 0$ is a small enough positive number\neweditx{, $\log$ denotes the base-2 logarithm to base $2$, and $\ln$ denotes the natural logarithm}. Standard bit mutation with mutation probability $\chi/n$ is used, with a default of $\chi=1$.
Some algorithms exclude mutations where no bit flips (``replicates''), in steps without crossover. Upper bounds from~\cite{Koetzing2011a,Doerr2024} are adapted from the original source. The lower bounds apply to all unbiased mutation operators, possibly preceded by uniform crossover, from parents from $\plateau$ (see Theorem~\ref{the:lower-bound}). For bounds for the \muga equipped with explicit diversity-enhancing mechanisms we refer to~\cite{Dang2016,Ren2024}}.
\label{tab:overview-runtime-results}
\begin{tabular*}{\textheight}{@{\extracolsep\fill}l@{\ }l@{\ }c@{\ }c@{\ }c}
    algorithm / $\lambdac$ & runtime bound & $p_c$ & $\mu$ & $k$\\
    \toprule
    \muga w/o replicates~\cite{Jansen2002} & $O(\mu n(k^2 + \log(\mu n)) + 4^k/p_c)$ & $O(1/(kn))$ &  $\ge k\log^2 n$ & $O(\log n)$ \\ 
    \muga~\cite{Jansen2002} & $O(\mu n^2k^3 + 4^k/p_c)$ & $O(1/(kn))$ & $\ge k\log^2 n$, $n^{O(1)}$ & $O(\log n)$ \\ 
    \muga w/o replicates~\cite{Koetzing2011a} & $O(\mu n \log(n) + e^{6k} \mu^{k+2}k/p_c)$ & $\le k/n$ & $\ge 2$, $n^{O(1)}$ & $o(\sqrt{n})$\\ 
    \muga~\cite{Dang2017} & $O(\mu n \sqrt{k}\log(\mu) + n^k/\mu + n^{k-1}\log(\mu)))$ & $1$ & $\le \varepsilon n$ & $o(n)$\\
    \muga~\cite{Dang2017}, $\chi = 1 + \Theta(1)$ & $O(\mu n \sqrt{k}\log(\mu) + \mu^2 + n^{k-1})$ & $1$ & $ \ge k\ln(n)/\varepsilon$ & $o(n)$\\
    \muga~\cite{Doerr2024}, $\chi = \Theta(1)$ & $O\left( \mu n \log(\mu) + n \log(n) + \left(\frac{n}{\chi}\right)^{k-1}\left(1 + 
     \frac{(n/k+\mu)\log(\mu)}{\exp(p_c\mu/(2048e))}\right)\right)$ & $\Omega(1)$ & $\ge 2, o(n)$ & $o(n)$\\
    \midrule
    \multicolumn{5}{@{}l}{General bounds for \mulgavar \neweditx{(Theorem~\ref{the:Runtime-1})}:}\\
    $\lambdac \ge 6\sqrt{k}e^{\chi}\ln(\mu)$ & $O \left(\lceil\lambdac p_c\rceil \mu(n\log(1/\eps) + \log(\mu)) + \lambdac + \frac{1}{p_c}\right.$  & $O(\varepsilon)$ & $\Omega(n/\varepsilon)$ & $o(\newedit{n}) \cap O(\varepsilon n)$\\
    & \hspace*{3cm} $\left. + 4^{k-\lfloor 8\eps k\rfloor}\left(n/\chi\right)^{\lfloor8\eps k\rfloor} \left(1 + \frac{1}{p_c\lambdac}\right) \right)$\\
    $\lambdac \ge \max\{6\sqrt{k}e^{\chi}\ln(\mu), 1/p_c\}$ & $O \left(\lceil\lambdac p_c\rceil \mu(n\log(1/\eps) + \log(\mu)) + \lambdac + + 4^{k-\lfloor 8\eps k\rfloor}\left(n/\chi\right)^{\lfloor8\eps k\rfloor} \right)$  & $O(\varepsilon)$ & $\Omega(n/\varepsilon)$ & $o(\newedit{n}) \cap O(\varepsilon n)$\\
    \midrule
    \multicolumn{5}{@{}l}{Bounds for \mulgavar with $\eps = 1/(16k)$ \neweditx{(Theorem~\ref{the:Runtime-2})}:}\\
    $\lambdac \ge 6\sqrt{k}e^{\chi}\ln(\mu)$ & $O \big(\lceil \lambdac p_c\rceil \mu(n\log(k) + \log(\mu)) + \lambdac + 4^k \big)$ & $O(1/k)$ & $\Omega(kn)$ & $O(\sqrt{n})$\\
    $\lambdac \coloneqq \lceil \max\{6\sqrt{k}e^{\chi}\ln(\mu), 1/p_c\} \rceil$ & $O \big(\mu n\log(\mu) + 4^k \big)$ & \hspace*{-1.2cm}$O(1/k) \cap \Omega(4^{-k})$ & $\Omega(kn) \le 2^{n\log k}$ & $O(\sqrt{n})$\\
    \midrule
    \multicolumn{5}{@{}l}{Lower bounds \neweditx{(Theorem~\ref{the:lower-bound})}:}\\
    all unbiased ($\mu$+$\lambda$)~GAs & $\Omega\left(\min\{4^k/p_c, \binom{n}{k}/(1-p_c)\}\right)$ & $\in (0, 1)$ &  & $\le \sqrt{n}/2 $ \\
    all unbiased black-box algorithms & $\Omega\left(\min\{4^k, \binom{n}{k}\}\right)$ & $\in (0, 1)$ &  & $\le \sqrt{n}/2$ \\
    \bottomrule
\end{tabular*}
\end{sidewaystable}

\subsection{Outline and Extension} 
We first \newedit{give} an exact formula for the equilibrium of population diversity for the standard \muea without crossover in Section~\ref{sec:equilibrium-states}. These results are formulated and proven for arbitrary \textit{unbiased} mutation operators\neweditx{, that is, mutation operators that treat all bit values and all bit positions symmetrically~\cite{Lehre2012}}. In Section~\ref{sec:equilibrium-states2} we then bound the equilibrium for the \mulgavar.
Runtime bounds are derived from this in Section~\ref{sec:Runtime-Analysis}, including a lower bound showing that the dominant term $4^k$ is necessary for a large class of unbiased black-box algorithms, and that a stronger lower bound of $\Omega(4^k/p_c)$ applies to algorithms creating the same number of offspring in generations with or without crossover. This lends credence to the design of the \mulgavar and justifies why it creates more offspring in generations with crossover.
Related to this, in Section~\ref{sec:equilibrium-states2} we also give an example of a population for which the drift in terms of population diversity is positive for the \mulgavar, but negative for the original \muga (cf.\ Lemmas~\ref{lem:negative-standard-GA} and~\ref{lem:negative-standard-GA-2}). This counterexample refutes the conjecture that our drift bounds can be transferred to the original \muga. Proofs for these lemmas are placed in an appendix to keep the main part streamlined.
Finally, in Section~\ref{sec:extensions} we show that our analyses can be extended to other functions of unitation, including a generalisation of the $\jump_k$ function and the \hurdle function that requires multiple jumps to reach the global optimum.

This manuscript extends a preliminary conference paper~\cite{Opris2024} in several ways. This paper contains full proofs that had to be omitted in~\cite{Opris2024} owing to space restrictions. We have added Section~\ref{sec:extensions} on applications to other fitness functions and the mentioned counterexample for the drift in the original \muga (Lemmas~\ref{lem:negative-standard-GA} and~\ref{lem:negative-standard-GA-2}). The analyses in Section~\ref{sec:Runtime-Analysis} have been rewritten to increase rigour and to fix some technical issues in~\cite{Opris2024}. In particular, our revised analysis treats rounding issues more rigorously. This avoids problems with pathological parameter settings like excessively large values for $\lambdac$ or $1/p_c$, under which our previous bounds could become incorrect. In the last bound of Theorem~5.2 in~\cite{Opris2024} we previously chose $\lambdac = O(1/p_c)$ to absorb a factor of $\lceil \lambdac p_c\rceil$. However, this introduced the limitation $k = \Omega(\log^2 \mu)$. Here we \neweditx{use} a different argument to bound $\lceil \lambdac p_c\rceil$ that also works for constant~$k$.

Most importantly, we noticed that our bounds could be improved for appropriate parameter settings by replacing the term $O(4^k/p_c)$ with $O(4^k)$. The intuition behind the improved bound is as follows. If $\lambdac \ge 1/p_c$\neweditx{,} then the \mulgavar, on average, spends at least half of its fitness evaluations in generations with crossover. Since we show that the probability of generating the optimum in a constant fraction of those offspring creations is $\Omega(4^{-k})$, the upper bound only contains an additive term of $O(4^k)$. This contradicts the lower bound of $\Omega(4^k/p_c)$ claimed in Theorem~5.3 of~\cite{Opris2024}. Its proof was faulty since it computed the fraction of offspring creations by crossover incorrectly, not taking into account that generations with crossover create several offspring. This is fixed in the present manuscript: we give a general lower bound of $\min\{\neweditx{\Omega}(4^k), \binom{n}{k}\}$ for a large class of black-box algorithms and tighten this to $\Omega(\min\{4^k/p_c, \binom{n}{k}\})$ for algorithms that always produce the same number $\lambda$ of offspring, regardless of whether crossover is used or not. Together, these lower bounds show that the \mulgavar has an asymptotically optimal expected optimisation time on $\jump_k$ and that it is provably faster than the original \muga by a factor of at least $\Omega(1/p_c)$, for all $k = O(\sqrt{n})$.

\section{Preliminaries}
\label{sec:preliminaries}

\subsection{Notation}
\newedit{\neweditx{By $\log$ we denote the logarithm to base $2$ and by $\ln$ the logarithm to base $e$, respectively}. For $m \in \N$ we write $[m]:= \{1,2,\ldots,m\}$.} For \newedit{$x \in \{0,1\}^n$} we denote by $\ones{x}$ the number of ones and by $\zeros{x}$ the number of zeros \newedit{in $x$}, respectively. For $x,y \in \{0,1\}^n$, the \emph{Hamming distance} \newedit{$H(x,y):=\sum_{i=1}^n|x_i-y_i|$} is the number of positions in which $x$ and $y$ differ. For $r \in [n]$ and $x,y \in \{0,1\}^n$ we define $H_r(x,y):=1$ if $x_r \neq y_r$ and $H_r(x,y)=0$ otherwise. 

 The fitness function $\jump_k$ is defined as
\[	
\jump_k(x) :=
\begin{cases}
	\ones{x}+k & \text{if } \ones{x} = n \text{ or }  \ones{x} \leq n-k, \\
	n - \ones{x} & \text{otherwise}\newedit{.}
\end{cases}
\]
By \plateau we mean the set of all search points with $n-k$ \newedit{ones}. 
We define $\jump_k'$ as a fitness function with $\jump_k'(x)=\jump(x)$ if ${x \neq 1^n}$ and $\jump_k'(x)=0$ otherwise. In $\jump_k'$\neweditx{,} jumps from $\plateau$ to $1^n$ \neweditx{result in a worse fitness and are thus rejected. Since the population remains on $\plateau$, this modification} allows us to study equilibria for population diversity on the plateau. 
Note that every $x \in \plateau$ optimises $\jump_k'$. Let $q:=\min(k,n-k)$. 

\subsection{Algorithms}
For $t \in \mathbb{N}_0$ let a population $P_t$ of $\mu$ individuals be \neweditx{a multiset} defined as $P_t:=\{x_1, \ldots , x_\mu\}$ where $x_i \in \{0,1\}^n$ for $i \in \{1,\ldots , n\}$.
We define the $(\mu+1)$ EA without crossover and the variant \mulgavar using crossover with crossover probability $p_c \in [0,1]$ \newedit{on such a population $P_t$}. 

The former starts with some initial population, selects a parent $x$ uniformly at random and creates an offspring $y$ by using a mutation operator on $x$. Then $y$ replaces a worst search point $z$ in the current population if its fitness is no worse than $z$. 

\begin{algorithm2e}[ht]
\DontPrintSemicolon
  $t \gets 0$\;
  \newedit{Let $P_0$ contain $\mu$ search points chosen uniformly at random}\;
  \While{termination criterion not met}{
    Select $x \in P_t$ uniformly at random\;
    $y \gets \mutation(x)$\;
    Select $z \in P_t$ uniformly at random from all search points with minimum fitness in $P_t$\;
    \If{$f(y) \ge f(z)$}{$P_{t+1} \gets (P_t \cup \{y\}) \setminus \{z\}$}
	$t \gets t+1$
}
\caption{The $(\mu+1)$ EA \newedit{maximising $f \colon \{0, 1\}^n \to \mathbb{R}$}}
\label{alg:steady-state-EA}
\end{algorithm2e}
With probability $p_c$ the \mulgavar picks two parents $x_1,x_2$ uniformly at random with replacement, generates $\lambdac$ offspring via crossover \newedit{on} $x_1$ and $x_2$, followed by a mutation operator. Then it chooses $y'$ as the best out of these $\lambdac$ offspring.
With probability $(1-p_c)$ it applies $\mutation$ to one parent chosen uniformly at random. The mutant $y'$ replaces a worst search point $z$ from the previous population if its fitness is no worse than $z$.
In the special case $\lambdac = 1$ we obtain the original \muga studied in previous work.


\begin{algorithm2e}[ht]
\DontPrintSemicolon
  $t \gets 0$\;
  \newedit{Let $P_0$ contain $\mu$ search points chosen uniformly at random}\;
  \While{termination criterion not met}{
    Choose \neweditx{$b\in [0,1)$} uniformly at random\;
    \uIf{\neweditx{$b < p_c$}}{
    Choose $x_1,x_2$ uniformly at random from $P_t$\label{line:crossover-in-steady-state-GA}\;
    \For{$i=1$ \KwTo $\lambdac$}{
        $y_i \gets \mathrm{crossover}(x_1,x_2)$\;
        $y_i' \gets \mutation(y_i)$\;
        }
    Choose $y'$ uniformly at random from all $y_i'$ with maximum fitness in $\{y_1', \dots, y_\lambdac'\}$\;
    }
    \Else{
    Select $y \in P_t$ uniformly at random\;
        $y' \gets \mutation(y)$\;
    }
    Select $z \in P_t$ uniformly at random from all search points with minimum fitness in $P_t$\;
    \If{$f(y') \ge f(z)$}{$P_{t+1} \gets (P_t \cup \{y'\}) \setminus \{z\}$}
	$t \gets t+1$
}
\caption{The \protect\mulgavar with crossover probability $p_c \in [0,1]$ maximising $f \colon \{0, 1\}^n \to \mathbb{R}$}
\label{alg:steady-state-GA}
\end{algorithm2e}

Mutation operators will be chosen from the class of \emph{unary unbiased variation operators}~\cite{Lehre2012}, or \emph{unbiased mutation operators} for short, that treat all bit values and all bit positions symmetrically (see~\cite{Lehre2012} for a formal definition). A well-known implication is the following.
\begin{lemma}
\label{lem:bit-flip-probability-unbiased}
Consider an unbiased mutation operator that flips $\chi$ bits in expectation. Then\neweditx{,} for any arbitrary but fixed bit~$i$, the probability that it is being flipped is $\chi/n$.
\end{lemma}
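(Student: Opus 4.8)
The plan is to reduce everything to two structural facts about unary unbiased operators and then apply linearity of expectation. First I would recall the formal definition (from~\cite{Lehre2012}): a unary variation operator with offspring distribution $p(\cdot \mid x)$ is unbiased iff (i) $p(y \mid x) = p(y \oplus z \mid x \oplus z)$ for all $x,y,z \in \{0,1\}^n$ (symmetry in bit values), and (ii) $p(\sigma(y) \mid \sigma(x)) = p(y \mid x)$ for every permutation $\sigma$ of the bit positions (symmetry in bit positions), where $\sigma$ acts on strings by permuting coordinates.

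From (i), setting $z = x$, the distribution $p(y \mid x)$ depends only on the \emph{flip pattern} $F := y \oplus x$; write $D$ for the distribution of $F$, which is independent of~$x$. The number of flipped bits is $\sum_{r=1}^n F_r$, so by assumption $\E\!\left(\sum_{r=1}^n F_r\right) = \chi$. Next I would use (ii) to show that $D$ is exchangeable, i.e. $\Prob(F = f) = \Prob(F = \sigma(f))$ for every permutation $\sigma$: indeed translating (ii) into flip patterns gives that relabelling the coordinates of~$f$ does not change its probability. In particular, for any two bit positions $i,j$, taking $\sigma$ to be the transposition swapping $i$ and $j$ and summing over all $f$ with $f_i = 1$ yields $\Prob(F_i = 1) = \Prob(F_j = 1)$; hence there is a common value $p := \Prob(F_i = 1)$ for all~$i$, which is exactly the probability that bit~$i$ is flipped. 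Finally, by linearity of expectation $\chi = \E\!\left(\sum_{r=1}^n F_r\right) = \sum_{r=1}^n \Prob(F_r = 1) = np$, so $p = \chi/n$, as claimed.

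This argument is essentially routine; the only point needing care is the precise bookkeeping when translating the abstract unbiasedness axioms into statements about the flip-pattern distribution~$D$ and justifying that $D$ is genuinely permutation-invariant (rather than, say, invariant only relative to a fixed parent). I expect this translation to be the one step worth spelling out in detail, after which the computation via linearity of expectation is immediate.
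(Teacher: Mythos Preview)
Your proof is correct. The paper does not actually prove this lemma; it introduces it as ``a well-known implication'' of the definition of unary unbiased operators from~\cite{Lehre2012} and leaves it unproved. Your argument---reducing to the flip-pattern distribution via bit-value symmetry, establishing exchangeability via permutation symmetry, and finishing with linearity of expectation---is the standard way to spell out this folklore fact, so there is nothing in the paper to compare it against.
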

The operator flipping every bit \emph{independently} with probability $\chi/n$ is called \emph{standard bit mutation}.
Note that in other mutation operators (\newedit{e.g.}\ flipping exactly $\chi \in \mathbb{N}_0$ bits), each fixed bit is also flipped with probability $\chi/n$, but these decisions may not be independent for all bits. \newedit{A well-known crossover operator which we use throughout this paper is uniform crossover, in which every bit is taken from a parent chosen uniformly at random.} 



\subsection{Diversity Measure}
\label{sec:diversity}
In this paper we use the sum of Hamming distances as a diversity measure which is defined in the following\newedit{.}
\begin{definition}
\label{def:Hamming-distance}
For a population $P_t = \{x_1, \dots, x_{\mu}\}$ and a search point $y \in \{0, 1\}^n$ we define 
\[
    S_{P_t}(y) \coloneqq \sum\nolimits_{i=1}^{\mu} H(x_i,y)
\quad \text{and} \quad
    S(P_t) \coloneqq \sum\nolimits_{i=1}^\mu \sum\nolimits_{j=1}^\mu H(x_i, x_j),
\]
and for $r\in [n]$ we write $S_r(P_t):=\sum\nolimits_{i=1}^{\mu} \sum\nolimits_{j=1}^{\mu}H_r(x_i,x_j)$, so that $S(P_t) = \sum\nolimits_{r=1}^n S_r(P_t)$.
\end{definition}

The population diversity can be computed by considering each bit position individually and counting how many population members have a zero or a one, respectively.
\begin{lemma}[Theorem~1a in \cite{wineberg2003underlying}]
Let $P_t$ be a population of size~$\mu$ and $m_i$ denote the number of members of~$P_t$ having a zero at position~$i$, then $S(P_t) = \sum\nolimits_{i=1}^n 2m_i(\mu-m_i)$.    
\end{lemma}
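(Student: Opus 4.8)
The plan is to decompose the double sum bit position by bit position and then count ordered pairs at each position. By Definition~\ref{def:Hamming-distance} we already have $S(P_t) = \sum_{r=1}^{n} S_r(P_t)$ with $S_r(P_t) = \sum_{i=1}^{\mu}\sum_{j=1}^{\mu} H_r(x_i,x_j)$, so it suffices to show $S_r(P_t) = 2m_r(\mu - m_r)$ for each fixed $r \in [n]$, where $m_r$ is the number of members of $P_t$ with a zero at position~$r$.

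First I would fix a position $r$ and partition the index set $[\mu]$ into $Z_r := \{i : (x_i)_r = 0\}$ and $O_r := \{i : (x_i)_r = 1\}$, so $|Z_r| = m_r$ and $|O_r| = \mu - m_r$. By definition $H_r(x_i,x_j) = 1$ precisely when exactly one of $i,j$ lies in $Z_r$; in particular $H_r(x_i,x_j) = 0$ whenever $i,j$ both lie in $Z_r$ or both lie in $O_r$ (this includes the diagonal terms $i=j$). Hence the only contributing ordered pairs are those in $Z_r \times O_r$ or in $O_r \times Z_r$, each contributing~$1$, which gives $S_r(P_t) = |Z_r \times O_r| + |O_r \times Z_r| = m_r(\mu - m_r) + (\mu - m_r)m_r = 2m_r(\mu - m_r)$.

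Finally I would sum this identity over all $r \in [n]$ and invoke $S(P_t) = \sum_{r=1}^n S_r(P_t)$ to conclude $S(P_t) = \sum_{r=1}^n 2m_r(\mu - m_r)$, which is the claim (up to renaming the summation index to~$i$). There is no real obstacle here: the statement is a pure double-counting identity, and the only thing to be slightly careful about is that the sums in Definition~\ref{def:Hamming-distance} range over all ordered pairs including $i=j$, but those diagonal terms vanish since $H(x_i,x_i) = 0$, so they do not affect the count.
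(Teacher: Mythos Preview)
Your proof is correct. The paper does not supply its own proof of this lemma---it simply cites it as Theorem~1a from~\cite{wineberg2003underlying}---so there is nothing to compare against; your bitwise double-counting argument is exactly the standard way to establish this identity and uses precisely the decomposition $S(P_t)=\sum_r S_r(P_t)$ that the paper already records in Definition~\ref{def:Hamming-distance}.
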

Note the symmetry of zeros and ones in the expression $m_i(\mu-m_i)$. The factor 2 accounts for the fact that every pair $(x_i, x_j)$ with $i \neq j$ is counted twice in $\sum\nolimits_{i=1}^\mu \sum\nolimits_{j=1}^\mu H(x_i, x_j)$. According to~\cite{Lengler2024}, $S(P_t)$ is twice the sum of all Hamming distances of each pair $x_i, x_j$ with $i \neq j$, the average value of $S_{P_t}(y)$ \newedit{over} $y\in P_t$ is $S(P_t)/\mu$ and the average Hamming distance of two random points in $P_t$ is $S(P_t)/\mu^2$. The latter implies that if $S(P_t)>a\mu^2$ for $a \in \mathbb{R}$, \neweditx{then} there is at least \newedit{one} pair $(x_1,x_2)$ of individuals with $H(x_1,x_2) > a$. Note that the average Hamming distance of two \emph{different} random 
search points is $S(P_t)/(\mu^2-\mu)$.
For populations $P_t \subset \plateau$, the maximum Hamming distance between two different search points is $2k$ and thus \newedit{by summing $H(x_i,x_j)$ \neweditx{over} all search points we obtain} ${S(P_t) \le 2k\mu(\mu-1)}$ \newedit{since there are $\mu^2$ summands in total and the $\mu$ summands of the form $H(x_i,x_i)$ for $i \in [\mu]$ are zero}. More precisely, the maximum diversity is obtained if the zeros in the population are spread across bit positions as evenly as possible, \newedit{i.e.} if every bit position has a zero in $\lfloor k\mu/n\rfloor$ or $\lceil k\mu/n \rceil$ individuals.


\section{Equilibrium State for $S(P_t)$ on \textsc{Plateau} for the \muea}
\label{sec:equilibrium-states}

Now we will compute the expected change of $S(P_t)$, i.e., $\E(S(P_{t+1}) \mid S(P_t))$ if all individuals from $S(P_t)$ are in \plateau. To this end\neweditx{,} we investigate the population dynamics of evolutionary algorithms on the modification $\jump_k'$ of the $\jump_k$ \newedit{benchmark} where the global optimum is removed. 

We break the process down into several steps.
Even though our main goal is to analyse the \mulgavar with standard bit mutation as described in Algorithm~\ref{alg:steady-state-GA}, we make an effort to keep structural results as general as possible as they might be of independent interest. In particular, we start by considering arbitrary mutation operators, then focus on unbiased mutation and finally arrive at standard bit mutation. For the same reason, we formulate many statements for $q = \min\{k, n-k\}$, which considers jumps of \newedit{size} $k > n/2$, even though our runtime bounds are limited to $k = o(n)$, that is, $q=k$.


We start with a lemma which describes, for given $x,y \in $ \plateau, the effect of flipping $\ell$ random \neweditx{zero}-bits and $\ell$ random \neweditx{one}-bits of $y$ on the distance from $x$. Note that this yields an offspring in \plateau.

\begin{lemma}
\label{lem:how-H-is-derived}
Suppose that $x,y \in \plateau$ (i.e. $\ones{x} = \ones{y} = n-k$) and let $z$ be generated from $y$ by flipping $\ell$ ones and $\ell$ zeros chosen uniformly at random for $\ell \in \{0, \dots , q\}$. Then
\begin{align*}
\E(H(x,z)) & = 2\ell+ \big(1-\tfrac{\ell n}{k(n-k)}\big)H(x,y).
\end{align*}
\end{lemma}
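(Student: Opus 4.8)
The plan is to use linearity of expectation over the $n$ bit positions, writing $\E(H(x,z)) = \sum_{r=1}^n \E(H_r(x,z))$, and to classify each position $r$ according to the pair of bit values $(x_r,y_r)$. Since $y \in \plateau$ has exactly $n-k$ one-bits and $k$ zero-bits, and since $\ell$ of the one-bits and $\ell$ of the zero-bits are chosen uniformly at random to be flipped (this is well-defined because $\ell \le q = \min\{k,n-k\}$), the marginal probability that a fixed one-bit of $y$ is flipped is $\ell/(n-k)$ and the marginal probability that a fixed zero-bit of $y$ is flipped is $\ell/k$. The flips are not independent across positions, but by linearity only these marginals are needed.

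Next I would compute $\E(H_r(x,z))$ in each of the four cases. If $y_r = x_r$ then after the flip $z_r$ disagrees with $x_r$ exactly when bit $r$ is flipped, so $\E(H_r(x,z))$ equals the corresponding flip probability ($\ell/(n-k)$ when $y_r = x_r = 1$, and $\ell/k$ when $y_r = x_r = 0$). If $y_r \neq x_r$ then $z_r$ agrees with $x_r$ exactly when bit $r$ is flipped, so $\E(H_r(x,z))$ equals one minus the flip probability ($1-\ell/(n-k)$ when $y_r=1, x_r=0$, and $1-\ell/k$ when $y_r=0, x_r=1$).

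Then I would count the four position classes. Write $b$ for the number of positions with $y_r=1, x_r=0$ and $d$ for the number with $y_r=0, x_r=1$. Comparing the number of one-bits of $x$ and of $y$ (both equal to $n-k$) forces $b=d$, hence $H(x,y) = b+d = 2b$, and the remaining two classes have sizes $n-k-b$ (positions with $y_r=x_r=1$) and $k-b$ (positions with $y_r=x_r=0$). Summing the four contributions and using $\tfrac{\ell}{n-k}+\tfrac{\ell}{k} = \tfrac{\ell n}{k(n-k)}$ gives
\[
\E(H(x,z)) = (n-k-b)\tfrac{\ell}{n-k} + (k-b)\tfrac{\ell}{k} + b\bigl(1-\tfrac{\ell}{n-k}\bigr) + b\bigl(1-\tfrac{\ell}{k}\bigr) = 2\ell + 2b\Bigl(1 - \tfrac{\ell n}{k(n-k)}\Bigr) = 2\ell + \Bigl(1-\tfrac{\ell n}{k(n-k)}\Bigr)H(x,y).
\]
The only point requiring care is the bookkeeping of the four cases together with the observation $b=d$ (which pins the "mismatched" classes to $H(x,y)/2$ each); everything else is routine algebra, and I do not expect a genuine obstacle here.
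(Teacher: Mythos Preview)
Your proof is correct. Both your argument and the paper's rest on the same two observations: the marginal flip probabilities are $\ell/(n-k)$ for one-bits and $\ell/k$ for zero-bits of $y$, and the mismatched positions split evenly as $b=d=H(x,y)/2$ because both $x$ and $y$ lie on \plateau. The organisational choice differs slightly: the paper decomposes the operation into two separate stages (an intermediate $z_0$ obtained by flipping only zeros and $z_1$ by flipping only ones), derives $\E(H(x,z_0))$ and $\E(H(x,z_1))$ individually, and then recombines via $\E(H(x,z)) = \E(H(x,z_0)) + \E(H(x,z_1)) - H(x,y)$. Your route, classifying the four $(x_r,y_r)$-types and summing directly, is more streamlined and avoids the intermediate objects; the paper's decomposition, on the other hand, yields the byproduct formulas~\eqref{eq:flipping-only-zero-bits} and~\eqref{eq:flipping-only-one-bits} for the one-sided flips, though these are not reused elsewhere.
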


\begin{proof}
We write $R:= \{i\in[n] \mid y_i = 0\}$ and $R^c := [n]\setminus R$. Note that $|R|=k$ and $R^c=n-k$ since $y$ is in \plateau. Moreover, for any $x_1,x_2\in \{0,1\}^n$ we write $H_R(x_1,x_2)$ for the number of bits \emph{in $R$} in which $x_1$ and $x_2$ differ, and similarly for $R^c$. In particular, $H(x_1,x_2) = H_R(x_1,x_2) + H_{R^c}(x_1,x_2)$. Moreover, since $x$ and $y$ are both in \plateau, we have $H_R(x,y) = H_{R^c}(x,y) = H(x,y)/2$.

Consider $z_0,z_1$ where 
\begin{itemize}
\item $z_0$ \neweditx{is} generated from $y$ by flipping $\ell$ zeros chosen uniformly at random, \neweditx{and}
\item $z_1$ \neweditx{is} generated from $y$ by flipping $\ell$ ones chosen uniformly at random.
\end{itemize}
Note that $\E(H(x,z_0)) = \E(H_R(x,z))+H_{R^c}(x,y)$ since $z_0$ is obtained from $y$ by only flipping zeros and the set $R^c$ is untouched by the mutation. Similarly we see $\E(H(x,z_1)) = \E(H_{R^c}(x,z)) + H_R(x,y)$. Therefore, 
\begin{align*}
\E(H(x,z)) = \E(H_R(x,z)) + \E(H_{R^c}(x,z)) = \E(H(x,z_0)) + \E(H(x,z_1)) - H(x,y).
\end{align*}

Hence, it is enough to show 
\begin{align*}
\E(H(x,z_0)) & = \ell+ \left(1-\frac{\ell}{k}\right)H(x,y),\\
\E(H(x,z_1)) & = \ell+ \left(1- \frac{\ell}{n-k}\right)H(x,y).
\end{align*}

We start with $z_0$. For the $k$ positions in $R$, each of them is flipped with probability $\ell/k$. There are $H_R(x,y)$ bits in $R$ in which $x$ and $y$ differ, and for any such bit the Hamming distance decreases by one if the bit is flipped. Likewise, there are $|R|-H_R(x,y)$ bits in $R$ in which $x$ and $y$ coincide, and for any such bit the Hamming distance increases by one if the bit is flipped. Combining these two cases yields
\begin{align*}
\E(H_R(x,z_0)) &= H_R(x,y) + \frac{\ell}{k}\cdot\left(-H_R(x,y) + (|R|-H_R(x,y))\right)\\
&= H_R(x,y) + \frac{\ell}{k}\cdot\left(|R|-2H_R(x,y)\right).
\end{align*}
Plugging in $H_R(x,y) = H(x,y)/2$ and $|R| = k$ yields after simplifying 
\begin{align}\label{eq:flipping-only-zero-bits}
\E(H_R(x,z_0)) & = \ell + \left(\frac{1}{2}-\frac{\ell}{k}\right)H(x,y),
\end{align}
and hence
\begin{align*}
\E(H(x,z_0)) & = H_{R^c}(x,z_0) + \E(H_R(x,z_0))\\
& = \frac{H(x,y)}{2} + \ell + \left(\frac{1}{2}-\frac{\ell}{k}\right)H(x,y)\\
& = \ell + \left(1-\frac{\ell}{k}\right)H(x,y).
\end{align*}
For $z_1$, the roles of $R$ and $R^c$ are reversed, with the notable difference that the probability of flipping each bit in $R^c$ is now $\ell/|R^c|=\ell/(n-k)$. The formula 
\begin{align}\label{eq:flipping-only-one-bits}
\E(H_{R^c}(x,z_1)) & = \ell + \left(\frac{1}{2}-\frac{\ell}{n-k}\right)H(x,y)
\end{align}
and the final result for $\E(H(x,z_1))$ follows analogously.
\end{proof}

For further steps we need the following lemma from \cite{Lengler2024} to describe the expected change of $S(P_t)$ for a fixed offspring $y'$. This result works on all fitness functions.


\begin{lemma}[\cite{Lengler2024}, Lemma~6]
\label{lem:how-E-S-P-t-plus-1-is-derived}
Consider a population $P_t = \{x_1, \dots, x_\mu\}$ and a search point $y'\in\{0,1\}^n$. Let $P_{t+1} \coloneq (P_t \cup \{y'\}) \setminus \{x_d\}$ for a uniform random $d \in [\mu]$. Then
\[
\E(S(P_{t+1}) \mid S(P_t), y') = \left(1 - \tfrac{2}{\mu}\right) S(P_t) + \tfrac{2(\mu-1)}{\mu} S_{P_t}(y').
\]
\end{lemma}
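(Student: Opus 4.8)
The plan is to compute $S(P_{t+1})$ exactly for a fixed deletion index $d$ and then average over the uniform choice of $d$. Throughout the argument I condition on $P_t$ and $y'$, so these are fixed quantities; the only randomness is in $d$. The whole proof is elementary bookkeeping with the double sum that defines $S$, so I do not expect a genuine obstacle; the only thing to watch is the factor $2$ arising because $S$ sums over ordered pairs, and not to drop the diagonal term $H(y',y')$.

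First I would fix $d \in [\mu]$ and write $P_{t+1} = \{x_i : i \in [\mu],\, i \neq d\} \cup \{y'\}$ as a multiset (Hamming distances, and hence all sums below, are well-defined regardless of coincidences among the search points). Splitting the defining double sum of $S(P_{t+1})$ according to whether each of the two arguments is a surviving $x_i$ or the new point $y'$ gives
\[
S(P_{t+1}) = \sum\nolimits_{i \neq d}\sum\nolimits_{j \neq d} H(x_i, x_j) \;+\; 2\sum\nolimits_{i \neq d} H(x_i, y') \;+\; H(y', y'),
\]
where the last term is $0$. For the first term, deleting row $d$ and column $d$ from the full double sum and adding back the twice-removed diagonal entry yields $\sum_{i \neq d}\sum_{j \neq d} H(x_i, x_j) = S(P_t) - 2 S_{P_t}(x_d)$, using $H(x_d, x_d) = 0$ and $\sum_i H(x_i, x_d) = \sum_j H(x_d, x_j) = S_{P_t}(x_d)$. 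For the second term, $\sum_{i \neq d} H(x_i, y') = S_{P_t}(y') - H(x_d, y')$. Combining, I obtain the deterministic identity
\[
S(P_{t+1}) = S(P_t) - 2 S_{P_t}(x_d) + 2 S_{P_t}(y') - 2 H(x_d, y').
\]

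Then I would take the expectation over $d$, uniform on $[\mu]$. Here $\frac{1}{\mu}\sum_{d=1}^{\mu} S_{P_t}(x_d) = \frac{1}{\mu}\sum_{d}\sum_i H(x_i, x_d) = S(P_t)/\mu$ and $\frac{1}{\mu}\sum_{d=1}^{\mu} H(x_d, y') = S_{P_t}(y')/\mu$, so substituting gives
\[
\E(S(P_{t+1}) \mid S(P_t), y') = S(P_t) - \tfrac{2}{\mu} S(P_t) + 2 S_{P_t}(y') - \tfrac{2}{\mu} S_{P_t}(y') = \left(1 - \tfrac{2}{\mu}\right) S(P_t) + \tfrac{2(\mu-1)}{\mu} S_{P_t}(y'),
\]
which is the claim. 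A small remark worth keeping is that the identity one line above holds pointwise before averaging, which will be convenient later when $S_{P_t}(y')$ is estimated for the concrete offspring distributions of the \muea and the \mulgavar.
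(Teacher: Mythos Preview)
Your proof is correct. The paper does not actually give its own proof of this lemma; it simply cites it from~\cite{Lengler2024} (as Lemma~6 there) without reproducing the argument. Your elementary bookkeeping---deriving the deterministic identity $S(P_{t+1}) = S(P_t) - 2 S_{P_t}(x_d) + 2 S_{P_t}(y') - 2 H(x_d, y')$ and then averaging over $d$---is exactly the standard way to establish this formula, and your handling of the ordered-pair factor of~$2$ and the diagonal terms is clean.
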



Now we combine Lemma~\ref{lem:how-H-is-derived} with Lemma~\ref{lem:how-E-S-P-t-plus-1-is-derived} to describe the expected change of $S(P_t)$ if $y'$ arises from another $y \in P_t$ by a mutation operator which flips $\ell$ ones and zeros chosen uniformly at random and every $x \in P_t$ is in \plateau.


\begin{lemma}
\label{lem:how-E-S-P-t-plus-1-is-derived2}
Let $P_t \subset \plateau$ be a population of size~$\mu$. Let $y \in P_t$ \newedit{be fixed} and suppose that $y'$ results from $y$ by a mutation operator which flips $\ell$ ones and $\ell$ zeros chosen uniformly at random. Let $P_{t+1}=(P_t \cup \{y'\}) \setminus \{x_d\}$ for a uniform random $d \in [\mu]$. Then
\begin{align*}
& \E(S(P_{t+1}) \mid S(P_t)) = 
\left(1 - \tfrac{2}{\mu}\right) S(P_t) + 4 \ell (\mu-1) + \tfrac{2(\mu-1)}{\mu} \Big(1-\tfrac{\ell n}{k(n-k)}\Big) S_{P_t}(y).
\end{align*}
\end{lemma}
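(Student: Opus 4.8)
The plan is to reduce everything to the two lemmas already established and compose them via the tower property. First I would condition on the offspring $y'$ (in addition to $P_t$ and the fixed parent $y$) and invoke Lemma~\ref{lem:how-E-S-P-t-plus-1-is-derived}, which applies verbatim since it holds for an arbitrary search point $y'$ and an arbitrary population of size~$\mu$, giving
\[
\E(S(P_{t+1}) \mid S(P_t), y') = \left(1 - \tfrac{2}{\mu}\right) S(P_t) + \tfrac{2(\mu-1)}{\mu}\, S_{P_t}(y').
\]
Taking expectations over the internal randomness of the mutation operator (the choice of which $\ell$ ones and $\ell$ zeros of $y$ to flip) and using the tower property together with linearity of expectation, and noting that $S(P_t)$ is determined by $P_t$, the only quantity left to compute is $\E(S_{P_t}(y') \mid S(P_t))$.

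Second I would expand $S_{P_t}(y') = \sum_{i=1}^{\mu} H(x_i, y')$ and apply Lemma~\ref{lem:how-H-is-derived} to each summand. This is legitimate because $P_t \subset \plateau$, so every $x_i$ lies in \plateau, and $y \in P_t \subset \plateau$ as well; hence the pair $(x_i, y)$ satisfies the hypotheses of Lemma~\ref{lem:how-H-is-derived}, with $y'$ in the role of $z$ there (the search point obtained from $y$ by flipping $\ell$ uniformly random ones and $\ell$ uniformly random zeros, where we use $\ell \le q$). Therefore $\E(H(x_i, y')) = 2\ell + \big(1 - \tfrac{\ell n}{k(n-k)}\big) H(x_i, y)$ for each $i$, and summing over $i \in [\mu]$ yields
\[
\E(S_{P_t}(y') \mid S(P_t)) = 2\ell\mu + \Big(1 - \tfrac{\ell n}{k(n-k)}\Big) S_{P_t}(y).
\]

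Finally I would substitute this into the first display, using $\tfrac{2(\mu-1)}{\mu} \cdot 2\ell\mu = 4\ell(\mu-1)$ for the cross term, which reproduces exactly the claimed identity. I do not anticipate a genuine obstacle: the proof is a routine two-step composition of Lemmas~\ref{lem:how-H-is-derived} and~\ref{lem:how-E-S-P-t-plus-1-is-derived}. The only points deserving an explicit word are (i) the correctness of the conditioning order — first fix $y'$, then average over the mutation's randomness — which is just the tower property, and (ii) that the assumption $P_t \subset \plateau$ (so also $y \in \plateau$) is precisely what licenses applying Lemma~\ref{lem:how-H-is-derived} to every pair $(x_i, y)$ with $x_i \in P_t$.
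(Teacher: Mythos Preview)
Your proposal is correct and follows essentially the same approach as the paper: condition on $y'$ and apply Lemma~\ref{lem:how-E-S-P-t-plus-1-is-derived}, then compute $\E(S_{P_t}(y'))$ by summing Lemma~\ref{lem:how-H-is-derived} over all $x_i\in P_t$ and substitute back. The paper phrases the first step as the law of total probability rather than the tower property, but the argument is identical.
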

\begin{proof}
For $z \in P_t$ we have $S(z):=S_{P_t}(z)=\sum\nolimits_{i=1}^{\mu}H(x_i,z)$. By Lemma~\ref{lem:how-E-S-P-t-plus-1-is-derived} and the law of total probability 
we obtain
\begin{equation}
\label{eq:how-E-S-P-t-plus-1-is-derived2-eq1}
\E(S(P_{t+1}) \mid S(P_t)) = \left(1 - \frac{2}{\mu}\right) S(P_t) + \frac{2(\mu-1)}{\mu} \E(S(y')).
\end{equation}
By Lemma~\ref{lem:how-H-is-derived} we obtain for all $i \in [\mu]$
\[
\E(H(x_i,y')) = 2\ell + \left(1-\frac{\ell n}{k(n-k)}\right)H(x_i,y).
\]
Using linearity of expectation,
\[
\E(S(y')) = \sum\nolimits_{i=1}^\mu \E(H(x_i,y')) = 2 \ell \mu + \left(1-\frac{\ell n}{k(n-k)}\right) S(y)
\]
and plugging this into~\eqref{eq:how-E-S-P-t-plus-1-is-derived2-eq1} yields the claim.
\end{proof}

Remarkably, the right-hand side
only depends on $S_{P_t}(y)$, not on $y$ itself. This means that all $y$ with the same value of $S_{P_t}(y)$ yield the same dynamics. The following corollary describes how the diversity evolves in the case of a $(\mu+1)$ EA with a mutation operator which flips $\ell$ ones and zeros chosen uniformly at random. This result transfers straightforward\newedit{ly} to unbiased mutation operators if one knows the probability $p_\ell$ to flip $\ell$ ones and $\ell$ zeros. 

\begin{corollary}
\label{cor:diversity-EA}
Let $\ell \in \{0, \ldots , q\}$ (where $q=\min(k,n-k)$).
Consider \newedit{the} $(\mu+1)$ EA (see Algorithm~\ref{alg:steady-state-EA}) on $\jump_k$ with a mutation operator which flips $\ell$ ones and $\ell$ zeros chosen uniformly at random. Suppose its current population is $P_t \subset \plateau$.
Then
\begin{align*}
& \E(S(P_{t+1}) \mid S(P_t)) 
= \left(1 - \frac{2}{\mu^2} - \frac{2(\mu-1)\ell n}{\mu^2 k(n-k)}\right) S(P_t) + 4 \ell (\mu-1).
\end{align*}
\end{corollary}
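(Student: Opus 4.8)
The plan is to derive Corollary~\ref{cor:diversity-EA} as an averaging consequence of Lemma~\ref{lem:how-E-S-P-t-plus-1-is-derived2} over the random choice of the parent~$y$. In the $(\mu+1)$~EA of Algorithm~\ref{alg:steady-state-EA}, the parent $y$ is selected uniformly at random from $P_t$. Since every $x\in P_t$ lies on \plateau, the offspring $y'$ produced by the given mutation operator also lies on \plateau (it has the same number of ones), so its fitness equals that of every population member and it is always accepted; moreover the replaced individual $z$ is then a uniform random member of $P_t$. Hence the update is exactly of the form $P_{t+1}=(P_t\cup\{y'\})\setminus\{x_d\}$ with $d$ uniform in $[\mu]$, and Lemma~\ref{lem:how-E-S-P-t-plus-1-is-derived2} applies verbatim once $y$ is fixed.

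The one remaining step is to take expectation over the uniform random choice of $y$. Conditioning on $y$ being the $j$-th population member and averaging the conclusion of Lemma~\ref{lem:how-E-S-P-t-plus-1-is-derived2}, the only $y$-dependent term is $\frac{2(\mu-1)}{\mu}\big(1-\tfrac{\ell n}{k(n-k)}\big)S_{P_t}(y)$, so I need $\frac1\mu\sum_{j=1}^\mu S_{P_t}(x_j)$. By definition $\sum_{j=1}^\mu S_{P_t}(x_j)=\sum_{j=1}^\mu\sum_{i=1}^\mu H(x_i,x_j)=S(P_t)$, hence the average equals $S(P_t)/\mu$. Substituting this in gives
\begin{align*}
\E(S(P_{t+1})\mid S(P_t)) &= \Big(1-\tfrac{2}{\mu}\Big)S(P_t) + 4\ell(\mu-1) + \tfrac{2(\mu-1)}{\mu}\Big(1-\tfrac{\ell n}{k(n-k)}\Big)\cdot\tfrac{S(P_t)}{\mu}\\
&= \Big(1-\tfrac{2}{\mu}+\tfrac{2(\mu-1)}{\mu^2}-\tfrac{2(\mu-1)\ell n}{\mu^2 k(n-k)}\Big)S(P_t) + 4\ell(\mu-1),
\end{align*}
and the coefficient simplifies via $1-\tfrac2\mu+\tfrac{2(\mu-1)}{\mu^2}=1-\tfrac{2\mu-2(\mu-1)}{\mu^2}=1-\tfrac{2}{\mu^2}$, which yields exactly the claimed identity.

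There is essentially no hard obstacle here; it is a short computation. The only point requiring a line of justification is that on \plateau every offspring of the stated mutation operator is accepted, so that the abstract replacement-selection lemma is applicable unconditionally — this uses that the mutation flips equally many ones and zeros, keeping $|y'|_1=n-k$, and that under $\jump_k$ all search points on \plateau share the same (minimum, among the population) fitness. After that, the result is just linearity of expectation plus the identity $\sum_j S_{P_t}(x_j)=S(P_t)$ and an algebraic simplification of the coefficient of $S(P_t)$. I would also briefly remark that the statement is phrased for $\jump_k$ but the dynamics are identical on $\jump_k'$, since the population never leaves \plateau.
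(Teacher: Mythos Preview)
Your proof is correct and follows essentially the same approach as the paper: apply Lemma~\ref{lem:how-E-S-P-t-plus-1-is-derived2} and average over the uniform random choice of the parent~$y$, using $\E(S_{P_t}(y)) = S(P_t)/\mu$. You add a bit more justification than the paper does (explaining why the offspring is always accepted and working out the coefficient simplification explicitly), but the argument is the same.
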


\begin{proof}
By \newedit{Lemma~\ref{lem:how-E-S-P-t-plus-1-is-derived2} and} the law of total probability we have
\begin{align*}
\E(S(P_{t+1}) \mid S(P_t)) 
&= \left(1 - \tfrac{2}{\mu}\right) S(P_t) + 4 \ell (\mu-1) + \tfrac{2(\mu-1)}{\mu} \Big(1-\tfrac{\ell n}{k(n-k)}\Big) \E(S(y))
\end{align*}
where $y$ denotes the parent chosen uniformly at random. With
\[
\E(S(y)) = \frac{1}{\mu} \sum\nolimits_{i=1}^\mu \sum\nolimits_{j=1}^\mu H(x_i,x_j) = S(P_t)/\mu
\]
we obtain the result.
\end{proof}

Now we extend our analysis to arbitrary unbiased mutation operators.
\begin{corollary}
\label{cor:diversity-EA2}
Consider a $(\mu+1)$ EA (see Algorithm \ref{alg:steady-state-EA}) on $\jump_k'$ with a current population $P_t \subset \plateau$ and an unbiased mutation operator \newedit{$\mut$}. Then 
\begin{align*} 
& \E(S(P_{t+1}) \mid S(P_t)) \\
&= S(P_t)
 - \sum\nolimits_{\ell=0}^q p_\ell \left(\frac{2}{\mu^2} + \frac{2(\mu-1)\ell n}{\mu^2 k(n-k)}\right)S(P_t) + 4(\mu-1)\sum\nolimits_{\ell=0}^q \ell p_\ell
\end{align*}
where $p_\ell$ denotes the probability that \newedit{$\mut$} flips $\ell$ ones and $\ell$ zeros where $\ell \in \{0, \ldots ,q\}$.
\end{corollary}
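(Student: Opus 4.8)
The plan is to decompose the single mutation step according to how many one-bits and zero-bits it flips, then invoke Corollary~\ref{cor:diversity-EA} termwise. An unbiased mutation operator on a search point in $\plateau$ always produces, conditionally on flipping $a$ one-bits and $b$ zero-bits, a uniformly random choice of which $a$ ones and which $b$ zeros are flipped (this is exactly the symmetry guaranteed by unbiasedness). Thus, conditional on the event ``$\mut$ flips $\ell$ ones and $\ell$ zeros'', the offspring $y'$ has the same distribution as the offspring produced by the operator of Corollary~\ref{cor:diversity-EA} with parameter $\ell$. First I would make this conditioning explicit and note that, since $P_t \subset \plateau$ and the fitness function is $\jump_k'$, the only mutations that can be accepted (indeed, the only ones relevant for the drift formula we are deriving — the statement is about the ``event'' of an $(\ell,\ell)$-flip) are those preserving the number of ones, i.e.\ those flipping an equal number $\ell$ of ones and zeros; any mutation changing $\ones{y'}$ either moves off $\plateau$ to lower fitness (rejected) or, for $\ones{y'} = n$, would be accepted but we are on $\jump_k'$ where $1^n$ has fitness $0$ and is rejected. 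Hence it suffices to sum over $\ell \in \{0,\dots,q\}$.

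The key computation is then just the law of total probability over~$\ell$. Writing $p_\ell$ for the probability that $\mut$ flips exactly $\ell$ ones and $\ell$ zeros, and letting $p_{\mathrm{rem}} := 1 - \sum_{\ell=0}^q p_\ell$ be the probability of any other outcome, I would argue that in the ``remaining'' case the offspring is rejected and $P_{t+1} = P_t$, so its contribution to $\E(S(P_{t+1}) \mid S(P_t))$ is $p_{\mathrm{rem}} \cdot S(P_t)$. Combining with Corollary~\ref{cor:diversity-EA}, which gives the conditional expectation in each $(\ell,\ell)$-case, yields
\begin{align*}
\E(S(P_{t+1}) \mid S(P_t))
&= p_{\mathrm{rem}}\, S(P_t) + \sum_{\ell=0}^q p_\ell \left[ \left(1 - \frac{2}{\mu^2} - \frac{2(\mu-1)\ell n}{\mu^2 k(n-k)}\right) S(P_t) + 4\ell(\mu-1) \right].
\end{align*}
Using $p_{\mathrm{rem}} + \sum_{\ell=0}^q p_\ell = 1$, the coefficient of $S(P_t)$ collapses to $1 - \sum_{\ell=0}^q p_\ell\left(\frac{2}{\mu^2} + \frac{2(\mu-1)\ell n}{\mu^2 k(n-k)}\right)$, and the additive part is $4(\mu-1)\sum_{\ell=0}^q \ell p_\ell$, which is exactly the claimed formula.

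I would flag one subtlety as the main thing to get right: in Corollary~\ref{cor:diversity-EA} the offspring is \emph{always} accepted (it stays in $\plateau$ and thus has the same $\jump_k'$-fitness as the worst individual), so there is no rejection within an $(\ell,\ell)$-case; rejection only occurs for the ``$p_{\mathrm{rem}}$'' outcomes, and it is worth double-checking that those are precisely the outcomes changing the number of ones. The only other point of care is that the parent~$y$ in Corollary~\ref{cor:diversity-EA} is chosen uniformly at random from $P_t$ and the replaced individual uniformly at random among the (here, all) worst individuals; both of these match Algorithm~\ref{alg:steady-state-EA} exactly on $\plateau$, so no adjustment is needed. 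Beyond that the argument is a routine application of the tower property, and I do not anticipate any real obstacle.
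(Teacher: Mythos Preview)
Your proposal is correct and follows essentially the same approach as the paper: decompose by the event that $\ell$ ones and $\ell$ zeros are flipped (the paper's $A_\ell$), observe that all other outcomes (your $p_{\mathrm{rem}}$, the paper's event $B$) are rejected on $\jump_k'$ so $P_{t+1}=P_t$, and then apply the law of total probability together with Corollary~\ref{cor:diversity-EA} termwise. The subtleties you flag (unbiasedness giving a uniformly random choice of flipped positions, uniform deletion because all of $P_t$ has the same fitness, and rejection of $1^n$ on $\jump_k'$) are exactly the points the paper checks.
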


\begin{proof}
For $\ell \in \{0, \ldots , q\}$ denote by $A_\ell$ the event that $\ell$ zeros and $\ell$ ones are flipped. Since \newedit{$\mut$} is unbiased, it flips $\ell$ ones and $\ell$ zeros chosen uniformly at random if $A_\ell$ occurs. Since every individual $x \in P_t$ is in \plateau (i.e. has $k$ zeros and $n-k$ ones), we see that the outcome $y$ is in \plateau if and only if one of the events $A_0,A_1, \ldots ,A_q$ occurs. Let $B$ be the event that $y$ has a worse fitness than every $x \in P_t$, i.e. $y$ is not in \plateau. 
Then $\pr(B)+\sum\nolimits_{\ell=0}^q p_\ell = 1$ and by the law of total probability
\begin{align*}
 \neweditx{\E(S(P_{t+1}) \mid S(P_t)) =}\;&\neweditx{\E(S(P_{t+1}) \mid S(P_t),B)\cdot \pr(B \mid S(P_t))} \\
&\neweditx{+ \sum\nolimits_{\ell=0}^q \E(S(P_{t+1}) \mid S(P_t), A_\ell)\cdot p_\ell.}
\end{align*}
If $B$ occurs, $S(P_t)$ does not change, i.e. $\E(S(P_{t+1}) \mid S(P_t),B) = S(P_t)$. Since 
\begin{align*}
&\E(S(P_{t+1}) \mid S(P_t),A_\ell) 
= \left(1 - \frac{2}{\mu^2} - \frac{2(\mu-1)\ell n}{\mu^2 k(n-k)}\right) S(P_t) + 4 \ell (\mu-1)
\end{align*}
by Corollary~\ref{cor:diversity-EA}, we obtain the result. 
\end{proof}

As for flat fitness landscapes in \cite{Lengler2024} (i.e. the fitness function is constant over the whole search space), Corollary~\ref{cor:diversity-EA2} shows that there is an equilibrium state in \plateau. We define
\begin{align}
\label{eq:betagamma}
\beta:=4(\mu-1)\sum\nolimits_{\ell=0}^q \ell p_\ell \text{ \ and \ } \gamma:= \sum\nolimits_{\ell=0}^q p_\ell \left(\frac{2}{\mu^2} + \frac{2(\mu-1)\ell n}{\mu^2 k(n-k)}\right).
\end{align}
Then\neweditx{, for all populations $P_t \subset \plateau$,}
\begin{align}
\label{eq:expectation}
\E(S(P_{t+1}) \mid S(P_t)) = (1-\gamma)S(P_t)+\beta.
\end{align}
As in \cite{Lengler2024} this condition is a negative multiplicative drift with an additive disturbance (this terminology stems  from~\cite{DoerrNegativeMultiplicativeDrift}). An equilibrium state for $S(P_t)$ with zero drift is attained for
\begin{align}
S_0:=\frac{\beta}{\gamma} &= \frac{4(\mu-1)\sum\nolimits_{\ell=0}^{q} \ell p_\ell}{\frac{2}{\mu^2}\left(\sum\nolimits_{\ell=0}^q p_\ell\right) + \frac{2(\mu-1)n}{\mu^2 k(n-k)} \sum\nolimits_{\ell=0}^q \ell p_\ell}
= \frac{2(\mu-1) \mu^2}{\frac{\sum\nolimits_{\ell=0}^q p_\ell}{\sum\nolimits_{\ell=0}^q \ell p_\ell} + \frac{(\mu-1)n}{k(n-k)}}\label{eq:equilibrium-for-EA-nonsimplified}
\end{align}
if \newedit{$(p_1,\ldots,p_q) \neq 0$} and $S_0=S(P_0)$ otherwise.
The major difference to \cite{Lengler2024} is that this equilibrium depends not only on the number of bits flipped in expectation, but also on the probabilities $p_\ell$ to flip exactly $\ell$ ones and $\ell$ zeros. For 
bitwise mutation with mutation \neweditx{probability $\chi/n$ where $0 \leq \chi \leq n$}.
\begin{align}
p_\ell 
&= \binom{k}{\ell} \cdot \binom{n-k}{\ell} \cdot \left(\frac{\chi}{n}\right)^{2\ell} \cdot \left(1-\frac{\chi}{n}\right)^{n-2\ell}.
\end{align}

The following lemma gives the equilibrium state for standard bit mutation with mutation probability $\chi/n$ if $\chi \in \neweditx{O}(1)$ \neweditx{and $\chi > 0$}. Recall that $q=\min(k,n-k)$. 

\begin{lemma}
\label{lem:Estimate-p-binomial}
Consider standard bit mutation with a mutation probability $\chi/n$ with $\chi \in O(1)$.  Then $\sum\nolimits_{\ell=0}^q p_\ell = e^{-\chi}(1+\nu_1)$ and $\sum\nolimits_{\ell=0}^q \ell p_\ell = k(n-k) (\chi/n)^2 \cdot e^{-\chi}(1+\nu_2)$ for $|\nu_1|,|\nu_2| \in O(q/n)$. \neweditx{Therefore, if $\chi>0$,}
\[
\frac{\sum\nolimits_{\ell=0}^q p_\ell}{\sum\nolimits_{\ell=0}^q \ell p_\ell} = \frac{n^2}{k \chi^2(n-k)} (1+\xi) 
\]
for $|\xi| \in O(q/n)$. 
\end{lemma}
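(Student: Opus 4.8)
The plan is to pull the common factor $(1-\chi/n)^n$ out of every $p_\ell$ and to show that the two sums are governed by their $\ell=0$ and $\ell=1$ terms, respectively. Write $a := \frac{\chi/n}{1-\chi/n} = \frac{\chi}{n-\chi}$, so that $p_\ell = (1-\chi/n)^n \binom{k}{\ell}\binom{n-k}{\ell} a^{2\ell}$, and set $\lambda := k(n-k)\,a^2$. First I would record two elementary estimates. By the Taylor expansion of $\ln(1-\chi/n)$ together with $\chi = O(1)$ one gets $(1-\chi/n)^n = e^{-\chi}(1+O(1/n))$. Since $\tfrac{a}{\chi/n} = \tfrac{1}{1-\chi/n} = 1+O(1/n)$, we have $\lambda = k(n-k)(\chi/n)^2(1+O(1/n))$; moreover $k(n-k)\le nq$ and $a \le 2\chi/n$ give $\lambda \le 4q\chi^2/n = O(q/n)$, and since $q\le n/2$ and $\chi$ is bounded by a constant, $\lambda$ is itself bounded by a constant, so $e^{\lambda}=O(1)$.

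Next I would bound the tails via $\binom{k}{\ell}\binom{n-k}{\ell} a^{2\ell} \le \lambda^\ell/(\ell!)^2$. This yields $\sum_{\ell\ge 2}\binom{k}{\ell}\binom{n-k}{\ell}a^{2\ell} \le \sum_{\ell\ge 2}\lambda^\ell/(\ell!)^2 \le \tfrac{\lambda^2}{4}e^{\lambda} = O(\lambda^2)$ and, similarly, $\sum_{\ell\ge 2}\ell\binom{k}{\ell}\binom{n-k}{\ell}a^{2\ell} \le \sum_{\ell\ge 2}\lambda^\ell/(\ell!\,(\ell-1)!) \le \lambda(e^{\lambda}-1) = O(\lambda^2)$. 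Since the $\ell=0$ and $\ell=1$ terms of the first sum are $1$ and $\lambda$, and the $\ell=1$ term of the second sum is $\lambda$, this gives $\sum_{\ell=0}^q p_\ell = (1-\chi/n)^n(1+\lambda+O(\lambda^2))$ and $\sum_{\ell=0}^q \ell p_\ell = (1-\chi/n)^n(\lambda + O(\lambda^2)) = (1-\chi/n)^n\,\lambda\,(1+O(\lambda))$. Substituting the two estimates from the first paragraph and using that $\lambda$, $\lambda^2$ and $1/n$ are all $O(q/n)$ (here $q\ge 1$) yields the two claimed identities with $|\nu_1|,|\nu_2| = O(q/n)$; the matching lower bounds $\nu_1 \ge -O(1/n)$ and $\nu_2 \ge -O(1/n)$ follow from $\sum_\ell p_\ell \ge p_0 = (1-\chi/n)^n$ and $\sum_\ell \ell p_\ell \ge p_1 = k(n-k)(\chi/n)^2(1-\chi/n)^{n-2}$, each of which already equals the claimed leading term up to a factor $1+O(1/n)$.

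Finally, for $\chi>0$ and $q\ge 1$ (so that $\sum_\ell \ell p_\ell \ge p_1 > 0$), dividing the two identities cancels $e^{-\chi}$ and gives $\frac{\sum_\ell p_\ell}{\sum_\ell \ell p_\ell} = \frac{n^2}{k\chi^2(n-k)}\cdot\frac{1+\nu_1}{1+\nu_2}$. The lower bound on $\nu_2$ ensures $1+\nu_2 \ge 1 - O(1/n) = \Theta(1)$, so $\frac{1+\nu_1}{1+\nu_2} = 1 + \frac{\nu_1-\nu_2}{1+\nu_2} = 1 + \xi$ with $|\xi| = O(|\nu_1|+|\nu_2|) = O(q/n)$, which is the claim.

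The main work here is purely bookkeeping of error terms; the one point that needs a little care is to check, uniformly over all admissible $\chi$ and $k$, that $\lambda$ (hence $e^{\lambda}$) stays bounded and that $1+\nu_2$ is bounded away from $0$, so that the various relative errors — the $O(1/n)$ from $(1-\chi/n)^n$ and from $a$ versus $\chi/n$, and the $O(\lambda^2)$ tails — really collapse into a single $O(q/n)$. Everything else is elementary.
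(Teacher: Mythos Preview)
Your proof is correct and follows essentially the same approach as the paper: factor out $(1-\chi/n)^n$, identify the $\ell=0$ and $\ell=1$ terms as dominant, and bound the remaining tail by a geometric-type series that is $O((q/n)^2)$. Your packaging via the single parameter $\lambda = k(n-k)a^2$ and the bound $\binom{k}{\ell}\binom{n-k}{\ell}a^{2\ell}\le \lambda^\ell/(\ell!)^2$ is somewhat tidier than the paper's explicit product manipulations, but the underlying argument is the same.
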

\begin{proof}
We have
\begin{align*}
&\sum\nolimits_{\ell=0}^q \ell p_\ell = \frac{\chi^2}{n^2}\Big(1-\frac{\chi}{n}\Big)^{n-2}\sum\nolimits_{\ell=1}^{q} \Big(\ell \binom{k}{\ell} \binom{n-k}{\ell} \cdot \Big(\frac{\chi}{n-\chi}\Big)^{2\ell-2}\Big)\\
&= \frac{k(n-k)\chi^2}{n^2}\Big(1-\frac{\chi}{n}\Big)^{n-2}\sum\nolimits_{\ell=1}^q \Big(\frac{\chi^{2\ell-2} \prod_{j=1}^{\ell-1}(k-j)(n-k-j)}{(n-\chi)^{2\ell-2}(\ell-1)! \ell!}\Big)\\
&= \frac{k(n-k)\chi^2}{n^2}\Big(1-\frac{\chi}{n}\Big)^{n-2}\Big(1+\frac{q}{n} \sum\nolimits_{\ell=2}^q\frac{n\chi^{2\ell-2}\prod_{j=1}^{\ell-1}(k-j)(n-k-j)}{q(n-\chi)^{2\ell-2}(\ell-1)! \ell!}\Big).
 \end{align*}
We may bound $(1-\chi/n)^{n-2} = e^{-\chi}(1+O(\chi/n))$ by~\cite[Corollary~1.4.6]{doerr-theory-chapter} and swallow the latter factor in $\nu_2$. To bound the remaining terms, we use 
\[
\sum\nolimits_{\ell=2}^q\frac{\chi^{2\ell-2}}{(\ell-1)! \ell!} = \sum\nolimits_{\ell=1}^{q-1}\frac{\chi^{2\ell}}{\ell! (\ell+1)!} \leq \sum\nolimits_{\ell=0}^{q-1}\frac{\chi^{2\ell}}{\ell!} \leq e^{\chi^2} \in O(1)
\]
and for every $\ell \in \{2, \dots ,q\}$
\[
\frac{n\prod_{j=1}^{\ell-1}(k-j)(n-k-j)}{q(n-\chi)^{2\ell-2}}
\le \frac{n(k-1)^{\ell-1}(n-k-1)^{\ell-1}}{q(n-\chi)^{2\ell-2}}. 
\]
Since $(k-1)(n-k-1) = (q-1)(n-q-1) \le qn$, we may bound the latter as
\begin{align*}
&\leq \frac{n^2q(k-1)^{\ell-2}(n-k-1)^{\ell-2}}{q(n-\chi)^{2\ell-2}} \leq \Big(\frac{n}{n-\chi}\Big)^{2 \ell - 2} \\
&\neweditx{\; \leq \;} \Big(1+\frac{\chi}{n-\chi}\Big)^{2 \ell} \leq e^{\frac{\chi}{n-\chi} \cdot 2\ell} \leq e^{\frac{\chi}{n-\chi} \cdot 2k} \in O(1). 
\end{align*}
In an analogous way we obtain the result for $\sum\nolimits_{\ell=0}^q p_\ell$. 
We have 
\begin{align*}
\sum\nolimits_{\ell=0}^q p_\ell &= \left(1-\frac{\chi}{n}\right)^n\sum\nolimits_{\ell=0}^q\binom{k}{\ell} \cdot \binom{n-k}{\ell} \cdot \left(\frac{\chi}{n-\chi}\right)^{2\ell}\\
&=\Big(1-\frac{\chi}{n}\Big)^n\sum\nolimits_{\ell=0}^q \Big(\frac{\chi^{2\ell} \prod_{j=0}^{\ell-1}(k-j)(n-k-j)}{(n-\chi)^{2\ell}\ell! \ell!}\Big)\\
&=\Big(1-\frac{\chi}{n}\Big)^n\left(1+\frac{q}{n}\sum\nolimits_{\ell=1}^q \Big(\frac{n\chi^{2\ell} \prod_{j=0}^{\ell-1}(k-j)(n-k-j)}{q(n-\chi)^{2\ell}\ell! \ell!}\Big)\right).
\end{align*}
As before, we may estimate $(1-\chi/n)^{n} = e^{-\chi}(1-O(\chi/n))$. Moreover, 
\[
\sum\nolimits_{\ell=1}^q\frac{\chi^{2\ell}}{\ell! \ell!} \leq \sum\nolimits_{\ell=1}^q\frac{\chi^{2\ell}}{\ell!} \leq e^{\chi^2} \in O(1)
\]
and for every $\ell \in \{1 \ldots ,q\}$
\begin{align*}
\frac{n\prod_{j=0}^{\ell-1}(k-j)(n-k-j)}{q(n-\chi)^{2\ell}} &= \frac{nq(n-q) \prod_{j=1}^{\ell-1}(k-j)(n-k-j)}{q(n-\chi)^{2\ell}}\\
&\leq \frac{n^{2\ell}}{(n-\chi)^{2\ell}} \leq e^{\frac{\chi}{n-\chi} \cdot 2k} \in O(1).\qedhere
\end{align*}

\end{proof}

Now we can state the equilibrium for standard bit mutation.
Plugging the simplifications from Lemma~\ref{lem:Estimate-p-binomial} into~\eqref{eq:equilibrium-for-EA-nonsimplified},
\begin{align*}
    S_0 =\;& \frac{2(\mu-1) \mu^2}{\frac{n^2}{k \chi^2(n-k)} (1+\xi) + \frac{(\mu-1)n}{k(n-k)}}\newedit{.}
\end{align*}
Thus, we get the following.
\begin{corollary}
For the \muea on \jump$_k'$ with standard bit mutation and mutation \neweditx{probability} $\chi/n$ \newedit{with $\chi \in O(1)$}, starting with an arbitrary population $P_t \subset \plateau$, the equilibrium state for population diversity is
\[
S_0 \coloneqq \frac{\beta}{\gamma} = \frac{2\mu^2\frac{k(n-k)}{n}}{\frac{n}{(\mu-1)\chi^2}(1+\xi)+1}.
\]
\end{corollary}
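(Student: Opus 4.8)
The plan is to derive the stated equilibrium purely by algebraic simplification of the general fixed point~\eqref{eq:equilibrium-for-EA-nonsimplified}, specialised to standard bit mutation via Lemma~\ref{lem:Estimate-p-binomial}; there is no genuine analytic obstacle here, only bookkeeping of the error term and of the validity conditions.

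First I would recall that, by Corollary~\ref{cor:diversity-EA2} and~\eqref{eq:expectation}, the diversity of the \muea restricted to \plateau satisfies a negative multiplicative drift with additive disturbance whose fixed point is $S_0 = \beta/\gamma$ as given in~\eqref{eq:equilibrium-for-EA-nonsimplified}, provided $(p_1,\dots,p_q)\neq 0$. For standard bit mutation with mutation probability $\chi/n$ and $\chi > 0$, the event of flipping one fixed zero and one fixed one has positive probability, hence $p_1 > 0$, so this non-degenerate case applies rather than the trivial alternative $S_0 = S(P_0)$. Since moreover $\chi \in O(1)$, Lemma~\ref{lem:Estimate-p-binomial} is applicable and yields
\[
\frac{\sum_{\ell=0}^q p_\ell}{\sum_{\ell=0}^q \ell p_\ell} = \frac{n^2}{k\chi^2(n-k)}\,(1+\xi), \qquad |\xi| \in O(q/n).
\]

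Substituting this ratio into the right-hand expression of~\eqref{eq:equilibrium-for-EA-nonsimplified} gives
\[
S_0 = \frac{2(\mu-1)\mu^2}{\dfrac{n^2}{k\chi^2(n-k)}(1+\xi) + \dfrac{(\mu-1)n}{k(n-k)}},
\]
which is exactly the intermediate formula displayed immediately before the corollary. It then remains to factor $\frac{n}{k(n-k)}$ out of the denominator, obtaining $\frac{n}{k(n-k)}\bigl(\frac{n}{\chi^2}(1+\xi) + (\mu-1)\bigr)$, so that the factor $\frac{k(n-k)}{n}$ migrates into the numerator, and finally to divide numerator and denominator by $(\mu-1)$. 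This produces
\[
S_0 = \frac{2\mu^2\,\frac{k(n-k)}{n}}{\dfrac{n}{(\mu-1)\chi^2}(1+\xi) + 1},
\]
as claimed, with the same $\xi$ carried through unchanged.

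The only points requiring care — the ``hard part,'' such as it is — are lining up the hypotheses: one needs $\chi > 0$ so that the equilibrium rather than the initial value $S(P_0)$ is the relevant fixed point, and $\chi \in O(1)$ so that Lemma~\ref{lem:Estimate-p-binomial} applies; under the later runtime assumption $q = k = o(n)$ the relative error satisfies $|\xi| \in O(q/n) = o(1)$, so the equilibrium is $S_0 = (1+o(1))\cdot 2\mu^2 k(n-k)/n / (n/((\mu-1)\chi^2) + 1)$. Everything else is elementary algebra.
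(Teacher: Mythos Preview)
Your proposal is correct and follows essentially the same approach as the paper: substitute the ratio from Lemma~\ref{lem:Estimate-p-binomial} into the general equilibrium formula~\eqref{eq:equilibrium-for-EA-nonsimplified} and simplify algebraically. You are in fact slightly more careful than the paper in explicitly checking that $(p_1,\dots,p_q)\neq 0$ (so that the non-degenerate case of~\eqref{eq:equilibrium-for-EA-nonsimplified} applies) and in spelling out the two-step algebraic manipulation; the paper simply displays the intermediate expression and states the corollary.
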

To obtain the average Hamming distance in the population, we need to divide $S_0$ by $\mu^2$. If $k = o(n)$ the numerator in $S_0/\mu^2$ is $2k(1-o(1))$ and if $\mu \chi^2 = \omega(n)$, the denominator is $1+o(1)$. Then the average Hamming distance is $2k(1 - o(1))$, that is, converging to its maximum value~$2k$.

\section{Analysing Population Diversity on Plateau for the \mulgavar}
\label{sec:equilibrium-states2}

We now focus on the \mulgavar, which performs crossover with probability with probability $p_c$ and generates $\lambdac$ crossover offspring from the same parents, as outlined in Algorithm~\ref{alg:steady-state-GA}. \neweditx{The goal of this section is to estimate $\E(S(P_{t+1}) \mid S(P_t))$ and to derive an equilibrium state for the \mulgavar on $\jump_k'$, assuming all individuals are on \plateau. More precisely, we will show a lower bound on the equilibrium state, which will later be used to prove lower bounds on the population diversity $S(P_t)$.}


\neweditx{
\textbf{Roadmap:} 
The key result of Section~\ref{sec:fixed-parents} is Theorem~\ref{thm:crossover-specific-parents-fixed}, which gives a lower bound on the population diversity $S(P_{t+1})$ for a fixed choice of parents $x_1, x_2$ and conditional on the event that the resulting offspring remains on the plateau. To achieve this, we analyse the effect of a \emph{balanced uniform crossover}~\cite{Friedrich2023} instead of the standard uniform crossover. Balanced uniform crossover ensures that the offspring has the same number of one-bits as both parents, guaranteeing that it remains on the plateau and is thus accepted into the population.
To relate the two crossover operators, we observe that uniform crossover can be decomposed into a two-stage process: first, applying balanced uniform crossover to the parents, and second, mutating the offspring by flipping bits to match the offspring distribution of standard uniform crossover. 
A key property of balanced uniform crossover, shown in~\cite{Lengler2024}, is that it is \emph{diversity-neutral}, meaning that the expected change in population diversity remains the same whether or not crossover is applied. This property allows us to use the analytical framework established in our previous work~\cite{Lengler2024}, which focuses on determining an equilibrium state for $S(P_t)$.
}

\neweditx{
In Section~\ref{sec:equilibrium-state-crossover} we then derive a lower bound on $E(S(P_{t+1}))$ for one generation of the \mulgavar (Lemma~\ref{lem:crossover-specific}). This argument relies on the fact that the \mulgavar uses competing crossover offspring for a parameter $\lambda_c$ large enough to guarantee that the best offspring is on the plateau, with high probability. We use this result to estimate a lower bound on the equilibrium state for population diversity (Corollary~\ref{cor:alphadelta}). To justify the design of the \mulgavar, we further provide a counterexample proving that the results from this section do not hold for the original \muga (Lemmas~\ref{lem:negative-standard-GA} and \ref{lem:negative-standard-GA-2}).
}

\neweditx{
Finally, in Section~\ref{sec:diversity-close-maximum} we prove that the \mulgavar typically maintains a population diversity close to its equilibrium in at least a constant fraction of the time (Lemma~\ref{lem:SPt-is-large}).
}

\subsection{The Dynamics of $S(P_t)$ for Fixed Parents}
\label{sec:fixed-parents}

\neweditx{As outlined in our roadmap, we begin our analysis of the dynamics of $S(P_t)$ by considering balanced uniform crossover, as it fits the} notion of \emph{diversity-neutral} crossover defined in~\cite{Lengler2024}. In the context of~\cite{Lengler2024} (that is, on flat fitness landscapes) \newedit{a diversity-neutral crossover} guaranteed that the equilibrium of a mutation-only EA was not affected when introducing \newedit{such a} crossover operator.
\begin{definition}[\cite{Lengler2024}]
\label{def:diversity-neutral}
We call a crossover operator $\cross$ \emph{diversity-neutral} if it has the following property. For all $x_1,x_2,z \in \{0, 1\}^n$,
\begin{equation}
    \label{eq:crossover-property}
\E(H(\cross(x_1, x_2), z) + H(\cross(x_2, x_1), z)) = H(x_1, z) + H(x_2, z).
\end{equation}
\end{definition}
Many well-known operators were shown to be diversity-neutral in~\cite{Lengler2024}, including uniform crossover, $k$-point crossover and balanced uniform crossover~\cite{Friedrich2023}. The latter copies bit values on which both parents agree. If the parents differ in $k$ positions, it then chooses values for these bits uniformly at random from all sub-strings that have exactly $\lfloor k/2 \rfloor$ ones at these positions. \newedit{Hence, it guarantees that amongst those $k$ positions there will be exactly $\lfloor{k/2}\rfloor$ ones.}
 Balanced uniform crossover will appear as a vehicle in \neweditx{the proof of the main result of this subsection, Theorem~\ref{thm:crossover-specific-parents-fixed}.}

We start our analysis with the following statement from \cite{Lengler2024}, which holds for arbitrary diversity-neutral crossover operators. 
\neweditx{Note that Theorem~18 in~\cite{Lengler2024} considers the application of a diversity-neutral crossover, followed by an unbiased mutation operator that flips $\chi$ bits in expectation. Here we omit mutation, or equivalently, consider a trivial ``mutation'' operator that always flips $\chi=0$ bits. Then Theorem~18 in~\cite{Lengler2024}}
considers a population $Q_t$ that results from creating an offspring \neweditx{via a diversity-neutral crossover} and always accepting, hence simulating one step on a flat landscape. Note that $Q_t$ is in general not the same as $P_{t+1}$; they are equal only in steps in which the offspring is accepted. 
\begin{lemma} (\cite[Theorem~18]{Lengler2024} for $\chi=0$.)
\label{lem:crossover-general}
For every population $P_t$ of size~$\mu$ the following holds. Suppose that $y \in \{0,1\}^n$ is the outcome of a diversity-neutral crossover $c$ on two parents chosen uniformly at random from $P_t$. Let $\neweditx{Q_t^b} :=(P_t \cup \{y\}) \setminus \{x_d\}$ for a uniform random $d \in [\mu]$. Then
\[
\E(S(\neweditx{Q_t^b}) \mid S(P_t)) = \big(1-\tfrac{2}{\mu^2} \big)S(P_t).
\]
\end{lemma}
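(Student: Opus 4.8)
The plan is to combine the diversity-neutrality of the crossover operator $c$ with Lemma~\ref{lem:how-E-S-P-t-plus-1-is-derived}, which already computes the expected diversity after one "always accept" replacement step in terms of $S(P_t)$ and $S_{P_t}(y)$ for an arbitrary offspring $y$. Concretely, Lemma~\ref{lem:how-E-S-P-t-plus-1-is-derived} gives
\[
\E(S(Q_t^b) \mid S(P_t), y) = \Big(1-\tfrac{2}{\mu}\Big)S(P_t) + \tfrac{2(\mu-1)}{\mu}S_{P_t}(y),
\]
so it remains to take expectation over the random offspring $y$ and show that $\E(S_{P_t}(y) \mid S(P_t)) = S(P_t)/\mu$, after which the arithmetic $\bigl(1-\tfrac{2}{\mu}\bigr) + \tfrac{2(\mu-1)}{\mu}\cdot\tfrac1\mu = 1-\tfrac{2}{\mu^2}$ finishes the claim.

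The heart of the argument is therefore the computation of $\E(S_{P_t}(y))$ where $y = c(x_1,x_2)$ for $x_1,x_2$ drawn uniformly at random (with replacement) from $P_t$. First I would write $S_{P_t}(y) = \sum_{i=1}^{\mu} H(x_i,y)$ and take expectation over the choice of parents and the randomness of $c$. Since the two parents are each chosen uniformly and independently, I would pair up the contributions of the ordered pair $(x_1,x_2)$ with those of $(x_2,x_1)$: for a fixed unordered pair of population indices $\{a,b\}$ (and also the diagonal case $a=b$), the symmetrised contribution to $\E\bigl(H(x_i, c(x_1,x_2))\bigr)$ is $\tfrac12\E\bigl(H(c(x_a,x_b), x_i) + H(c(x_b,x_a), x_i)\bigr)$, which by the diversity-neutral property \eqref{eq:crossover-property} equals $\tfrac12\bigl(H(x_a,x_i) + H(x_b,x_i)\bigr)$. (The diagonal case $x_1=x_2=x_a$ is trivial: $c(x_a,x_a)=x_a$ deterministically, so its contribution is simply $H(x_a,x_i)$, consistent with the same formula.) Averaging over the uniform choice of the ordered parent pair then gives $\E\bigl(H(x_i,y)\bigr) = \tfrac{1}{\mu^2}\sum_{a=1}^{\mu}\sum_{b=1}^{\mu}\tfrac12\bigl(H(x_a,x_i)+H(x_b,x_i)\bigr) = \tfrac{1}{\mu}\sum_{a=1}^{\mu}H(x_a,x_i)$, and summing over $i$ yields $\E(S_{P_t}(y)) = \tfrac1\mu\sum_{i=1}^\mu\sum_{a=1}^\mu H(x_a,x_i) = S(P_t)/\mu$.

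Plugging this into the conditional expectation above and using the tower property over the offspring $y$ gives $\E(S(Q_t^b)\mid S(P_t)) = (1-\tfrac2\mu)S(P_t) + \tfrac{2(\mu-1)}{\mu}\cdot\tfrac{S(P_t)}{\mu} = (1-\tfrac{2}{\mu^2})S(P_t)$, as desired. Since this lemma is quoted verbatim (for $\chi=0$) from \cite[Theorem~18]{Lengler2024}, one could alternatively simply cite that result; the sketch above is the short self-contained derivation. I do not expect any real obstacle here — the only point requiring a little care is the bookkeeping of the parent selection (ordered vs.\ unordered pairs, and the diagonal $x_1=x_2$ term) so that the symmetrisation step invoking \eqref{eq:crossover-property} is applied correctly.
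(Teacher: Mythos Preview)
Your argument is correct and self-contained; the paper itself does not give a proof of this lemma but simply cites it as a special case ($\chi=0$) of \cite[Theorem~18]{Lengler2024}. Your route via Lemma~\ref{lem:how-E-S-P-t-plus-1-is-derived} together with the symmetrisation over ordered parent pairs is exactly the natural derivation and matches how the cited result is obtained.

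One small caveat: the claim that $c(x_a,x_a)=x_a$ \emph{deterministically} is not a consequence of Definition~\ref{def:diversity-neutral} for an arbitrary diversity-neutral crossover (though it happens to hold for uniform, balanced uniform, and $k$-point crossover). What you actually need, and what \emph{does} follow directly from~\eqref{eq:crossover-property} with $x_1=x_2=x_a$, is the expectation identity $\E\bigl(H(c(x_a,x_a),z)\bigr)=H(x_a,z)$. This is already enough for your computation, so the argument goes through unchanged once you replace the determinism claim by this weaker (and immediate) consequence of diversity-neutrality.
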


We also need the following easy computational lemma. Recall that $S_i(P) = \sum\nolimits_{x\in P}\sum\nolimits_{x'\in P} H_i(x,x')$, where $H_i(x,x')$ is the indicator for the event $x_i\neq x_i'$.

\begin{lemma}
\label{lem:diversity-general2}
    Let $i\in [n]$, let $P_t \subset \plateau$ be a population of size~$\mu$ and let $y,z \in \{0,1\}^n$ with $y_i \neq z_i$. Let $P_1:=(P_t \cup \{y\}) \setminus \{x_d\}$ and $P_2:=(P_t \cup \{z\}) \setminus \{x_d\}$ where $d \in [\mu]$ is chosen uniformly at random. Let $m$ be the number of bits with value~$z_i$ at position $i$ in $P_t$. Then 
\begin{align*}
\E(S_i(P_2)) = \E(S_i(P_1))+ 2\mu-4m(1 - 1/\mu)-2 \ge \E(S_i(P_1)) - 2(\mu-1).
\end{align*} 
\end{lemma}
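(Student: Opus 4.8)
The plan is to compute $S_i$ directly in terms of how many population members carry each bit value at position $i$, exploiting the Wineberg--Oppacher-type formula $S_i(P)=2a(\mu-a)$ where $a$ is the number of members with a zero (or equivalently a one) at position $i$. Since the random index $d$ enters both $P_1$ and $P_2$ in the same way, I would condition on $d$ and compare term by term; alternatively, and probably cleaner, I would first compute $\E(S_i(P_1))$ as a function of $m$ and the value $y_i$, then do the same for $\E(S_i(P_2))$, and subtract.

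Concretely: write $p_i$ for the number of members of $P_t$ with value $z_i$ at position $i$; the hypothesis says $p_i=m$, and since $y_i\neq z_i$, the number of members with value $y_i$ is $\mu-m$. When we form $P_1=(P_t\cup\{y\})\setminus\{x_d\}$, the count of members with value $z_i$ becomes $m$ if $x_d$ had value $y_i$ (probability $(\mu-m)/\mu$) and $m-1$ if $x_d$ had value $z_i$ (probability $m/\mu$); note $y$ itself contributes value $y_i\neq z_i$, so $y$ is never counted among the ``$z_i$ at position $i$'' members. In all cases the resulting population has size $\mu$, so $S_i(P_1)=2c(\mu-c)$ where $c$ is the resulting $z_i$-count. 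Hence $\E(S_i(P_1)) = \tfrac{\mu-m}{\mu}\cdot 2m(\mu-m) + \tfrac{m}{\mu}\cdot 2(m-1)(\mu-m+1)$. For $P_2$ the roles swap: $y$ is replaced by $z$, which carries value $z_i$, so the $z_i$-count in $P_2\setminus\{z\}$ part behaves like the $y_i$-count did before. Working this out, $\E(S_i(P_2)) = \tfrac{m}{\mu}\cdot 2(m+1)(\mu-m-1) + \tfrac{\mu-m}{\mu}\cdot 2m(\mu-m-1)$, or whichever case split is correct after carefully tracking which of $x_d$'s value is removed.

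Then the whole statement reduces to an elementary algebraic identity: subtract the two expressions, expand, and verify the difference equals $2\mu - 4m(1-1/\mu) - 2$. The inequality $\E(S_i(P_2)) \ge \E(S_i(P_1)) - 2(\mu-1)$ then follows because $2\mu - 4m(1-1/\mu) - 2 \ge 2\mu - 4\mu(1-1/\mu) - 2 = 2\mu - 4\mu + 4 - 2 = -2\mu + 2 = -2(\mu-1)$, using $m \le \mu$ (and the coefficient of $m$ is negative, so the minimum over $m$ is at $m=\mu$). I would present the equality first and then read off the inequality as a one-line corollary.

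The main obstacle — really the only place care is needed — is bookkeeping in the case split: making sure I correctly track (i) that $y$ and $z$ differ at position $i$ so exactly one of them contributes to the $z_i$-count, (ii) which value the removed individual $x_d$ carries and with what probability, and (iii) that swapping $y$ for $z$ is precisely swapping which of the two values gets the ``+1'' from the inserted individual. The fact that $P_t\subset\plateau$ is not actually used for position $i$ specifically; it is just inherited context ensuring $\mu$ is the population size throughout, so I would not lean on it beyond that. Once the case split is set up correctly the rest is routine expansion, so I would keep those computations terse.
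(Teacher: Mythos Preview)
Your approach is essentially identical to the paper's: compute $\E(S_i(P_1))$ and $\E(S_i(P_2))$ via the case split on the bit value of $x_d$, subtract, and read off the inequality by plugging in $m\le\mu$. Your formula for $\E(S_i(P_1))$ matches the paper exactly. The only slip is in your displayed formula for $\E(S_i(P_2))$: the probabilities are swapped and the second term is miscopied. With $z$ contributing a $z_i$-bit, the $z_i$-count in $P_2$ is $m$ if $(x_d)_i=z_i$ (probability $m/\mu$) and $m+1$ if $(x_d)_i=y_i$ (probability $(\mu-m)/\mu$), giving
\[
\E(S_i(P_2)) = \tfrac{\mu-m}{\mu}\cdot 2(m+1)(\mu-m-1) + \tfrac{m}{\mu}\cdot 2m(\mu-m).
\]
You already flagged this as the step needing care, and once corrected the subtraction and the inequality go through exactly as you outlined.
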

\begin{proof}
    The quantity of $z_i$ bits at position $i$ in $P_t \cup \{z\}$ is by one larger than in $P_t \cup \{y\}$. Since $S_i(P_t) = 2m(\mu-m)$, we obtain $S_i(P_1) =2(m-1)(\mu-m+1)$ and $S_i(P_2) = 2m(\mu-m)$ if $(x_d)_i =z_i$ (which happens with probability $m/\mu$), and $S_i(P_1) =2m(\mu-m)$ and $S_i(P_2) = 2(m+1)(\mu-m-1)$ if $(x_d)_i =y_i \neq z_i$ (which happens with probability $(\mu-m)/\mu$). This yields
    \[
    \E(S_i(P_1)) = \frac{\mu-m}{\mu} \cdot 2m(\mu-m) + \frac{m}{\mu} \cdot 2(m-1)(\mu-m+1)
    \]
    and 
    \[
    \E(S_i(P_2)) = \frac{\mu-m}{\mu} \cdot 2(m+1)(\mu-m-1) + \frac{m}{\mu} \cdot 2m(\mu-m).
    \]
    Thus 
    \begin{align*}
    \E(S_i(P_2))-\E(S_i(P_1)) &= \frac{\mu-m}{\mu} \cdot 2(\mu-2m-1) + \frac{m}{\mu} \cdot 2(\mu-2m+1)\\
                      &= 2\mu-4m + \frac{4m-2\mu}{\mu}\\
                      &= 2\mu-4m\left(1 - \frac{1}{\mu}\right) -2 \geq -2(\mu-1)
    \end{align*}
    using $m \le \mu$ in the last step.
\end{proof}

In the next theorem we show that conditional on being in \plateau, the loss of diversity is not larger than \neweditx{$9k\mu\chi/(n \pr(V))$} compared to balanced uniform crossover\neweditx{, where  $\pr(V)$ is the probability of an offspring created via standard uniform crossover and mutation remaining on the plateau}. This is the key step of our analysis \neweditx{and the main result of this subsection}.

\begin{theorem}
\label{thm:crossover-specific-parents-fixed}
Suppose that $k \in o(n)$ \neweditx{ and $P_t \subset \textsc{plateau}$}. For every pair of parents $x_1, x_2 \in P_t$ the following holds.
Consider a random offspring $y \in \{0,1\}^n$ resulting from uniform crossover on $(x_1, x_2)$. Let $z$ be a random search point created from $y$ by standard bit mutation with mutation probability $\chi/n = \Theta(1/n)$. Let $d \in [\mu]$ \neweditx{be} uniform random and let $V$ be the event $z \in \plateau$. Let $P_{t+1} :=(P_t \cup \{z\}) \setminus \{x_d\}$. Let $y^b$ be the random search point resulting from balanced uniform crossover on $(x_1, x_2)$.
Then for $Q_t^b:=(P_t \cup \{y^b\}) \setminus \{x_d\}$ 
and sufficiently large $n$
\begin{align*}
\E(S(P_{t+1}) \mid V) \geq \E(S(Q_t^b)) - \frac{9k\mu\chi}{n \pr(V)}.
\end{align*}
\end{theorem}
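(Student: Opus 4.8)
The key idea, as the roadmap suggests, is to decompose standard uniform crossover followed by standard bit mutation as a two-stage process: first apply \emph{balanced} uniform crossover to $(x_1,x_2)$ to obtain $y^b$, then apply a correcting ``mutation'' that converts $y^b$ into the correct distribution of $z$. More precisely, given the shared bits of $x_1,x_2$ and the $H(x_1,x_2)$ differing positions, uniform crossover sets a uniformly random subset of the differing positions to $1$; balanced crossover sets exactly $\lfloor H(x_1,x_2)/2\rfloor$ of them to $1$. So one can couple $y$ and $y^b$ so that they differ only by flipping bits among the differing positions to fix the imbalance, and then $z$ is obtained from $y$ by standard bit mutation. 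Since $x_1,x_2$ are on the plateau, $H(x_1,x_2)=2h$ is even, so balanced crossover already yields $y^b\in\plateau$, and $Q_t^b$ differs from $P_{t+1}$ only through the extra bit-flips used to go from $y^b$ to $z$.

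First I would set up this coupling explicitly: write $z$ as obtained from $y^b$ by flipping a (random) set $D$ of positions, where $D$ consists of the ``rebalancing'' flips (bringing the number of one-bits among the $2h$ differing positions to the binomial distribution) together with the standard-bit-mutation flips. Conditioned on $V$ (i.e.\ $z\in\plateau$), the total number of flipped bits is some $2\ell$ with $\ell$ ones and $\ell$ zeros flipped, but crucially I only need a bound on $\E(|D| \mid V)$. The unconditional expected number of flips is $O(k\chi/n)$: the rebalancing contributes $\Theta(\sqrt{h})=O(\sqrt{k})$ flips in expectation, but wait — this is $\omega(k\chi/n)$, so I must be more careful. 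The right move is: the rebalancing flips do \emph{not} leave the plateau by themselves (they only move one-bits within the differing positions and $y^b$ already has the right total count... no, rebalancing changes the count among differing positions). Let me reconsider: actually $y^b$ and $y$ both have the same number of one-bits \emph{overall} as ... no. The cleanest route: condition on $V$ from the start and directly bound $\E(S(P_{t+1})\mid V) - \E(S(Q_t^b))$ position by position using Lemma~\ref{lem:diversity-general2}.

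So the core computation is: for each position $i$, compare $\E(S_i(P_{t+1})\mid V)$ with $\E(S_i(Q_t^b))$. By Lemma~\ref{lem:diversity-general2}, each position where $z$ and $y^b$ disagree costs at most $2(\mu-1)$ in expected diversity. Summing over positions, $\E(S(Q_t^b)) - \E(S(P_{t+1})\mid V) \le 2(\mu-1)\,\E(H(y^b,z)\mid V)$. Then it remains to bound $\E(H(y^b,z)\mid V)$. Unconditionally, $H(y^b,z)$ is small only in expectation \emph{after} accounting for the rebalancing term; but conditioning on $V$ is what controls this, because $V$ is precisely the event that $y^b$-to-$z$ flips exactly as many zeros as ones (relative to $\plateau$). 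Using $\E(H(y^b,z)\mid V)\le \E(H(y^b,z)\cdot\indic{V})/\pr(V)$ and the unconditional bound $\E(H(y^b,z)) = O(k\chi/n)$ — which needs the observation that, conditioned on $V$, the rebalancing and mutation flips must cancel, so the number of \emph{net} flips away from $y^b$ is governed by the mutation, yielding $\E(H(y^b,z)\,\indic V) \le \E(\#\text{mutation flips}) + (\text{correction}) = O(k\chi/n)$ — one gets $\E(H(y^b,z)\mid V) \le \tfrac{(9/2)k\chi}{n\,\pr(V)}$ for large $n$, and multiplying by $2(\mu-1)\le 2\mu$ gives the claimed $9k\mu\chi/(n\pr(V))$.

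\textbf{Main obstacle.} The delicate point is bounding $\E(H(y^b,z)\mid V)$, i.e.\ controlling the rebalancing flips under conditioning on $V$. The unconditioned rebalancing distance is of order $\sqrt{k}$, far larger than the target $O(k\chi/n)$, so the argument genuinely relies on the event $V$ to force cancellation between rebalancing flips and mutation flips. I expect one must argue that on $V$ the rebalancing (which moves one-bits among differing positions) is exactly undone by mutation flips in those same positions, so that the surviving contribution to $H(y^b,z)$ comes only from mutation acting on the $n-2h$ agreeing positions plus an error term bounded via $\pr(V)\ge\Omega(1)$-type estimates or a direct combinatorial count; pushing the constant down to $9$ for large $n$ (using $k=o(n)$, $\chi=\Theta(1)$) will require a careful but routine estimate. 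Establishing the coupling rigorously so that $P_{t+1}$ and $Q_t^b$ share the same random deletion index $d$ and the same parents is the other bookkeeping step that must be handled with care.
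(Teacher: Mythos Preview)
Your high-level decomposition (couple uniform crossover with balanced uniform crossover, then track extra bit flips) matches the paper's, and the reduction via Lemma~\ref{lem:diversity-general2} to a per-position loss of at most $2(\mu-1)$ is correct. The genuine gap is in the step
\[
\E\big(H(y^b,z)\cdot \indic{V}\big)\;=\;O(k\chi/n),
\]
which is false. Take any parents with $H(x_1,x_2)=2h\ge 2$. On the event $A_1$ (crossover produces a surplus of one extra one-bit), which has probability $\rho_1=\binom{2h}{h+1}4^{-h}=\Theta(1/\sqrt{h})$, the coupling flips one zero among the disagreeing positions to reach $y$ from $y^b$. For $z\in\plateau$ it then suffices that mutation flips exactly one one-bit and no zero-bit, an event $B_{0,1}$ of probability $\Theta(\chi)$. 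On $A_1\cap B_{0,1}$ the mutated one-bit is almost never the rebalanced position, so $H(y^b,z)=2$ with probability $1-O(1/n)$. Hence already this one term contributes $\Theta(\chi/\sqrt{h})$ to $\E(H(y^b,z)\indic{V})$, which dominates $k\chi/n$ whenever $k=o(n)$. Your ``cancellation'' intuition is about the \emph{count} of one-bits (indeed $|z|_1=|y^b|_1$ on $V$), not about the \emph{positions}; the rebalancing flip sits among the $O(k)$ disagreeing positions, while the compensating mutation flip lands almost surely elsewhere, so $H(y^b,z)$ stays of order $|\ell|$, not $O(k\chi/n)$.

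What the paper does instead is to abandon the symmetric bound from Lemma~\ref{lem:diversity-general2} for the mutation step and treat flips $1\to 0$ and $0\to 1$ asymmetrically. Since every column of $P_t$ contains at most $k\mu/(n-2k)$ zeros on average over the one-positions of $y$, flipping a random one-bit of $y$ to zero \emph{increases} $S_{P_t\setminus\{x_d\}}(\cdot)$ by at least $\mu-1-2k\mu/(n-2k)$ in expectation, rather than decreasing it by $\mu-1$. On $A_\ell\cap B_{i,i+\ell}$ with $\ell\ge 0$ this gain of $2(i+\ell)(\mu-1)$ from the $i+\ell$ flipped ones cancels both the loss $2i(\mu-1)$ from the $i$ flipped zeros and the loss $2\ell(\mu-1)$ from the rebalancing step $y^b\to y$, leaving only the small correction $(i+\ell)\cdot 4k\mu/(n-2k)$. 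Summing these corrections over all $(\ell,i)$ then gives the $O(k\mu\chi/n)$ bound. So the missing idea is precisely this directional refinement: the naive $2(\mu-1)$-per-bit loss is too pessimistic for $1\to 0$ flips on a plateau population.
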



\begin{proof}
Call a position $i$ \emph{disagreeing} if $x_1$ and $x_2$ differ at $i$ (i.e. $(x_1)_i \neq (x_2)_i$). Note that \neweditx{$H(x_1,x_2)$ is even and that} there are $H(x_1, x_2) \le 2k$ disagreeing positions. 
For $\ell \in \{-k, \ldots , k\}$ denote by $A_\ell$ the event that, in the substring $y'$ of $y$ given by all disagreeing positions, $\ones{y'}=\zeros{y'}+\ell$. Hence, $\ell$ indicates the surplus of ones generated by crossover. 
We describe a way of generating a random crossover output $y$ under the condition $A_\ell$. 
We first copy all bits that agree in $x_1$ and $x_2$ to $y$ and distribute $H(x_1,x_2)/2$ many ones and zeros uniformly at random among the positions in which $x_1$ and $x_2$ disagree. That gives an outcome $y^b$ of balanced uniform crossover.
Then we flip $\ell$ zeros in $y^b$ if $\ell \ge 0$ ($|\ell|$ ones if $\ell<0$), again chosen uniformly at random \newedit{among the disagreeing positions of $x_1$ and $x_2$}.
This yields a uniform random offspring $y$ amongst all offspring with a surplus of $\ell$ ones. Let $Q_t:=(P_t \cup \{y\}) \setminus \{x_d\}$. 
\begin{claim}
\label{lem:Qt-relation-to-Qtb}
    We have that 
    \begin{align}
    \label{Eq:ones-zeros}
    \E(S(Q_t) \mid A_\ell) \ge \E(S(Q_t^b)) -2\vert{\ell}\vert(\mu-1).
    \end{align}
\end{claim}

\begin{proofofclaim}
Let $Y(\ell,y^b)$ denote the set of bit strings obtainable from $y^b$ by flipping $\ell$ zeros. For $y \in Y(\ell,y^b)$ let $\Gamma_y:=\{i \in [n] \mid y_i \neq y_i^b\}$. Then we obtain with Lemma~\ref{lem:diversity-general2} conditioning on the event that fixed $y^b$ and $y \in Y(\ell,y^b)$ are produced
\begin{align*}
\notag \E(S(Q_t) \neweditx{\, \mid y^b,y}) &= \sum_{i \notin \Gamma_y} \E(S_i(Q_t) \neweditx{\, \mid y^b,y }) + \sum_{i \in \Gamma_y} \E(S_i(Q_t) \neweditx{ \, \mid y^b,y}) \\
&\notag \geq \sum_{i \notin \Gamma_y} \E(S_i(Q_t^b) \neweditx{\, \mid y^b,y}) + \sum_{i \in \Gamma_y} \E(S_i(Q_t^b) \neweditx{\, \mid y^b,y}) -2\vert{\ell}\vert(\mu-1)\\
&= \E(S(Q_t^b) \neweditx{\, \mid y^b,y}) -2\vert{\ell}\vert(\mu-1).
\end{align*}
Hence, we obtain with the law of total probability by summing over all possible $y_b$ and $y \in Y(\ell,y^b)$
\begin{align*}
\E(S(Q_t) \mid A_\ell) &\ge \E(S(Q_t^b)) -2\vert{\ell}\vert(\mu-1).
\end{align*}
\neweditx{This completes the proof of Claim~\ref{lem:Qt-relation-to-Qtb}.}
\end{proofofclaim}

We return to the proof of Theorem~\ref{thm:crossover-specific-parents-fixed}. For $i \in \{0, \ldots ,2k\}$ and $j \in \{0, \ldots,n-2k\}$ let $B_{ij}$ be the event of flipping $i$ zeros and $j$ ones in $y$ with the standard mutation operator. 
Let $q_{i,j,\ell}:=\pr(B_{ij} \mid A_\ell)$. Then for $i \in \neweditx{\{0, \dots, k-\ell\}}$ and $j \in \neweditx{\{0, \dots, n-k+\ell\}}$
\[
q_{i,j,\ell} = \binom{k-\ell}{i}\binom{n-k+\ell}{j} \left(\frac{\chi}{n}\right)^{i+j}\left(1-\frac{\chi}{n}\right)^{n-i-j}
\]
and $q_{i,j,\ell}=0$ otherwise. Note that $z$ has $k$ zeros (i.e. is in \plateau) if and only if $A_\ell$ and then $B_{i,i+\ell}$ occurs for some $i \in \{0, \ldots, k-\ell\}$ if $\ell \ge 0$, or $A_\ell$ and then $B_{i-\ell,i}$ occurs for some $i \in \{0, \ldots, k\}$ if $\ell < 0$.
If $A_\ell$ occurred, we have for $0 \leq i \leq k-\ell-1$ and $0 \leq j \leq n-k+\ell-1$
\begin{align}
\label{eq:Indices-q}
\notag q_{i+1,j+1, \ell} &= \binom{k-\ell}{i+1}\binom{n-k+\ell}{j+1} \left(\frac{\chi}{n}\right)^{i+j+2}\left(1-\frac{\chi}{n}\right)^{n-i-j-2}\\
&\notag =\left(\frac{\chi}{n-\chi}\right)^2\frac{k-\ell-i}{i+1} \cdot \frac{n-k+\ell-j}{j+1} \cdot q_{i,j,\ell}\\
&= O(k/n) \cdot q_{i,j,\ell}
\end{align}
due to $\binom{r}{s} = \frac{r-s+1}{s}\binom{r}{s-1}$ for $r,s \in \mathbb{N}$ with $r \geq s \geq 1$.

We continue the proof by estimating $\E(S(P_{t+1}) \mid A_\ell, B_{i,i+\ell})$ and $\E(S(P_{t+1}) \mid A_{-\ell},B_{i+\ell,i})$ for $\ell \geq 0$, encapsulated in the following nested lemma.

\begin{claim}
\label{claim-2}
\newedit{For $\ell \geq 0$} in the context of Theorem~\ref{thm:crossover-specific-parents-fixed},
\begin{align}
\label{eq:Expectation-estimation1-lemmarep}
& \E(S(P_{t+1}) \mid A_\ell,B_{i,i+\ell})  \geq  \E(S(Q^b_t)) -(i+\ell)\frac{4k\mu}{n-2k}
\end{align}
and
\begin{align}
\label{eq:Expectation-estimation2-lemmarep}
\E(S(P_{t+1}) \mid A_{-\ell},B_{i \neweditx{+} \ell,i}) \geq \E(S(Q^b_t)) -4\ell\mu - \frac{4ki\mu}{n-2k}.
\end{align}
\end{claim}
\begin{proofofclaim}
\neweditx{We start with~\eqref{eq:Expectation-estimation1-lemmarep}.}
Let $y$ be a possible outcome of crossover on $x_1,x_2$ with a surplus of $\ell$ ones. We will use that 
\begin{align*}
 \E(S(P_{t+1}) \mid y,B_{i,i+\ell}) &= \E(S(P_t \setminus \{x_d\}) \mid y,B_{i,i+\ell}) +2 \E(S_{P_t\setminus\{x_d\}}(z) \mid y,B_{i,i+\ell}) \\
&= \E(S(Q_t) \mid y) - 2\E(S_{P_t\setminus\{x_d\}}(y)) +2\E(S_{P_t\setminus\{x_d\}}(z) \mid y,B_{i,i+\ell}).
\end{align*}
So we need to estimate the difference $\Delta := S_{P_t\setminus\{x_d\}}(z)- S_{P_t\setminus\{x_d\}}(y)$ when we obtain $z$ from $y$ by flipping $i$ zeros and $i+\ell$ ones. Flipping a zero into a one can contribute at most $\mu-1$ to $\Delta$. Thus, the total contribution of $i$ flipping zeros to $\Delta$ is at least $-2i(\mu-1)$. \neweditx{Denote by $O_t \subset [n]$ the positions $i$ in which $y_i = 1$ and note that $|O_t| = n-k+\ell \ge n-2k$.} To estimate the contribution of flipping a one into zero, 
%
consider a uniformly random one in $y$ \neweditx{at position $j \in O_t$. Flipping this one contributes $\mu-1-2N_j$ to $\Delta$ where $N_j$ counts the number of zeros in $P_t \setminus \{x_d\}$ at position $j$, because each of the $N_j$ zeros contributes $-1$ and each of the $\mu-1-N_j$ ones contributes $+1$. Conditional on $B_{i, i+\ell}$, that is, flipping $i+\ell$ ones, by symmetry each one flips with probability $(i+\ell)/|O_t| \geq (i+\ell)/(n-2k)$, and so we obtain by summing over the ones in $y$ and using linearity of expectation (observing that $\sum_{j \in O_t} N_j \leq k \mu$ since the total number of zeros in $P_t\setminus\{x_d\}$ over \emph{all} positions is $(\mu-1) k$)}

\begin{align*}
& \E(S(P_{t+1}) \mid y,B_{i,i+\ell}) \\
&=(S(Q_t) \mid y)+2\E(\Delta \mid y, B_{i,i+\ell})\\
&\ge \E(S(Q_t) \mid y) -2i(\mu-1) + \neweditx{\frac{2(i+\ell)}{|O_t|} \cdot \E\Big(\sum_{j \in O_t}(\mu-1-2N_j)\Big)}\\
&\ge \neweditx{\E(S(Q_t) \mid y) -2i(\mu-1) + 2(i+\ell)\Bigl((\mu-1) -\frac{1}{|O_t|}\E\Big(\sum_{j \in O_t}2N_j\Big)\Bigr)}\\
&\geq  \E(S(Q_t)) \mid y) -2i(\mu-1) + 2(i+\ell)\left(\mu-1-\frac{2k\mu}{n-2k}\right).
\end{align*}
Since $\pr(B_{i,i+\ell} \mid y) = \pr(B_{i,i+\ell} \mid A_\ell)$ (for every possible outcome $y$ with $\ones{y}=\zeros{y}+\ell$ on disagreeing positions of $x_1,x_2$ the event $B_{i,i+\ell}$ occurs with the same probability) we obtain by the law of total probability and Inequality~\eqref{Eq:ones-zeros} 
\begin{align}
\label{eq:Expectation-estimation1}
\E(S(P_{t+1}) \mid A_\ell,B_{i,i+\ell}) 
&\notag \geq \E(S(Q_t) \mid A_\ell) -2i(\mu-1) + 2(i+\ell)\left(\mu-1-\frac{2k\mu}{n-2k}\right)\\
&\notag \geq  \E(S(Q^b_t)) -2(i+\ell)(\mu-1) + 2(i+\ell)\left(\mu-1-\frac{2k\mu}{n-2k}\right)\\
&= \E(S(Q^b_t)) -(i+\ell)\left(\frac{4k\mu}{n-2k}\right).
\end{align}
Now we estimate $\E(S(P_{t+1}) \mid A_{-\ell}, B_{i+\ell,i})$ for $\ell > 0$. Let $y$ be a possible outcome of crossover with $\ones{y'}=\zeros{y'}-\ell$. By an analogous argument as above flipping $i+\ell$ zero bits decreases the diversity by $2(i+\ell)(\mu-1)$ in $S(P_{t+1})$ compared to $S(Q_t \mid y)$ and flipping $i$ one\newedit{-}bits increases the diversity by $2i(\mu-1-\frac{2k\mu}{n-2k})$ in expectation. Hence,
\begin{align*}
\E(S(P_{t+1}) \mid y,B_{i+\ell,i}) &\notag \geq \E(S(Q_t) \mid y) -2(i+\ell)(\mu-1) + 2i\left(\mu-1-\frac{2k\mu}{n-2k}\right)\\
&= \E(S(Q_t) \mid y) -2\ell(\mu-1) - \frac{4ki\mu}{n-2k}, 
\end{align*}
and similar as above
\begin{align}\label{eq:Expectation-estimation2}
\notag \E(S(P_{t+1}) \mid A_{-\ell},B_{i-\ell,i}) &
\geq \E(S(Q_t) \mid A_{-\ell}) -2\ell(\mu-1) - \frac{4ki\mu}{n-2k}\\
&\geq \E(S(Q^b_t)) -4\ell(\mu-1) - \frac{4ki\mu}{n-2k}.
\end{align}
The statement \neweditx{of Claim~\ref{claim-2}} now follows by bounding $-4\ell(\mu-1) \ge -4\ell\mu$.
\end{proofofclaim}

The proof of Theorem~\ref{thm:crossover-specific-parents-fixed} continues. Now we estimate $\E(S(P_{t+1}))$ by $S(P_t)$. Denote $\rho_\ell := \pr(A_\ell)$. By the law of total expectation and using~\eqref{eq:Expectation-estimation1-lemmarep} (for $\ell \ge 0$) and~\eqref{eq:Expectation-estimation2-lemmarep} (for $\ell < 0$), we have 
\begin{align}
\label{eq:Inequality-Expectation-Total}
\E(S(P_{t+1}) \mid V)\pr(V) 
&\notag= \sum\nolimits_{\ell=0}^k \sum\nolimits_{i=0}^{k-\ell}\E(S(P_{t+1}) \mid A_\ell,B_{i,i+\ell})\rho_\ell q_{i,i+\ell,\ell}\\ 
&\notag+ \sum\nolimits_{\ell=1}^{k} \sum\nolimits_{i=0}^{k}\E(S(P_{t+1}) \mid A_{-\ell},B_{i+\ell,i})\rho_{-\ell}q_{i+\ell,i,-\ell}\\
&\geq \E(S(Q^b_t))\pr(V) - T_1 - T_2,
\end{align}
where 
\begin{align*}
&T_1:=\frac{4k\mu}{n-2k}\sum\nolimits_{\ell=0}^k\sum\nolimits_{i=0}^{k-\ell}(i+\ell)\rho_\ell q_{i,i+\ell,\ell} \text{ and }\\
&T_2:= \sum\nolimits_{\ell=1}^k\sum\nolimits_{i=0}^k\left(4\ell\mu+\frac{4k i \mu}{n-2k}\right)\rho_{-\ell}q_{i+\ell,i,-\ell}.
\end{align*}
\begin{claim}
\label{lem:tech-estimate}
    \neweditx{We have that $T_1,T_2 \leq 4k\mu\chi(1+O(k/n))/n$.}
\end{claim}
\begin{proofofclaim}
To bound $T_1$ and $T_2$ we use $\rho_\ell \leq 1$ and $q_{0,\ell,\ell} \leq \binom{n}{\ell}(\chi/n)^\ell(1-\chi/n)^{n-\ell}$. Moreover, note that $\sum\nolimits_{\ell \ge 0} q_{0,\ell,\ell} \le 1$ and $\sum\nolimits_{\ell \ge 0} \ell q_{0,\ell,\ell} \le \chi$. Hence, with Inequality~\eqref{eq:Indices-q}, 
\begin{align*}
    T_1 &\leq \frac{4k\mu}{n-2k}\left(\sum\nolimits_{\ell=0}^k \ell q_{0,\ell,\ell} + \sum\nolimits_{\ell=0}^k\sum\nolimits_{i=1}^{k-\ell}(i+\ell)q_{i,\ell+i,\ell}\right) \\
    &\leq \frac{4k\mu}{n-2k}\left(\sum\nolimits_{\ell=0}^k \ell q_{0,\ell,\ell} + \sum\nolimits_{\ell=0}^k\sum\nolimits_{i=1}^{k}(i+\ell)q_{0,\ell,\ell}O(k^i/n^i)\right) \\
    &\neweditx{\leq} \frac{4k\mu}{n-2k}\left(\chi + \sum\nolimits_{i=1}^{k}(i+\chi)O(k^i/n^i)\right) \\
    &= \frac{4k\mu \chi}{n-2k}\left(1 + \sum\nolimits_{i=1}^{k}(1+i/\chi)O(k^i/n^i)\right) \\
    &= \frac{4k\mu\chi}{n}(1+O(k/n)).
\end{align*}
Note also that $q_{\ell,0,-\ell} \leq \binom{k}{\ell}(\chi/n)^{\ell}(1-\chi/n)^{k-\ell}$\neweditx{,} which implies $\sum\nolimits_{\ell \geq 0}q_{\ell,0,-\ell} \leq 1$ and $\sum\nolimits_{\ell \geq 0}\ell q_{\ell,0,-\ell} \leq k\chi/n$
and thus again with Inequality~\eqref{eq:Indices-q}
\begin{align*}
    T_2 &\leq \sum\nolimits_{\ell=1}^k\sum\nolimits_{i=0}^{k}\left(4\ell\mu + \frac{4ki \mu}{n-2k}\right)q_{i+\ell,i,-\ell}\\
    &\leq \sum\nolimits_{\ell=1}^k4\ell\mu q_{\ell,0,-\ell} + \sum\nolimits_{\ell=1}^k\sum\nolimits_{i=1}^{k}\left(4\ell\mu + \frac{4ki \mu}{n-2k}\right)q_{i+\ell,i,-\ell}\\
    &\leq \frac{4k\chi\mu}{n} + \sum\nolimits_{\ell=1}^{k}\sum\nolimits_{i=1}^{k}\left(4\ell\mu+\frac{4ki\mu}{n-2k}\right)q_{\ell,0,-\ell}O(k^i/n^i)\\
    &\leq \frac{4k\chi\mu}{n} + \sum\nolimits_{i=1}^{k}\left(\frac{4k\chi\mu}{n}+\frac{4ki\mu}{n-2k}\right)O(k^i/n^i)\\
    &\leq \frac{4k\chi\mu}{n} \left(1 + \sum\nolimits_{i=1}^{k}\left(1+\frac{i}{\chi(1-2k/n)}\right)O(k^i/n^i)\right)\\
    &\leq \frac{4k\mu \chi}{n}(1+O(k/n))
\end{align*}
since the latter sum converges owing to $k/n \leq 1/4$. \neweditx{This completes the proof of Claim~\ref{lem:tech-estimate}.}
\end{proofofclaim}

Dividing~\eqref{eq:Inequality-Expectation-Total} by $\pr(V)$, plugging in the bound for $T_1$ and $T_2$, and increasing the constant from $4$ to $4.5$ in $T_1$ and $T_2$ to swallow \neweditx{the $(1+O(k/n))$ factor in $T_1$ and $T_2$} concludes the proof of Theorem~\ref{thm:crossover-specific-parents-fixed}.
\end{proof}

\subsection{Analysing the Equilibrium for the Population Diversity of the \mulgavar on Plateau}
\label{sec:equilibrium-state-crossover}

\neweditx{Continuing with our roadmap, we now use the diversity estimate from Theorem~\ref{thm:crossover-specific-parents-fixed} to analyse the dynamics of $S(P_t)$ for}
the \mulgavar stated in Algorithm \ref{alg:steady-state-GA}, mostly in the following situation.
\begin{situation}\label{sit:mulga}
Let $k \in o(n)$, $\lambdac \ge 6\sqrt{k}e^{\chi}\ln(\mu)$, and $\chi=\Theta(1)$. Consider the \mulgavar as in Algorithm~\ref{alg:steady-state-GA}, using standard bit mutation with mutation probability $\chi/n$ and uniform crossover with crossover probability $p_c$ on $\jump_k'$, and suppose the current population is $P_t \subset \plateau$.
\end{situation}

The following lemma is crucial for our analysis. It bounds the expected diversity in the next population in a generation with crossover. The following lower bound on $\E(S(P_{t+1}))$ shows that the current diversity $S(P_t)$ is diminished by a factor of $1-3/\mu^2$. In addition, we subtract an error term of $-9k\mu\chi/n$.
\begin{lemma}
\label{lem:crossover-specific}
In Situation~\ref{sit:mulga}, consider one generation $\neweditx{t}$ of the \mulgavar \newedit{when crossover occurs}.
Then
\begin{align*}
\E(S(P_{t+1})) \geq \big(1-\tfrac{3}{\mu^2}\big)S(P_t) - 9k\mu \chi/n.
\end{align*}
\end{lemma}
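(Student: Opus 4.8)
The plan is to fix the random pair of parents $x_1,x_2\in P_t$ chosen in line~\ref{line:crossover-in-steady-state-GA}, bound $\E(S(P_{t+1})\mid x_1,x_2)$, and then average over the parents. Write $V$ for the event that a single application of uniform crossover to $(x_1,x_2)$ followed by standard bit mutation lands on \plateau, set $v\coloneqq \pr(V\mid x_1,x_2)$, and let $W$ be the event that at least one of the $\lambdac$ offspring produced in this generation lies on \plateau, so $\pr(W\mid x_1,x_2)=1-(1-v)^{\lambdac}$. Since \plateau is exactly the set of maximisers of $\jump_k'$ and $P_t\subset\plateau$, in case $W$ the best offspring $y'$ lies on \plateau and is accepted, while in case $\overline W$ it is rejected and $S(P_{t+1})=S(P_t)$; also the deleted individual is uniform in $P_t$ (all members have minimal=maximal fitness), so Lemma~\ref{lem:how-E-S-P-t-plus-1-is-derived} applies to the insertion.

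Two facts drive the argument. First, $v=\Omega(1/\sqrt k)$ for every pair on \plateau: uniform crossover yields a balanced (hence on-\plateau before mutation) offspring with probability $\binom{H(x_1,x_2)}{H(x_1,x_2)/2}2^{-H(x_1,x_2)}\ge\binom{2k}{k}2^{-2k}=\Omega(1/\sqrt k)$, and standard bit mutation then preserves the number of ones with probability $\sum_\ell p_\ell=e^{-\chi}(1+o(1))=\Omega(1)$ by Lemma~\ref{lem:Estimate-p-binomial}. Combined with $\lambdac\ge 6\sqrt k\, e^{\chi}\ln(\mu)$ this gives $\lambdac v=\Omega(\ln\mu)$ and hence $\pr(\overline W\mid x_1,x_2)=(1-v)^{\lambdac}\le e^{-\lambdac v}\le \mu^{-c}$ for a constant $c>2$ --- this is exactly where the size of $\lambdac$ is used. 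Second, an exchangeability fact: conditioned on $W$, the selected offspring $y'$ (uniform among the plateau offspring) is distributed as a single crossover+mutation offspring conditioned on $V$. Concretely, writing $N$ for the number of plateau offspring, one has $\E\big(\tfrac1N\,\mathds{1}\{V_1\}\,\cdot\big)=\E(\mathds{1}\{V_1\}\cdot)\cdot\E\big(\tfrac1{1+\Bin(\lambdac-1,v)}\big)=\E(\mathds{1}\{V_1\}\cdot)\cdot\tfrac{1-(1-v)^{\lambdac}}{\lambdac v}$, so that $\E(S_{P_t}(y')\mid W,x_1,x_2)=\E(S_{P_t}(z)\mid V,x_1,x_2)$ for a single offspring $z$.

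Plugging this through Lemma~\ref{lem:how-E-S-P-t-plus-1-is-derived}, then invoking Theorem~\ref{thm:crossover-specific-parents-fixed} (with Lemma~\ref{lem:crossover-general} describing the balanced-crossover comparison population $Q_t^b$) and using $\pr(W\mid x_1,x_2)/v\ge 1$, a short case distinction on the sign of $v\,\E(S_{P_t}(y^b)\mid x_1,x_2)$ minus the error term yields a per-pair bound of the shape
\[
\E(S(P_{t+1})\mid x_1,x_2)\;\ge\; S(P_t)+\pr(W\mid x_1,x_2)\big(\E(S(Q_t^b)\mid x_1,x_2)-S(P_t)\big)-\tfrac{\pr(W\mid x_1,x_2)}{v}\cdot\tfrac{9k\mu\chi}{n}.
\]
Averaging over the uniform parents, Lemma~\ref{lem:crossover-general} (balanced uniform crossover is diversity\=/neutral) gives $\E_{x_1,x_2}\!\big(\E(S(Q_t^b)\mid x_1,x_2)\big)=(1-\tfrac2{\mu^2})S(P_t)$; together with $\pr(W\mid x_1,x_2)=1-O(\mu^{-c})$ and the crude bounds $0\le \E(S(Q_t^b)\mid x_1,x_2),S(P_t)\le 2k\mu^2$, the term $\E_{x_1,x_2}\!\big(\pr(W)(\E(S(Q_t^b))-S(P_t))\big)$ equals $-\tfrac2{\mu^2}S(P_t)\pm O(k/\mu)$. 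The remaining error $\E_{x_1,x_2}\!\big(\tfrac{\pr(W)}{v}\big)\cdot\tfrac{9k\mu\chi}{n}$ is controlled by $\pr(W)/v\le 1/v=O(\sqrt k)$ and, more sharply, by the fact that the diversity loss inflicted by crossover imbalance and mutation is itself proportional to $v$ --- the ``$T_1+T_2$'' quantities inside the proof of Theorem~\ref{thm:crossover-specific-parents-fixed} scale like $1/\sqrt{H(x_1,x_2)}$, matching the order of $v$ --- so that $\tfrac{\pr(W)}{v}\cdot\tfrac{9k\mu\chi}{n}=O(k\mu\chi/n)$ uniformly over the pair. Folding the $O(k/\mu)$ residue into the slack $\tfrac1{\mu^2}S(P_t)$ between $1-\tfrac2{\mu^2}$ and $1-\tfrac3{\mu^2}$ gives the claim.

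The step I expect to be the real obstacle is precisely this last error bookkeeping: Theorem~\ref{thm:crossover-specific-parents-fixed} carries a factor $1/\pr(V)$, and for complementary parents $\pr(V)$ is only $\Theta(1/\sqrt k)$, so a literal black\=/box use loses a factor $\sqrt k$ relative to the target $9k\mu\chi/n$. Resolving this requires exploiting that the diversity loss and $\pr(V)$ vanish at the same rate in $H(x_1,x_2)$ (or re-running the relevant estimate inside the proof of Theorem~\ref{thm:crossover-specific-parents-fixed} to keep the Hamming\=/distance dependence), and then verifying the rounding/size conditions on $\lambdac$ and $\mu$ that make $\pr(\overline W)$ genuinely polynomially small.
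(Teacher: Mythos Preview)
Your overall architecture matches the paper's: fix the parent pair, invoke the symmetry observation that the accepted offspring conditional on $W$ (the paper calls it $V_{\lambda_c}$) has the same distribution as a single crossover--mutation offspring conditional on $V$, feed this into Theorem~\ref{thm:crossover-specific-parents-fixed}, and average over the uniform pair via Lemma~\ref{lem:crossover-general}. The paper's execution is considerably leaner, though. Rather than retaining $S(P_t)\Pr(\overline W)$, doing a sign case split, and then tracking an $O(k/\mu)$ residue, the paper simply drops the $\overline{V_{\lambda_c}}$ contribution by non-negativity of $S$, obtains the per-pair bound
\[
\E(S(P_{t+1})\mid x_1,x_2)\;\ge\;\bigl(1-\tfrac{1}{\mu^2}\bigr)\,\E(S(Q_t^b)\mid x_1,x_2)-\tfrac{9k\mu\chi}{n}
\]
from $\Pr(V_{\lambda_c}\mid x_1,x_2)\ge 1-1/\mu^2$, averages over parents, and finishes with $(1-1/\mu^2)(1-2/\mu^2)\ge 1-3/\mu^2$. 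No residue bookkeeping is needed. Two small corrections to your sketch: the assumed $\lambda_c\ge 6\sqrt{k}\,e^{\chi}\ln\mu$ gives only $\Pr(\overline W)\le \mu^{-2}$, not $\mu^{-c}$ with $c>2$; and your ``crude bounds $\le 2k\mu^2$'' would yield an $O(k)$ residue, not $O(k/\mu)$---you actually need the sharper $|\E(S(Q_t^b)\mid x_1,x_2)-S(P_t)|=O(k\mu)$, which does follow from Lemma~\ref{lem:how-E-S-P-t-plus-1-is-derived}.

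Regarding the $1/\Pr(V)$ obstacle you single out at the end: your diagnosis is accurate. A black-box application of Theorem~\ref{thm:crossover-specific-parents-fixed} followed by multiplication by $\Pr(V_{\lambda_c})$ produces an error $\tfrac{\Pr(V_{\lambda_c})}{\Pr(V)}\cdot\tfrac{9k\mu\chi}{n}$, and $\Pr(V_{\lambda_c})/\Pr(V)$ can be $\Theta(\sqrt k)$ for distant parents. The paper's write-up passes over this step without elaboration. Your suggested remedy---that the actual $T_1+T_2$ quantities inside the proof of Theorem~\ref{thm:crossover-specific-parents-fixed} scale like $\Theta(1/\sqrt{H(x_1,x_2)})$, cancelling the $1/\Pr(V)$---is heuristically right (the dominant summands carry the factor $\rho_\ell=\binom{2d}{d+\ell}2^{-2d}=\Theta(1/\sqrt d)$ for small $\ell$, which Claim~\ref{lem:tech-estimate} discards via $\rho_\ell\le 1$), but making it rigorous means re-opening that claim and keeping the $\rho_\ell$-dependence rather than invoking the theorem as stated. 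That is a real piece of additional work, and as written your proposal does not carry it out.
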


\begin{proof}
Let $x_1,x_2$ be the two parents. 
    By the law of total probability we obtain
\begin{align}
\label{eq:total-probability}
\E(S(P_{t+1})) = \frac{1}{\mu^2}\sum\nolimits_{(x_1,x_2) \in P_t^2} \E(S(P_{t+1}) \mid x_1,x_2).
\end{align}
    Let $V$ be the event that a fixed offspring $z$ is in \plateau conditioned on the choice of the parents, and let $V_\lambdac$ be the event that at least one out of the $\lambdac$ offspring is in \plateau. Note that $\E(S(P_{t+1}) \mid x_1,x_2,V) = \E(S(P_{t+1}) \mid x_1,x_2,V_\lambdac)$. Then again with the law of total probability for two parents $x_1,x_2$ and Theorem~\ref{thm:crossover-specific-parents-fixed}
\begin{align*}
    \notag\E(S(P_{t+1}) \mid x_1,x_2)&\geq \E(S(P_{t+1}) \mid x_1,x_2,V_\lambdac)\Pr(V_\lambdac \mid x_1,x_2)\\ 
    &\notag \geq \E(S(Q_t^b) \mid x_1,x_2)\Pr(V_\lambdac \mid x_1,x_2)-\tfrac{9k\mu \chi}{n},
\end{align*}
\newedit{where $Q_t$ and $Q_t^b$ are defined as in Lemma~\ref{lem:crossover-general} and Theorem~\ref{thm:crossover-specific-parents-fixed}.}
Note that the probability of $V$ is at least 
the probability that crossover generates an offspring in \plateau and mutation does not change it, hence $\Pr(V) \ge \binom{2k}{k}2^{-2k}(1-\chi/n)^n \geq (1-\chi/n)^n/(2\sqrt{k}) \geq e^{-\chi}/(3\sqrt{k})$. So we obtain for $\lambdac \geq 6\sqrt{k}e^{\chi} \ln(\mu)$ due to $1-x \leq e^{-x}$,
\begin{align*}
    \Pr(V_\lambdac \mid x_1,x_2) &= 1-\left(1-\pr(V)\right)^\lambdac 
     \ge  1-\left(1-e^{-\chi}/(3\sqrt{k})\right)^{6\sqrt{k}e^{\chi}\ln(\mu)} \\
    &\geq 1-e^{-2\ln(\mu)} = 1-1/\mu^2.
\end{align*}
So we obtain for every $(x_1,x_2) \in P_t^2$
\begin{align}
\label{eq:estimation-expectation}
\E(S(P_{t+1}) \mid x_1,x_2) & \geq \big(1-\tfrac{1}{\mu^2}\big)\E(S(Q_t^b) \mid x_1,x_2)-\tfrac{9k\mu \chi}{n}.
\end{align}
Since \emph{balanced uniform crossover} is diversity-neutral by~\cite[Theorem~26]{Lengler2024}, we obtain with Lemma~\ref{lem:crossover-general}
\begin{align} 
\label{eq:balanced-crossover}
\frac{1}{\mu^2} \sum\nolimits_{(x_1,x_2) \in P_t^2}\E(S(Q_t^b) \mid x_1,x_2) = \E(\neweditx{S(Q_t^b)}) = (1-\tfrac{2}{\mu^2})S(P_t).
\end{align}
Plugging~\eqref{eq:estimation-expectation} and~\eqref{eq:balanced-crossover} into~\eqref{eq:total-probability} and using $(1-1/\mu^2)(1-2/\mu^2) \ge (1-3/\mu^2)$ gives the lemma.
\end{proof}

The proof of Lemma~\ref{lem:crossover-specific} exploits the $\lambdac$ competing crossover offspring and the assumption $\lambdac \ge 6\sqrt{k}e^{\chi}\ln(\mu)$ in order to amplify the probability $\Prob(V_{\lambdac})$ of generating an offspring on the plateau. We remark that this is necessary to obtain the statement from Lemma~\ref{lem:crossover-specific} as the following lemma indicates. A complete proof is found in the appendix.
\begin{lemma}
\label{lem:negative-standard-GA}
    Consider a crossover step in the original \muga (or, equivalently, the \mulgavar with $\lambdac=1$) with \neweditx{$\chi=O(1)$}, a population $P_t:=\{x_1, \ldots ,x_\mu\}$ of size $\mu$ (w.\,l.\,o.\,g.\ a multiple of 4) and $\mu/4$ individuals $x_1, \ldots , x_{\mu/4}$ of the form $0^{\sqrt{k}}1^{n-k}0^{k-\sqrt{k}}$, and $3\mu/4$ individuals $x_{\mu/4+1}, \ldots , x_\mu$ of the form $1^{\sqrt{k}}0^{\sqrt{k}}1^{n-k-\sqrt{k}}0^{k-\sqrt{k}}$. Then $S(P_t) = 3\sqrt{k}\mu^2/4$ and for $k \in \omega(1)$
    \[
    \E(S(P_{t+1})) \leq S(P_t) - (1-o(1))\cdot \neweditx{3\sqrt{k} e^{-\chi} \mu/32}.
    \]
\end{lemma}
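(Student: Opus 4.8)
The plan is to condition on the genotypes of the two parents selected in the crossover step and apply Lemma~\ref{lem:how-E-S-P-t-plus-1-is-derived} in each case. Write $a := 0^{\sqrt{k}}1^{n-k}0^{k-\sqrt{k}}$ and $b := 1^{\sqrt{k}}0^{\sqrt{k}}1^{n-k-\sqrt{k}}0^{k-\sqrt{k}}$; both have exactly $n-k$ ones (so $P_t\subset\plateau$) and they disagree precisely on the first $2\sqrt{k}$ positions, hence $H(a,b)=2\sqrt{k}$ while $H(a,a)=H(b,b)=0$. This gives $S(P_t)=2\cdot\tfrac{\mu}{4}\cdot\tfrac{3\mu}{4}\cdot 2\sqrt{k}=\tfrac{3\sqrt{k}\mu^2}{4}$ as stated, as well as $S_{P_t}(a)=\tfrac{3\mu}{4}\cdot 2\sqrt{k}=\tfrac{3\sqrt{k}\mu}{2}$ and $S_{P_t}(b)=\tfrac{\mu}{4}\cdot 2\sqrt{k}=\tfrac{\sqrt{k}\mu}{2}$. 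Since every member of $P_t$ has optimal $\jump_k'$-fitness, an offspring $y'$ is accepted iff $y'\in\plateau$, and then Lemma~\ref{lem:how-E-S-P-t-plus-1-is-derived} gives $\E(S(P_{t+1})-S(P_t)\mid y')=\tfrac{2(\mu-1)}{\mu}S_{P_t}(y')-\tfrac{2}{\mu}S(P_t)$; otherwise $S(P_{t+1})=S(P_t)$.

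Next I would bound $S_{P_t}(y')$ in the three cases: $AA$ (both parents equal $a$, probability $\tfrac{1}{16}$), $BB$ (both parents equal $b$, probability $\tfrac{9}{16}$), $AB$ (one of each, probability $\tfrac{3}{8}$), using the triangle inequality together with the number $F$ of bits flipped by the mutation. In case $AA$, uniform crossover of two copies of $a$ returns $a$, so $H(y',a)=F$ and $H(y',b)\le 2\sqrt{k}+F$, whence $S_{P_t}(y')\le\tfrac{\mu}{4}F+\tfrac{3\mu}{4}(2\sqrt{k}+F)=\tfrac{3\sqrt{k}\mu}{2}+\mu F$ and, conditional on $y'\in\plateau$ and bounding $\tfrac{2(\mu-1)}{\mu}\le 2$, the change is at most $\tfrac{3\sqrt{k}\mu}{2}+2\mu F$. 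Case $BB$ gives analogously $S_{P_t}(y')\le\tfrac{\sqrt{k}\mu}{2}+\mu F$ and change at most $-\tfrac{\sqrt{k}\mu}{2}+2\mu F$; in case $AB$ the crossover output agrees with both parents off the $2\sqrt{k}$ disagreeing positions, so $H(y',a),H(y',b)\le 2\sqrt{k}+F$, giving $S_{P_t}(y')\le 2\sqrt{k}\mu+\mu F$ and change at most $\tfrac{5\sqrt{k}\mu}{2}+2\mu F$. Throughout, $\E[\,2\mu F\,\indic{y'\in\plateau}\,]\le 2\mu\E[F]=2\mu\chi=O(\mu)=o(\sqrt{k}\mu)$, so these $\mu F$ contributions are negligible against the target.

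It then remains to estimate $\Pr(y'\in\plateau\mid\cdot)$. In cases $AA$ and $BB$, $y'$ stays on $\plateau$ iff the mutation flips equally many zeros and ones: the no-flip event has probability $(1-\chi/n)^n=e^{-\chi}(1\pm o(1))$, and any balanced mutation flipping at least two bits has probability $\binom{k}{1}\binom{n-k}{1}(\chi/n)^2(1-\chi/n)^{n-2}+\cdots=O(k/n)=o(1)$ using $k=o(n)$; hence $\Pr(y'\in\plateau\mid AA)\le e^{-\chi}(1+o(1))$ and $\Pr(y'\in\plateau\mid BB)\ge(1-\chi/n)^n\ge e^{-\chi}(1-o(1))$. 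In case $AB$, let $M\sim\Bin(2\sqrt{k},1/2)$ be the number of ones produced by uniform crossover on the $2\sqrt{k}$ disagreeing positions (independent of $F$); since mutation changes the number of ones by at most $F$, membership in $\plateau$ forces $|M-\sqrt{k}|\le F$, and with $\Pr(M=m)\le\binom{2\sqrt{k}}{\sqrt{k}}2^{-2\sqrt{k}}=\Theta(k^{-1/4})$ and $\E[F]=\chi=O(1)$ one gets $\Pr(y'\in\plateau\mid AB)=O(k^{-1/4})=o(1)$, so the whole $AB$ contribution is $\tfrac{3}{8}\bigl(\tfrac{5\sqrt{k}\mu}{2}\cdot o(1)+O(\mu)\bigr)=o(\sqrt{k}\mu)$. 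Combining by the law of total probability, $\E(S(P_{t+1}))-S(P_t)\le\tfrac{1}{16}\cdot\tfrac{3\sqrt{k}\mu e^{-\chi}}{2}(1+o(1))-\tfrac{9}{16}\cdot\tfrac{\sqrt{k}\mu e^{-\chi}}{2}(1-o(1))+o(\sqrt{k}\mu)=-\tfrac{3\sqrt{k}e^{-\chi}\mu}{16}(1-o(1))$, which is at most $-(1-o(1))\cdot 3\sqrt{k}e^{-\chi}\mu/32$, the claimed bound.

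I expect the main obstacle to be keeping the factor $e^{-\chi}$ attached to the positive $AA$ contribution: one must exploit that $y'$ can be the highly imbalanced genotype $a$ (the only search point here with $S_{P_t}(y')$ as large as $\tfrac{3\sqrt{k}\mu}{2}$) only when the mutation flips no bit, and that every other way of remaining on $\plateau$ after crossing two copies of $a$ has probability $o(1)$. A cruder bound $\Pr(y'\in\plateau\mid AA)\le 1$ would establish the statement only for $\chi\le\ln 2$ rather than for all $\chi=O(1)$. The $AB$ estimate is the other delicate point, where the Stirling bound $\binom{2\sqrt{k}}{\sqrt{k}}2^{-2\sqrt{k}}=\Theta(k^{-1/4})$ is what makes mixed-parent crossovers contribute merely a lower-order term; the hypotheses $k=\omega(1)$ and $k=o(n)$ are used here and to absorb the $O(\mu)$ error terms.
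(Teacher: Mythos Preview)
Your proof is correct and follows essentially the same route as the paper: condition on the three parent-pair types, handle the identical-parent cases $AA,BB$ by tracking the acceptance probability $\Pr(V)$, and kill the mixed case $AB$ via the Stirling bound $\binom{2\sqrt{k}}{\sqrt{k}}2^{-2\sqrt{k}}=\Theta(k^{-1/4})$; the paper routes the computation through the intermediate population $Q_t^b$ and Lemma~\ref{lem:diversity-general2}, whereas you apply Lemma~\ref{lem:how-E-S-P-t-plus-1-is-derived} directly to $S_{P_t}(y')$, but these are equivalent bookkeepings. One small point: your separate upper bound $\Pr(V\mid AA)\le e^{-\chi}(1+o(1))$ uses $k=o(n)$, which the lemma does not state; the paper sidesteps this by exploiting $\Pr(V\mid AA)=\Pr(V\mid BB)$ exactly (both equal $\sum_\ell p_\ell$), so that the combined coefficient $\tfrac{1}{16}\cdot\tfrac{3\sqrt{k}\mu}{2}-\tfrac{9}{16}\cdot\tfrac{\sqrt{k}\mu}{2}<0$ only requires a lower bound on $\Pr(V)$ --- you could do the same and drop the extra hypothesis (your side remark about $\chi\le\ln 2$ should read $\chi<\ln 3$, but this does not affect the argument).
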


Note that in Lemma~\ref{lem:crossover-specific} the loss of diversity is at most $\frac{3}{\mu^2}S(P_t) + 9k\mu \chi/n= 9\sqrt{k}/4+9k\mu \chi/n$ in expectation which is roughly by a factor of at least $\Omega(\min\{\mu,n/\sqrt{k}\}) = \Omega(n/\sqrt{k})$ smaller than the loss of diversity stated in Lemma~\ref{lem:negative-standard-GA} if $\mu=\Omega(n)$ and $k=\omega(1)$. This is why our analysis is not easily extendable to the \muga. We suspect that this would require more fine-grained analysis of the population than the diversity measure $S(P_t)$ provides. \smallskip


Now we are able to derive a lower bound for $\E(S(P_{t+1}) \mid S(P_t))$ for the \mulgavar.

\begin{corollary}
\label{cor:diversity-GA-lower-bound}
In Situation~\ref{sit:mulga},
let 
\begin{align}\label{eq:alpha}
\alpha:=(1-p_c)4\mu \left(\sum\nolimits_{\ell=0}^k \ell p_\ell\right) - 9p_ck\mu  \chi /n
\end{align}
and
\begin{align}\label{eq:delta}
\delta:=\frac{1}{\mu^2}\left(3p_c + (2-2p_c)\sum\nolimits_{\ell=0}^k p_\ell \left(1 + \frac{(\mu-1)\ell n}{ k(n-k)} \right) \right)
\end{align}
where $p_\ell$ denotes the probability that \newedit{$\mutation$} flips $\ell$ ones and $\ell$ zeros. Then
\begin{align}\label{eq:lower-bound-SPt}
\E(S(P_{t+1}) \mid S(P_t))  \geq (1-\delta)S(P_t) + \alpha.
\end{align}
\end{corollary}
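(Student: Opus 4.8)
The plan is to combine the two generational cases — a crossover step with probability $p_c$, handled by Lemma~\ref{lem:crossover-specific}, and a mutation-only step with probability $1-p_c$, handled by Corollary~\ref{cor:diversity-EA2} — via the law of total probability over the coin flip that decides which operator is applied. That is, I would write
\[
\E(S(P_{t+1}) \mid S(P_t)) = p_c \cdot \E(S(P_{t+1}) \mid S(P_t), \text{crossover}) + (1-p_c)\cdot \E(S(P_{t+1}) \mid S(P_t), \text{mutation}).
\]
For the crossover term, Lemma~\ref{lem:crossover-specific} gives the lower bound $(1-3/\mu^2)S(P_t) - 9k\mu\chi/n$. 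For the mutation-only term, Corollary~\ref{cor:diversity-EA2} gives exactly
\[
S(P_t) - \sum\nolimits_{\ell=0}^q p_\ell\Big(\tfrac{2}{\mu^2} + \tfrac{2(\mu-1)\ell n}{\mu^2 k(n-k)}\Big)S(P_t) + 4(\mu-1)\sum\nolimits_{\ell=0}^q \ell p_\ell.
\]
Since $k = o(n)$ we have $q = k$, so the sums run to $k$.

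The second step is purely algebraic bookkeeping: substitute both bounds, collect the coefficient of $S(P_t)$ and the additive constant, and check that they match the claimed $\delta$ and $\alpha$. The coefficient of $S(P_t)$ becomes
\[
p_c\Big(1 - \tfrac{3}{\mu^2}\Big) + (1-p_c)\Big(1 - \sum\nolimits_{\ell=0}^k p_\ell\big(\tfrac{2}{\mu^2} + \tfrac{2(\mu-1)\ell n}{\mu^2 k(n-k)}\big)\Big) = 1 - \tfrac{1}{\mu^2}\Big(3p_c + (2-2p_c)\sum\nolimits_{\ell=0}^k p_\ell\big(1 + \tfrac{(\mu-1)\ell n}{k(n-k)}\big)\Big),
\]
which is exactly $1-\delta$ with $\delta$ as in~\eqref{eq:delta}. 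The additive term is $(1-p_c)\cdot 4(\mu-1)\sum_{\ell} \ell p_\ell - p_c\cdot 9k\mu\chi/n$; bounding $(\mu-1) \le \mu$ (or noting the proof only needs a lower bound, so replacing $4(\mu-1)\sum\ell p_\ell$ by $4\mu\sum\ell p_\ell$ would go the wrong way — instead one keeps $4(\mu-1)$ but the stated $\alpha$ uses $4\mu$, so I would double-check the intended direction and either use $4(\mu-1)\sum \ell p_\ell \ge$ something, or accept that $\alpha$ as written with $4\mu$ is what the mutation corollary delivers up to the harmless replacement of $\mu-1$ by $\mu$ in a term that is being lower-bounded — actually since $4(\mu-1)\sum \ell p_\ell \le 4\mu \sum \ell p_\ell$, using $4\mu$ makes $\alpha$ an overestimate, so one must instead keep $4(\mu-1)$; I would therefore state $\alpha$ with $4(\mu-1)$ or note the discrepancy is absorbed). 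Modulo this minor constant-factor check, matching $\alpha$ from~\eqref{eq:alpha} is immediate.

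The main subtlety — the only place where care is genuinely needed — is the replacement of $\mu-1$ by $\mu$ in $\alpha$: since the bound is a lower bound and $4(\mu-1)\sum_\ell \ell p_\ell < 4\mu\sum_\ell \ell p_\ell$, one cannot simply substitute. The resolution is that Lemma~\ref{lem:crossover-specific} and Corollary~\ref{cor:diversity-EA2} together already yield~\eqref{eq:lower-bound-SPt} with $4(\mu-1)$ in place of $4\mu$ in the definition of $\alpha$; since $\alpha$ as stated in~\eqref{eq:alpha} uses $4\mu$, one checks that either the corollary statement intends $\alpha$ with $\mu-1$ (most likely, and the later use only needs the weaker lower bound) or that the looser $\alpha$ still suffices downstream. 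Everything else is linearity of expectation and grouping terms, so I expect no real obstacle beyond this bookkeeping. I would write the proof in three or four lines: state the total-probability split, plug in the two bounds, collect coefficients, and identify them with $\delta$ and $\alpha$.
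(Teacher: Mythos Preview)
Your approach is exactly the paper's: split by the law of total probability on whether crossover is applied, invoke Lemma~\ref{lem:crossover-specific} for the crossover branch and Corollary~\ref{cor:diversity-EA2} (equivalently~\eqref{eq:expectation} with $\beta,\gamma$ from~\eqref{eq:betagamma}) for the mutation-only branch, and collect the $S(P_t)$-coefficient and the additive constant. The paper's proof is terser---it just says ``plugging in gives the result''---but the structure is identical.

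Your observation about the $\mu$ versus $\mu-1$ discrepancy is sharp and correct. The mutation branch delivers $\beta = 4(\mu-1)\sum_\ell \ell p_\ell$, so the argument literally yields the lower bound with $4(\mu-1)$ in place of $4\mu$; the stated $\alpha$ in~\eqref{eq:alpha} with $4\mu$ is slightly stronger than what the proof establishes. The paper does not comment on this. Downstream (Corollary~\ref{cor:alphadelta}) the extra factor $(\mu-1)/\mu = 1 - 1/\mu$ is harmless since $\mu = \Omega(n/\eps)$ and there is already a $-\Theta(n/\mu)$ error term in~\eqref{eq:equilibrium}, so nothing that follows is affected. You have spotted a genuine but inconsequential imprecision in the paper; for a clean write-up, either state $\alpha$ with $4(\mu-1)$ or note that the missing $1-1/\mu$ factor is absorbed into the later error terms.
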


\begin{proof}
Denote by $A$ the event that crossover is executed \neweditx{when generating $P_{t+1}$} in the \mulgavar and by $B$ the event that no crossover is executed. Then we obtain by the law of total probability 
\begin{align}
\label{eq:ExpectationCrossover}
\E(S(P_{t+1}) \mid S(P_t)) = p_c\E(S(P_{t+1}) \mid S(P_t),A) + (1-p_c) \E(S(P_{t+1}) \mid S(P_t),B).
\end{align}
By~\eqref{eq:expectation} above we obtain 
\begin{align}
\label{eq:Crossovernotexecuted}
\E(S(P_{t+1}) \mid S(P_t),B) = (1-\gamma)S(P_t) + \beta
\end{align}
where $\beta$ and $\gamma$ are defined as in~\eqref{eq:betagamma} and with Lemma~\ref{lem:crossover-specific}
\begin{align}
\label{eq:Crossoverexecuted}
\E(S(P_{t+1}) \mid S(P_t),A) \geq \left(1-\frac{3}{\mu^2}\right) S(P_t) - 9k\mu \chi /n.
\end{align}
Plugging \eqref{eq:Crossovernotexecuted} and \eqref{eq:Crossoverexecuted} into \eqref{eq:ExpectationCrossover} gives the result.
\end{proof}

In order to provide intuition, we will use the term equilibrium state as a state in which the expected change of diversity is zero. \newedit{Note that our proofs do not rely on the existence of such an equilibrium state. For a finite state space the drift is trivially negative for large enough $S(P_t)$, in particular when $S(P_t)$ is at its maximum value. However, in general, there may be no equilibrium state due to rounding issues, or there may be more than one such state due to non-monotonicity of the drift.} 

\newedit{For our later paramater choices we will always have $\alpha >0$.}
Equation~\eqref{eq:lower-bound-SPt} implies that the equilibrium state of $S(P_t)$ is at least $\alpha/\delta$, since for $S(P_t) \le \alpha/\delta$ the drift is non-negative due to $\E(S(P_{t+1})-S(P_t)) \mid S(P_t))  \geq -\delta S(P_t) + \alpha \ge 0$. In the next \neweditx{corollary} we will compute how large $\alpha/\delta$ is for standard bit mutations with $\chi = \Theta(1)$. If $\mu=\Omega(n)$ then \newedit{we} will show that  $\alpha/\delta \ge (1-O(k^{3/2}p_c+n/\mu))2k\mu^2$. Note that $2k\mu^2$ is a deterministic upper bound for $S(P_t)$ because any two individuals in \plateau have distance at most $2k$. Hence, if $k^{3/2}p_c$ and $n/\mu$ are both small, then the following \newedit{corollary} implies that the equilibrium state is very close to the maximum.

For brevity, we write 
\begin{align}\label{eq:C1C2}
C_1 := \sum\nolimits_{\ell=0}^k p_\ell \quad\text{ and } \quad C_2 := \frac{n}{k}\sum\nolimits_{\ell=0}^k\ell p_\ell.
\end{align}
Note that for standard bit mutations with $\chi=\Theta(1)$ and for $k\le n/2$ we have $C_1,C_2 = \Theta(1)$ by Lemma~\ref{lem:Estimate-p-binomial}.

\begin{corollary}\label{cor:alphadelta}
In Situation~\ref{sit:mulga}, 
let $\alpha,\delta$ be as in~\eqref{eq:alpha},~\eqref{eq:delta} and $C_1,C_2$ as in~\eqref{eq:C1C2}. Then
\begin{align}
\label{eq:equilibrium}
\frac{\alpha}{\delta} \ge 2k\mu^2\cdot \left(1-p_c\frac{C_2+9\chi/4}{C_2}- \frac{k}{n-k}- \frac{n}{\mu-1}\frac{2C_1 + 3p_c}{2C_2}\right).
\end{align}
In particular, let $0<\eps<1$. If $p_c \le \frac{\eps}{3}\frac{C_2}{C_2+9\chi/4}$ and $\mu \ge 1 + \neweditx{\frac{3n}{\eps}\frac{2C_1+3p_c}{2C_2}}$ and $k \le \frac{\neweditx{(n-k)\eps}}{3}$ then $1/\delta = O(\mu n)$ and
\begin{align}
\label{eq:equilibrium2}
\alpha/\delta \ge 2k\mu^2\cdot \left(1-\eps\right).
\end{align}
\end{corollary}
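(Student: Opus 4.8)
The plan is to substitute the closed-form expressions for $\alpha$ and $\delta$ into the quotient $\alpha/\delta$ and then bound the result from below using elementary algebra, relying on the abbreviations $C_1, C_2$ and Lemma~\ref{lem:Estimate-p-binomial}. First I would rewrite $\sum_{\ell=0}^k \ell p_\ell = \frac{k}{n}C_2$ and $\sum_{\ell=0}^k p_\ell = C_1$, so that $\alpha = 4\mu\bigl((1-p_c)\frac{k}{n}C_2 - \frac{9 p_c k \chi}{4n}\bigr) = \frac{4k\mu}{n}\bigl((1-p_c)C_2 - \frac{9p_c\chi}{4}\bigr)$. For $\delta$, the key observation is that $\sum_{\ell=0}^k p_\ell\bigl(1 + \frac{(\mu-1)\ell n}{k(n-k)}\bigr) = C_1 + \frac{\mu-1}{n-k}\cdot\frac{n}{k}\sum_{\ell=0}^k \ell p_\ell = C_1 + \frac{(\mu-1)}{n-k}C_2$, hence $\delta = \frac{1}{\mu^2}\bigl(3p_c + (2-2p_c)\bigl(C_1 + \frac{(\mu-1)C_2}{n-k}\bigr)\bigr)$. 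Dividing, the factor $\frac{k}{n}$ and one power of $\mu$ cancel partially, and after pulling out $\frac{(\mu-1)C_2}{n-k}$ from the denominator of $\delta$ one is left with $\alpha/\delta$ equal to $\frac{4k\mu}{n}\cdot\mu^2$ times $\bigl((1-p_c)C_2 - \frac{9p_c\chi}{4}\bigr)$ divided by $\bigl(3p_c + (2-2p_c)C_1 + \frac{2(\mu-1)C_2}{n-k} - \frac{2p_c(\mu-1)C_2}{n-k}\bigr)$. The intended form of the bound suggests factoring the denominator as roughly $\frac{2(\mu-1)C_2}{n-k}\bigl(1 + \text{small terms}\bigr)$, so I would write the denominator as $\frac{2(\mu-1)C_2}{n-k}\bigl(1 + \frac{(n-k)(3p_c + (2-2p_c)C_1)}{2(\mu-1)C_2} - p_c\bigr)$ — or, more cleanly for matching~\eqref{eq:equilibrium}, keep a $(1-p_c)$ in the numerator and distribute errors.

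The cleanest route to \eqref{eq:equilibrium} is to note that $\alpha/\delta = 2k\mu^2 \cdot \frac{2(1-p_c)C_2 - \tfrac{9p_c\chi}{2}}{\tfrac{2(\mu-1)C_2}{n-k}\cdot\bigl(1 + \tfrac{(n-k)(3p_c + (2-2p_c)C_1)}{2(\mu-1)C_2}\bigr)}\cdot\frac{k(n-k)}{nk\mu}$ — wait, I should track the $n$ and $k$ factors carefully: $\frac{4k\mu^3}{n}\big/\big(\tfrac{2(\mu-1)C_2}{n-k}\big) = \frac{2k\mu^3(n-k)}{n(\mu-1)C_2}$, which is $2k\mu^2\cdot\frac{\mu(n-k)}{n(\mu-1)C_2} = 2k\mu^2\cdot\frac{(n-k)}{nC_2}\cdot\frac{\mu}{\mu-1}$. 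So overall $\frac{\alpha}{\delta} = 2k\mu^2\cdot\frac{(n-k)}{nC_2}\cdot\frac{\mu}{\mu-1}\cdot\frac{(1-p_c)C_2 - \tfrac{9p_c\chi}{4}}{1 + \tfrac{(n-k)(3p_c+(2-2p_c)C_1)}{2(\mu-1)C_2}}$. Then I would bound each factor: $\frac{n-k}{n} = 1 - \frac{k}{n} \ge 1 - \frac{k}{n-k}$ (crudely, but it matches the stated form; more precisely just keep $1-k/n$ and note $k/n \le k/(n-k)$), $\frac{\mu}{\mu-1} \ge 1$, $\frac{(1-p_c)C_2 - \frac{9p_c\chi}{4}}{C_2} = 1 - p_c - \frac{9p_c\chi}{4C_2} = 1 - p_c\frac{C_2 + 9\chi/4}{C_2}$, and $\frac{1}{1+x} \ge 1 - x$ applied to $x = \frac{(n-k)(3p_c+(2-2p_c)C_1)}{2(\mu-1)C_2} \le \frac{n}{\mu-1}\cdot\frac{2C_1+3p_c}{2C_2}$ (using $2-2p_c \le 2$ and $n-k \le n$). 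Multiplying these lower bounds and discarding cross-terms (each of which is a product of two ``small'' quantities, hence negligible against the first-order terms) via the elementary inequality $(1-a)(1-b)(1-c)(1-d) \ge 1-a-b-c-d$ for $a,b,c,d \in [0,1]$ gives exactly~\eqref{eq:equilibrium}.

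For the ``in particular'' part: under $p_c \le \frac{\eps}{3}\cdot\frac{C_2}{C_2+9\chi/4}$ the second term in~\eqref{eq:equilibrium} is at most $\eps/3$; under $k \le \frac{(n-k)\eps}{3}$ the third term $\frac{k}{n-k}$ is at most $\eps/3$; under $\mu \ge 1 + \frac{3n}{\eps}\cdot\frac{2C_1+3p_c}{2C_2}$ the fourth term $\frac{n}{\mu-1}\cdot\frac{2C_1+3p_c}{2C_2}$ is at most $\eps/3$; summing the three gives the bracket $\ge 1-\eps$, which is~\eqref{eq:equilibrium2}. The claim $1/\delta = O(\mu n)$ follows since $\delta = \frac{1}{\mu^2}\bigl(3p_c + (2-2p_c)(C_1 + \frac{(\mu-1)C_2}{n-k})\bigr) \ge \frac{(\mu-1)C_2}{\mu^2(n-k)}\cdot\Omega(1) = \Omega\bigl(\frac{1}{\mu n}\bigr)$ using $C_2 = \Theta(1)$ (Lemma~\ref{lem:Estimate-p-binomial}) and $k \le n/2$, so $n-k = \Theta(n)$. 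The main obstacle is purely bookkeeping: keeping the $n$, $k$, $\mu$, $\mu-1$ factors straight through the division and making sure the crude relaxations ($n-k \le n$, $\mu/(\mu-1)\ge 1$, $2-2p_c \le 2$) are applied consistently so that the final four error terms match the stated coefficients exactly; there is no conceptual difficulty, only the risk of an off-by-a-constant slip, which I would guard against by double-checking the $C_2 = \Theta(1)$ normalization against Lemma~\ref{lem:Estimate-p-binomial}.
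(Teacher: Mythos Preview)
Your proposal is correct and takes essentially the same approach as the paper: rewrite $\alpha,\delta$ in terms of $C_1,C_2$, factor out the dominant term from $\delta$, and reduce to an elementary inequality of the form $(1-x)/(1+y)\ge 1-x-y$ (the paper) or $(1-a)(1-b)(1-c)\ge 1-a-b-c$ (your version). The only cosmetic difference is that the paper pulls out $\tfrac{2C_2(\mu-1)}{\mu^2 n}$ from $\delta$ so that the $\tfrac{k}{n-k}$ term sits inside the denominator bracket, whereas you pull out $\tfrac{2C_2(\mu-1)}{\mu^2(n-k)}$ and carry $(n-k)/n$ as a separate factor; and for $1/\delta=O(\mu n)$ the paper invokes Lemma~\ref{lem:Estimate-p-binomial} directly on $\sum\ell p_\ell$ rather than via $C_2=\Theta(1)$.
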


\begin{proof}
We may write
\begin{align*}
    \alpha 
    = 4C_2\mu\frac{k}{n}\left(1-p_c(1+9\chi/(4C_2))\right).
\end{align*}
For $\delta$, using $1-p_c\le 1$ we obtain the upper bound
\begin{align}\label{eq:upperbounddelta}
\delta \leq \frac{1}{\mu^2}\left(3p_c + 2C_1 + \frac{2 C_2(\mu-1)}{n-k} \right) = \frac{2C_2(\mu-1)}{\mu^2n}\left(1+\frac{k}{n-k} + \frac{(2C_1 + 3p_c)n}{2C_2(\mu-1)} \right).
\end{align}
\neweditx{Hence, we can upper bound $\alpha/\delta$ as}
\begin{align*}
\frac{\alpha}{\delta} \ge 2k\mu^2\cdot \frac{1-p_c(1+9\chi/(4C_2))}{1+\frac{k}{n-k} +\frac{(2C_1 + 3p_c)n}{2C_2(\mu-1)}}.
\end{align*}
The bound~\eqref{eq:equilibrium} now follows from the inequality $(1-x)/(1+y) \ge (1-x)(1-y) \ge 1-x-y$ for all $x,y\ge 0$. Equation~\eqref{eq:equilibrium2} then follows immediately by plugging in \neweditx{under the additional assumption on $\mu$}. \neweditx{Further, we see with Lemma~\ref{lem:Estimate-p-binomial} that 
$$\delta \geq \frac{(\mu-1)n\sum_{\ell=0}^k \ell p_\ell}{\mu^2k(n-k)} \geq \frac{(\mu-1)\chi^2 e^{-\chi}}{\mu^2n} = \Omega\left(\frac{1}{\mu n}\right)$$
which shows $1/\delta = O(\mu n)$.}
\end{proof}

Corollaries~\ref{cor:diversity-GA-lower-bound} and~\ref{cor:alphadelta} imply for the \mulgavar that the expected change of the diversity $S(P_t)$ is positive unless $S(P_t)$ is already very close to its maximum $2k\mu^2$. Based on Lemma~\ref{lem:negative-standard-GA}, we show that this is not always true for the \mulga. For this algorithm there are states with much smaller diversity in which the expected change is negative, even for very small $p_c$. The proof is given in the appendix.
\begin{lemma}
\label{lem:negative-standard-GA-2}
    Consider the original \muga (or, equivalently, the \mulgavar with $\lambdac=1$) with \neweditx{$\chi=O(1)$}, a population $P_t:=\{x_1, \ldots ,x_\mu\}$ as in Lemma~\ref{lem:negative-standard-GA}, i.e. $\mu/4$ individuals $x_1, \ldots , x_{\mu/4}$ of the form $0^{\sqrt{k}}1^{n-k}0^{k-\sqrt{k}}$, and $3\mu/4$ individuals $x_{\mu/4+1}, \ldots , x_\mu$ of the form $1^{\sqrt{k}}0^{\sqrt{k}}1^{n-k-\sqrt{k}}0^{k-\sqrt{k}}$. Assume $k \in \omega(1)$. Then $S(P_t) = 3\sqrt{k}\mu^2/4$ and
    \[
    \E(S(P_{t+1})) - S(P_t) \leq -\neweditx{(1-o(1))3p_c\sqrt{k} e^{-\chi}\mu/32}+(1-p_c)4k \chi \mu/n.
    \]
    The right hand side is negative for $\neweditx{p_c > 128 \sqrt{k} \chi e^{\chi}/(3n(1-o(1))) = \Omega(\sqrt{k}/n)}$.  
   \end{lemma}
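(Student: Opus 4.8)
The plan is to split the one-step drift $\E(S(P_{t+1})) - S(P_t)$ by conditioning on whether the generation uses crossover (probability $p_c$) or is a pure mutation step (probability $1-p_c$), bound the two contributions separately, and recombine by the law of total expectation. Before that, I would record the deterministic value $S(P_t) = 3\sqrt k\mu^2/4$ via the bit-wise identity $S(P_t) = \sum_{i=1}^n 2m_i(\mu-m_i)$, where $m_i$ is the number of zeros at position~$i$: the first $\sqrt k$ positions (where $x_1,\dots,x_{\mu/4}$ carry a $0$ and the rest a $1$) have $m_i=\mu/4$ and contribute $2\cdot\tfrac{\mu}{4}\cdot\tfrac{3\mu}{4}=\tfrac{3\mu^2}{8}$ each; the next $\sqrt k$ positions have $m_i=3\mu/4$ and likewise contribute $\tfrac{3\mu^2}{8}$ each; all other positions have $m_i\in\{0,\mu\}$ and contribute $0$.

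For the crossover branch, $P_t$ is precisely the population of Lemma~\ref{lem:negative-standard-GA}, so that lemma directly gives $\E(S(P_{t+1})\mid\text{crossover}) \le S(P_t) - (1-\smallO(1))\,3\sqrt k e^{-\chi}\mu/32$. For the pure-mutation branch, $P_t\subset\plateau$, so I would invoke the mutation drift identity~\eqref{eq:expectation} (equivalently Corollary~\ref{cor:diversity-EA2} specialised to standard bit mutation), which yields $\E(S(P_{t+1})\mid\text{no crossover}) = (1-\gamma)S(P_t)+\beta \le S(P_t)+\beta$ since $\gamma,S(P_t)\ge 0$; the only accepted offspring outside $\plateau$ would be $1^n$, produced with probability at most $(\chi/n)^k$ and altering $S$ by $\smallO(1)$, hence negligible. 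It then remains to bound $\beta = 4(\mu-1)\sum_{\ell=0}^{k}\ell p_\ell$ (see~\eqref{eq:betagamma}). Writing $Z\sim\Bin(k,\chi/n)$ for the number of flipped zero-bits and $W\sim\Bin(n-k,\chi/n)$ for the number of flipped one-bits, which are independent, we have $p_\ell=\Pr(Z=\ell)\Pr(W=\ell)$, hence $\sum_{\ell}\ell p_\ell = \E[Z\cdot\indic{Z=W}]\le\E[Z]=k\chi/n$, so $\beta\le 4k\chi\mu/n$.

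Combining the two branches by the law of total expectation gives exactly the claimed inequality
\[
\E(S(P_{t+1})) - S(P_t) \;\le\; -(1-\smallO(1))\,\tfrac{3p_c\sqrt k e^{-\chi}\mu}{32} \;+\; (1-p_c)\,\tfrac{4k\chi\mu}{n}.
\]
For the sign statement I would bound $1-p_c\le 1$ and require $\tfrac{4k\chi\mu}{n} < (1-\smallO(1))\,\tfrac{3p_c\sqrt k e^{-\chi}\mu}{32}$; solving for $p_c$ gives the threshold $p_c > \tfrac{128\sqrt k\chi e^{\chi}}{3n(1-\smallO(1))}$, which is $\Omega(\sqrt k/n)$ because $\chi e^{\chi}=\Theta(1)$ for $\chi=\Theta(1)$ (and if $\chi=0$ the statement is trivial, as then $\beta=0$ and any $p_c>0$ makes the drift negative).

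The genuinely hard part is already encapsulated in Lemma~\ref{lem:negative-standard-GA} (proved in the appendix), which establishes the negative diversity drift of crossover on this specific population; given that lemma the argument here is essentially bookkeeping. The one place requiring care is the mutation branch: one must check that it cannot contribute more than $\bigO(k\chi\mu/n)$ to the diversity --- i.e.\ the bound $\sum_\ell\ell p_\ell\le k\chi/n$ --- since a larger positive contribution there would cancel the negative crossover drift and the conclusion would fail.
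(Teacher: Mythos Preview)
Your proposal is correct and takes essentially the same approach as the paper: split by whether crossover is executed, invoke Lemma~\ref{lem:negative-standard-GA} for the crossover branch, use the mutation drift formula~\eqref{eq:expectation} (the paper re-derives it via boring crossover and Lemma~\ref{lem:diversity-general2}, but it is the same bound) together with $\sum_\ell \ell p_\ell \le k\chi/n$ for the mutation branch, and combine. One minor sloppiness: ``altering $S$ by $\smallO(1)$'' should read ``contributing $\smallO(1)$ to the expected change'' --- the pointwise change from admitting $1^n$ can be $\Theta(k\mu)$, it is only the \emph{expected} contribution that is negligible because of the $(\chi/n)^k$ factor.
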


However, note that by Corollaries~\ref{cor:diversity-GA-lower-bound} and~\ref{cor:alphadelta} the \mulgavar exhibits a positive drift on the diversity of the population from Lemma~\ref{lem:negative-standard-GA-2} for certain $\mu = \Omega(n)$ and $p_c=O(1)$ if the requirements from Situation~\ref{sit:mulga} are met since $S(P_t) = 3\sqrt{k}\mu^2/4 = o(k\mu^2)$ for $k=\omega(1)$.


\subsection{Maintaining Diversity Close to the Maximum}
\label{sec:diversity-close-maximum}

We now provide a lemma which shows that if $\alpha/\delta$ is close to $2k\mu^2$ then after some short initial time $\tau_0$ it is likely \newedit{that} during the following $T$ generations the population diversity is also close to $2k\mu^2$ for at least $T/4$ generations. Here $T$ is an integer that can be chosen arbitrarily. This lemma presents a novel approach, compared to previous work. \Citet{Dang2017} showed that population diversity (measured with respect to the size of the largest species, that is, the largest number of copies of the same genotype in the population) is built up within a fixed period of time ($O(\mu n + \mu^2 \log \mu)$ generations) and then remains high for another period of time ($\Theta(\mu^2)$ generations) with good probability. \citet{Doerr2024} showed that diversity is maintained for a period of time that is exponential in~$\mu$. Our lemma offers a different approach: It shows that with good probability a constant fraction of generations has good diversity, during \textit{any} desired period of time~$T$.
Remarkably, this powerful statement is shown even though we only have statements on the \emph{expected} population diversity.
\begin{lemma}\label{lem:SPt-is-large}
In Situation~\ref{sit:mulga}, 
let $\alpha$ and $\delta$ as in~\eqref{eq:alpha} and~\eqref{eq:delta}, respectively. Suppose that $\alpha/\delta < 2k\mu^2$ and let $\eps \ge 1-\frac{\alpha}{2k\mu^2\delta} > 0$, \newedit{or equivalently}
\begin{align}\label{eq:eps}
\alpha/\delta \ge (1-\eps)2k\mu^2.
\end{align}
Let $\tau_0 := \lceil \ln(1/\eps)/\delta \rceil$, let $T\in \N$, and let $\mathcal{E}_T$ be the event that \newedit{$|\{\tau\in [T]:\ S(P_{t+\tau_0+\tau}) \ge (1-4\eps)2k\mu^2\}| \ge T/4$}. Then $\pr(\mathcal{E}_T) \ge 1/3$.
\end{lemma}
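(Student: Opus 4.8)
The plan is to work with the ``deficiency'' $D_t := 2k\mu^2 - S(P_t) \ge 0$, which satisfies a negative-drift-plus-disturbance recurrence turned into a contraction: from Corollary~\ref{cor:diversity-GA-lower-bound} we have $\E(S(P_{t+1})\mid S(P_t)) \ge (1-\delta)S(P_t)+\alpha$, hence $\E(D_{t+1}\mid D_t) \le (1-\delta)D_t + \delta(2k\mu^2 - \alpha/\delta) \le (1-\delta)D_t + 2k\mu^2\delta\eps$ using~\eqref{eq:eps}. Iterating $\tau_0$ steps and using $(1-\delta)^{\tau_0}\le e^{-\delta\tau_0}\le \eps$ by the choice $\tau_0 = \lceil\ln(1/\eps)/\delta\rceil$, together with the geometric series $\sum_{j\ge 0}(1-\delta)^j = 1/\delta$, gives $\E(D_{t+\tau_0}) \le \eps D_t + 2k\mu^2\eps \le 2k\mu^2\eps + 2k\mu^2\eps = 2\cdot 2k\mu^2\eps$ (using the crude bound $D_t\le 2k\mu^2$). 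So after the burn-in the expected deficiency is at most $2\eps\cdot 2k\mu^2$, and in fact the same bound $\E(D_{t+\tau_0+\tau})\le 2\eps\cdot 2k\mu^2$ holds for every $\tau\ge 0$, since we can restart the iteration from any later time. (One should double-check whether the constant here is $2$ or needs a slightly larger slack; it does not matter as long as it is an absolute constant, and $\eps$ can be enlarged to absorb it, which is consistent with the ``$\ge$'' in the hypothesis on $\eps$.)

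Next I would turn the expectation bound into a statement about a constant fraction of good generations via a counting/Markov argument. Fix the period $[T]$ and let $N := |\{\tau\in[T] : S(P_{t+\tau_0+\tau}) \ge (1-4\eps)2k\mu^2\}| = |\{\tau : D_{t+\tau_0+\tau} \le 4\eps\cdot 2k\mu^2\}|$ be the number of ``good'' generations; the ``bad'' generations are those with $D_{t+\tau_0+\tau} > 4\eps\cdot 2k\mu^2$. By Markov's inequality applied at each individual time $\tau$, $\pr(D_{t+\tau_0+\tau} > 4\eps\cdot 2k\mu^2) \le \E(D_{t+\tau_0+\tau})/(4\eps\cdot 2k\mu^2) \le (2\eps\cdot 2k\mu^2)/(4\eps\cdot 2k\mu^2) = 1/2$. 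Hence $\E(T-N) = \sum_{\tau=1}^{T}\pr(\text{generation }\tau\text{ bad}) \le T/2$, so $\E(N)\ge T/2$. Since $0\le N\le T$, a reverse Markov inequality (applied to $T-N\ge 0$ with $\E(T-N)\le T/2$, i.e. $\pr(T-N \ge 3T/4) \le (T/2)/(3T/4) = 2/3$) gives $\pr(N \ge T/4) = \pr(T-N \le 3T/4) \ge 1/3$. That is exactly $\pr(\mathcal{E}_T)\ge 1/3$.

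The main obstacle is making the burn-in estimate clean and uniform in the starting time: I need $\E(D_{t+\tau_0+\tau})\le 2\eps\cdot 2k\mu^2$ for \emph{every} $\tau\ge 0$, not just $\tau=0$, which requires that the drift inequality of Corollary~\ref{cor:diversity-GA-lower-bound} holds conditionally at every step (it does, since it is stated conditionally on $S(P_t)$, provided the population stays on $\plateau$ — and on $\jump_k'$ it does, since offspring leaving the plateau are rejected). A secondary subtlety is that Corollary~\ref{cor:diversity-GA-lower-bound} gives a \emph{lower} bound on $\E(S(P_{t+1}))$, which translates to an \emph{upper} bound on $\E(D_{t+1})$ only because $D_t\ge 0$ and the map $s\mapsto (1-\delta)s+\alpha$ is increasing — but I also need that $D_{t+1}$ cannot overshoot, i.e. that $\E(D_{t+1}\mid D_t)\le (1-\delta)D_t + \delta\eps\cdot 2k\mu^2$ is genuinely an inequality in the right direction; this is immediate from the rearrangement above. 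The rest is routine: two applications of Markov in opposite directions and the geometric-sum bound on $(1-\delta)^{\tau_0}$. No concentration inequality is needed, which is the point emphasised in the surrounding text.
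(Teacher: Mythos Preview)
Your proposal is correct and follows essentially the same approach as the paper: the paper also iterates the contraction from Corollary~\ref{cor:diversity-GA-lower-bound} (phrased via $Y_t = \alpha/\delta - S(P_t)$ rather than your $D_t = 2k\mu^2 - S(P_t)$) to obtain $\E(S(P_{t+\tau_0+\tau})) \ge (1-2\eps)2k\mu^2$ for all $\tau\ge 0$, then uses the deterministic cap $S(P_t)\le 2k\mu^2$ to get $\pr(\text{bad})\le 1/2$ at each time, sums to $\E(T-N)\le T/2$, and finishes with Markov on $T-N$ to get $\pr(\mathcal{E}_T)\ge 1/3$. The only cosmetic difference is the choice of deficiency variable; the constants and the two Markov applications line up exactly.
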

\begin{proof}
We start with the following lemma where we show that $\E(S(P_t))$ converges quickly to $\alpha/\delta$. Note that this lemma only examines the expectation $\E(S(P_t))$. In the subsequent proof, we will then deduce that similar statements also hold for the random variable $S(P_t)$.
\begin{lemma}\label{lem:convergence-of-expectation}
In Situation~\ref{sit:mulga}, let $\alpha$ and $\delta$ be as in~\eqref{eq:alpha} and~\eqref{eq:delta} respectively, and suppose that its current population is $P_t \subset \plateau$. Then for every $\tau \in \N$,
\begin{align}\label{eq:recursion-SPt}
\E(S(P_{t+\tau}) \mid S(P_t))  \geq (1-e^{-\delta\tau})\alpha/\delta.
\end{align}
\end{lemma}
\begin{proof}
Using~\eqref{eq:lower-bound-SPt} from Corollary~\ref{cor:diversity-GA-lower-bound}, we have 
\begin{align*}
\frac{\alpha}{\delta} - \E(S(P_{t+1}) \mid S(P_t)) \leq \frac{\alpha}{\delta} - (1-\delta)S(P_t)-\alpha = (1-\delta)\left(\frac{\alpha}{\delta}- S(P_t)\right).
\end{align*}
Hence, the random variable $Y_t := \alpha/\delta - S(P_{t})$ satisfies the recursion $\E(Y_{t+1} \mid Y_t) \le (1-\delta)Y_t$. Iterating this recursion $\tau$ times and using the law of total expectation, we obtain $\E(Y_{t+\tau}\mid Y_t) \le (1-\delta)^{\tau}Y_t$. Written out and using $S(P_t)\ge 0$, this is
\begin{align*}
\frac{\alpha}{\delta} - \E(S(P_{t+\tau}) \mid S(P_t)) \leq (1-\delta)^\tau\left(\frac{\alpha}{\delta}- S(P_t)\right) \le (1-\delta)^\tau\frac{\alpha}{\delta} \le e^{-\delta \tau}\frac{\alpha}{\delta}.
\end{align*}
This is equivalent to~\eqref{eq:recursion-SPt} and proves the lemma.
\end{proof}
Continuing the proof of Lemma~\ref{lem:SPt-is-large}, fix $\tau \in \N$. We first use Lemma~\ref{lem:convergence-of-expectation} for $\tau_0+\tau \ge \tau_0$ and find that
\begin{align}\label{eq:SPt-is-large-1}
\begin{split}
\E(S(P_{t+\tau_0+\tau}) \mid S(P_t)) & \geq (1-e^{-\delta(\tau_0+\tau)})\frac{\alpha}{\delta} \geq (1-e^{-\delta\tau_0})\frac{\alpha}{\delta} \\
& \ge (1-\eps)\frac{\alpha}{\delta} \ge (1-\eps)^2 2k\mu^2  \ge (1-2\eps)2k\mu^2.
\end{split}
\end{align}
Let us write $\rho:= \Pr(S(P_{t+\tau_0+\tau}) < (1-4\eps)2k\mu^2)$. 
Any two individuals in $P_{t+\tau_0+\tau}$ have Hamming distance $k$ from the optimum, so they have Hamming distance at most $2k$ from each other. 
Since $S(P_{t+\tau_0+\tau})$ is defined as the sum over $\mu^2$ such Hamming distances, we have $S(P_{t+\tau_0+\tau}) \le 2k\mu^2$. 
Therefore, we may bound
\begin{align*}
\E(S(P_{t+\tau_0+\tau}) \mid S(P_t)) \leq \rho\cdot (1-4\eps)2k\mu^2 + (1-\rho)\cdot 2k\mu^2 = 2k\mu^2 - 4\eps\cdot 2k\mu^2\cdot \rho.
\end{align*}
This implies
\begin{align}\label{eq:SPt-is-large-2}
\rho \le \frac{2k\mu^2 - \E(S(P_{t+\tau_0+\tau}) \mid S(P_t))}{4\eps\cdot 2k\mu^2} \stackrel{\eqref{eq:SPt-is-large-1}}{\le} \frac{2\eps\cdot 2k\mu^2}{4\eps\cdot 2k\mu^2} = 1/2.
\end{align}
Now consider the random variable 
\[
N_T := |\{\tau\in [T]:\ S(P_{t+\tau_0+\tau}) < (1-4\eps)2k\mu^2\}|.
\]
By definition of $\rho$ and by~\eqref{eq:SPt-is-large-2}, it has expectation $\E(N_T) \le T/2$. By Markov's inequality, we obtain $\pr(N_T > 3T/4) \le (T/2)/(3T/4) =  2/3$. On the other hand, the event $\mathcal{E}_T$ from the lemma is the complement of the event ``$N_T > 3T/4$'', and hence $\pr(\mathcal{E}_T) = 1- \pr(N_T > 3T/4) \ge 1/3$. This concludes the proof.
\end{proof}

\section{Runtime Bounds for the \protect\mulgavar on \jump}
\label{sec:Runtime-Analysis}

Now we exploit the lower bounds on population diversity to derive the desired runtime bounds of the \mulgavar on $\textsc{\jump}_k$. We use the fact that the population diversity approaches an equilibrium state to conclude that, in most generations, the population contains a good number of potential parents with a Hamming distance close to the equilibrium. In a generation with crossover, such good parents have a much improved chance to create the optimum via crossover and mutation, compared to using only mutation.

\neweditx{This section is structured as follows. In Section~\ref{sec:reach-plateau}, we first prove an upper bound on the expected time for the entire population to reach the plateau (Theorem~\ref{the:time-to-plateau}). In Section~\ref{sec:Runtime-Analysis-upper-bounds}, we show that our diversity estimates imply that, while the population evolves on the plateau, it typically contains many pairs of diverse parents (Lemma~\ref{lem:Cross-cheap}). We use this to derive general upper runtime bounds under loose conditions on $p_c$ and $k$ (Theorem~\ref{the:Runtime-1}), followed by stronger and simpler runtime bounds under tighter conditions on $p_c$ and~$k$ (Theorem~\ref{the:Runtime-2}). Finally, Section~\ref{sec:lower-bounds} provides matching lower bounds for classes of black-box algorithms (Theorem~\ref{the:lower-bound}).}


\subsection{Bounding the Expected Time to Reach the Plateau}
\label{sec:reach-plateau}

We first give an upper bound on the expected time for the whole population of the \mulgavar to reach \plateau. It is similar to 
Lemma~2.1 in~\cite{Dang2017}, but it includes evaluations for $\lambdac$ competing crossover offspring and allows arbitrary unbiased mutation operators. 
\begin{theorem}
\label{the:time-to-plateau}
    Consider the \mulgavar applying uniform crossover with crossover probability $p_c$ and an unbiased mutation operator flipping no bits with probability $f_0$ and flipping a single bit with probability $f_1$ on $\jump_k$. For all parameters $0 \le k \le n$, $p_c < 1$, $f_0 > 0$, $f_1 > 0$, $\mu, \lambdac \in \N$ and every initial population, the expected time for reaching the global optimum or creating a population on \plateau is 
    \[
     O\left(\left(1 + \frac{\lambdac p_c}{1-p_c}\right) \neweditx{\frac{1}{1-p_c}}\left(\mu n \left(\frac{1}{f_0} + \frac{1}{f_1}\right) + \frac{n \log n}{f_1} + \frac{\mu \log \mu}{f_0}\right)\right)\newedit{.}
    \]
     For standard bit mutation with mutation probability $\Theta(1/n)$ and $0<p_c \le 1 - \Omega(1)$ this simplifies to $O(\lceil \lambdac p_c\rceil (\mu n + n \log(n) + \mu \log \mu))$.
     Both bounds also apply to the \muga when setting $\lambdac \coloneqq 1$.
\end{theorem}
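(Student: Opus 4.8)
The plan is to reduce the analysis to that of the mutation-only \muea of Algorithm~\ref{alg:steady-state-EA} with the same mutation operator --- the route used for the analogous Lemma~2.1 in~\cite{Dang2017} --- treating crossover generations as harmless but expensive, and then to convert the resulting generation count into a count of function evaluations. The first step is to fix a progress measure that is non-decreasing under \emph{both} kinds of steps. On $\jump_k$ every point of \plateau has fitness~$n$, every other non-optimal point has fitness $<n$, and the optimum has fitness $n+k$; hence, as long as the population is not entirely on \plateau and the optimum has not been found, the individual removed by replacement selection (one of minimum fitness) has fitness $<n$. Consequently $Z_t:=|\{x\in P_t:\jump_k(x)\ge n\}|$, $F_t:=\max_{x\in P_t}\jump_k(x)$, and the number of population members of current best fitness are all non-decreasing, and this survives crossover steps as well, since the best of the $\lambdac$ crossover offspring, if accepted, only replaces a worst individual. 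The target ``optimum found or whole population on \plateau'' is precisely ``$Z_t=\mu$ or optimum found''.

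Next I would bound the number of \emph{mutation steps} needed, using that in a mutation step the \mulgavar is exactly one generation of the \muea. Driving $F_t$ up to~$n$ is a \onemax-type task: below \plateau the fitness is $|x|_1+k$ (the ordinary \onemax gradient); on the far side of the gap, which matters only when the whole population starts there, the fitness is $n-|x|_1$ (\onemax on complemented bits, target $k$ zeros); and a single bit flip crosses from the last level onto \plateau. Combining the standard level argument for $F_t$ (an improvement of the current best, with $j$ copies present and $i$ ``wrong'' bits, occurs with probability $\Omega((j/\mu)f_1 i/n)$ per mutation step) with the standard takeover argument (increasing the number of best copies by cloning, probability $\Omega((j/\mu)f_0)$), as in the classical \muea analysis of \onemax generalised to arbitrary $f_0,f_1>0$, shows the expected number of mutation steps to reach the target is $O(G)$ with $G:=\mu n(1/f_0+1/f_1)+(n\log n)/f_1+(\mu\log\mu)/f_0$. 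Since crossover steps never undo progress, the number of mutation steps the \mulgavar needs is stochastically dominated by the number of generations the mutation-only \muea needs, hence also has expectation $O(G)$.

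It remains to pass from mutation steps to generations and then to evaluations. In each generation, independently of the past, a mutation step occurs with probability $1-p_c$, so a Wald-type argument for the negative-binomial waiting time yields an expected $O(G/(1-p_c))$ generations until the target. Each generation costs one evaluation if it is a mutation step and $\lambdac$ evaluations otherwise, with the governing coin flip independent of the progress, so the expected number of evaluations is at most $O(G/(1-p_c))\cdot((1-p_c)+p_c\lambdac)$, which lies within the claimed $O\big((1+\lambdac p_c/(1-p_c))(1-p_c)^{-1}G\big)$. For standard bit mutation with rate $\Theta(1/n)$ we have $f_0=(1-\Theta(1/n))^n=\Theta(1)$ and $f_1=\Theta(1)$, hence $G=\Theta(\mu n+n\log n+\mu\log\mu)$; together with $1/(1-p_c)=\Theta(1)$ (as $p_c\le 1-\Omega(1)$) and $1+\lambdac p_c=\Theta(\lceil\lambdac p_c\rceil)$ this collapses to $O(\lceil\lambdac p_c\rceil(\mu n+n\log n+\mu\log\mu))$. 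Setting $\lambdac=1$ recovers the \muga, and $p_c=0$ the pure \muea.

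The main obstacle I expect is making the coupling watertight: one has to argue cleanly that a crossover step with arbitrary parents, $\lambdac$ competing offspring and an arbitrary subsequent unbiased mutation can only leave the chosen progress measure non-decreasing, and that, conditioned on a generation being a mutation step, its effect dominates a fresh \muea step even though the population was shaped by earlier crossover steps --- this is cleanest via a single potential function that is monotone under crossover and has the right drift under mutation. The level-and-takeover computation behind $G$ (and the extra care needed for populations starting on the far side of the gap, and for the $\lambdac$ competing offspring in the plateau-reaching phase) is standard but tedious, and the generation-to-evaluation bookkeeping is where the slightly loose extra $1/(1-p_c)$ factor enters.
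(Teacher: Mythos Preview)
Your proposal is correct and follows essentially the same route as the paper: a Witt-style level-and-takeover analysis of mutation steps (crossover steps being harmless by elitism), followed by a conversion from mutation steps to function evaluations via a Wald-type argument. The paper folds the $(1-p_c)$ factor directly into the per-level success probabilities rather than separating it out as you do, and then multiplies by the expected number $\E(L)=1+\lambdac p_c/(1-p_c)$ of evaluations per mutation step; your decomposition (mutation steps $\to$ generations $\to$ evaluations) is cleaner and in fact shaves off the extra $1/(1-p_c)$ factor you flagged as ``slightly loose'' --- that looseness is in the theorem statement, not in your argument. Regarding your stated obstacle: neither you nor the paper need an explicit coupling with the \muea --- it suffices to argue directly that in any state of the \mulgavar the per-mutation-step probabilities of cloning and improving are at least those used in the level argument, and that crossover steps never decrease the best fitness or the number of current-best copies.
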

\begin{proof}
The proof is a generalisation of Witt's analysis of the \muea on \onemax~\cite{Witt2006}.


\neweditx{We first focus only on mutation-only steps, that is, generations that do not apply crossover, and later analyse the number of function evaluations performed in generations with crossover. More specifically, we will estimate the expected time to improve the best-so-far fitness and the expected time to generate a sufficient number of individuals with the current best fitness. Owing to elitism,  generations applying crossover cannot decrease the best-so-far fitness, nor can they reduce the number of search points that have a fitness of at least~$i$, for any fixed value~$i$. We denote by $M$ the random number of mutation-only steps performed until the entire population reaches the plateau, and bound $\E(M)$ from above without taking advantage of generations with crossover.}

Let $T_i$ denote the number of generations in which the best individual has a fitness of~$i$ and \neweditx{only mutation is performed. Let } $s_i$ denote the probability of mutation creating an offspring of fitness larger than~$i$ from a parent with fitness~$i$. 
Let $j \in \{1, \dots, \mu\}$. We first bound the expected \neweditx{number of mutation steps} until the population contains at least $j$ individuals of fitness $i$, or until it contains an individual of fitness larger than~$i$. Note that, if an individual of fitness~$i$ is cloned, an individual with smaller fitness is removed from the population. If there are $\ell$ individuals of fitness~$i$, a sufficient condition for the former event is not to execute crossover, to select one of $\ell$ fitness-$i$ individuals as parent and to not flip any bits. As these events are independent, the probability of these events is $(1-p_c) \cdot \neweditx{(\ell/\mu)} \cdot f_0$. The expected waiting time for increasing the number of individuals with fitness at least~$i$ is thus at most $\mu/((1-p_c)\ell f_0)$. Summing up all times for $\ell \in \{1, \dots, j-1\}$ yields an upper time bound of 
\begin{equation}
\label{eq:takeover-time}
    \frac{\mu}{(1-p_c)f_0} \cdot H(j-1) \le \frac{\mu}{(1-p_c)f_0} \cdot (\ln(j)+1)
\end{equation}
\newedit{where $H(j):=\sum_{\ell = 1}^j 1/\ell$}. Once the population contains at least $j$ individuals of fitness~$i$, the probability of creating an offspring with fitness larger than~$i$ is at least $(1-p_c) \cdot \neweditx{(j/\mu)} \cdot s_i$. The expected waiting time for this event is at most $\mu/(js_i(1-p_c))$. Together, we obtain the following family of upper bounds for all values of $j \in \{1, \dots, \mu\}$:
\[
    \E(T_i) \le 
    \frac{\mu}{1-p_c} \left(\frac{\ln(j)+1}{f_0} + \frac{1}{js_i}\right).
\]
Now we choose $\neweditx{j:=j_i} \coloneqq \min\{\lceil f_1/s_i \rceil, \mu\}$, then we can bound the expected time to first reach the plateau or the global optimum as follows:
\begin{align}
    & \sum\nolimits_{i=0}^{n-1} \E(T_i) \le \frac{\mu}{1-p_c} \sum\nolimits_{i=0}^{n-1} \left(\frac{\ln(j)+1}{f_0} + \frac{1}{js_i}\right)\notag\\
    \leq \;& \frac{\mu}{1-p_c} \left(\frac{n}{f_0} + \frac{1}{f_0}\sum\nolimits_{i=0}^{n-1} \ln(\lceil f_1/s_i \rceil) + \sum\nolimits_{i=0}^{n-1} \frac{1}{\min\{\lceil f_1/s_i \rceil, \mu\}s_i}\right).\label{eq:sum-of-E-Ti}
\end{align}
Note that for $0 \le i \le k-1$ we have $s_i \ge f_1(n-i)/n$ \newedit{since a parent with fitness $i$ has} $n-i$ ones and it suffices to flip a single one to obtain a fitness improvement. \neweditx{In this case $\max\{f_1/s_i,1\} \leq n/(n-i)$.} Similarly, for $k \le i \le n-1$ we have $s_i \ge f_1(n+k-i)/n$ as all search points \newedit{with fitness $i$} have $i-k$ ones and $n-(i-k)=n+k-i$ zeros and flipping a single zero is sufficient for a fitness improvement. \neweditx{In this case $\max\{f_1/s_i,1\} \leq n/(n+k-i)$.}
\neweditx{We obtain
\begin{align*}
\sum\nolimits_{i=0}^{n-1} \ln(\max\{1,f_1/s_i\}) &\le \sum\nolimits_{i=0}^{k-1} \ln \left(\frac{n}{n-i}\right) + \sum\nolimits_{i=k}^{n-1} \ln \left(\frac{n}{n+k-i}\right) \\
&\leq 2\sum\nolimits_{i=0}^{n-1} \ln \left(\frac{n}{n-i} \right).
\end{align*}
}
\neweditx{If $f_1/s_i \le 1$ then $\lceil f_1/s_i \rceil = 1$.} 
\neweditx{If $f_1/s_i > 1$ then $\lceil f_1/s_i\rceil \le 2f_1/s_i$. Hence, we obtain $\lceil f_1/s_i\rceil \leq \max\{1,2f_1/s_i\} \leq 2\max\{1,f_1/s_i\}$\neweditx{,} which implies}
\begin{align*}
    & \sum\nolimits_{i=0}^{n-1} \ln(\lceil f_1/s_i \rceil) \le \sum\nolimits_{i=0}^{n-1} \ln(\neweditx{2\max\{1,f_1/s_i\}}) \le \ln(2)n + \sum\nolimits_{i=0}^{n-1} \ln(\neweditx{\max\{1,f_1/s_i\}})\\
    \le\;& \ln(2)n + 2\sum\nolimits_{i=0}^{n-1} \ln\left(\frac{n}{n-i}\right) = \ln(2)n + 2 \ln\left(\prod_{i=0}^{n-1} \frac{n}{n-i}\right)\\
    =\;& \ln(2)n + 2 \ln\left(\frac{n^n}{n!}\right) \le \ln(2)n + 2 \ln\left(\frac{n^n}{(n/e)^n}\right) = \ln(2)n + 2 \ln(e^n) = (\ln(2)+2)n.
\end{align*}
The second sum is bounded as follows.
\begin{align*}
     & \sum\nolimits_{i=0}^{n-1} \frac{1}{\min\{\lceil f_1/s_i \rceil, \mu\}s_i}
     \le \sum\nolimits_{i=0}^{n-1} \frac{1}{\min\{f_1, \mu s_i\}}
     \le \sum\nolimits_{i=0}^{n-1} \left(\frac{1}{f_1} + \frac{1}{\mu s_i}\right)\\
     =\;& \frac{n}{f_1} + \frac{1}{\mu}\sum\nolimits_{i=0}^{n-1} \frac{1}{s_i} \le \frac{n}{f_1} + \frac{2}{f_1\mu}\sum\nolimits_{i=0}^{n-1} \frac{n}{n-i} = \frac{n}{f_1} + \frac{2nH(n)}{f_1\mu}. 
\end{align*}
Plugging this back into~\eqref{eq:sum-of-E-Ti}, we obtain an upper bound of 
\begin{align*}
    &\frac{\mu}{1-p_c} \left(\frac{n}{f_0} + \frac{(\ln(2)+2)n}{f_0} + \frac{n}{f_1} + \frac{2nH(n)}{f_1\mu}\right)\\
    &=\; O\left(\frac{\mu n}{1-p_c} \left(\frac{1}{f_0} + \frac{1}{f_1}\right) + \frac{n \log n}{(1-p_c)f_1}\right)
\end{align*}
for the expected time until the first point on the plateau (or a global optimum) is created, that is, a fitness of at least~$n$. Unless a global optimum is reached, we still need to account for the expected time until \emph{all} individuals have reached the plateau (or a global optimum is reached).
By~\eqref{eq:takeover-time}, the expected time until $j \coloneqq \mu$ individuals have fitness~$n$ (or a global optimum is reached) is at most 
\[
    \frac{\mu}{(1-p_c)f_0} \cdot (\ln(\mu)+1)
\]
and together this yields 
\neweditx{the following upper bound on the expected number of mutation-only steps:
\begin{equation}
    \label{eq:upper-bound-on-E-M}
    \E(M) = \frac{1}{1-p_c} \cdot O\left(\mu n \left(\frac{1}{f_0} + \frac{1}{f_1}\right) + \frac{n \log n}{f_1} + \frac{\mu \log \mu}{f_0}\right).
\end{equation}
Now we take into account the number of fitness evaluations performed in generations applying crossover. Recall that such generations perform $\lambdac$ fitness evaluations. Let} $L$ be the random number of evaluations made up to and including the first mutation-only step. Note that $L$ does not depend on the current population, and thus we have a sequence of iid random variables with the same distribution as $L$. Thus, we can upper bound the expected number of evaluations for the \mulga by $\E(M) \cdot \E(L)$ using Wald's equation. It remains to work out $\E(L)$. If the first mutation-only step happens in step~$t$, the \mulgavar makes $(t-1)\lambdac + 1$ evaluations. The probability of this happening in step~$t$ is $p_c^{t-1}(1-p_c)$. Together,
\begin{align*}
    \E(L) &= 1 + \sum\nolimits_{t=1}^\infty p_c^{t-1}(1-p_c) \cdot (t-1)\lambdac
    = 1 + (1-p_c)\lambdac\sum\nolimits_{t=0}^\infty p_c^{t} \cdot t\\
    &= 1 + (1-p_c)\lambdac \cdot \frac{p_c}{(1-p_c)^2} = 1 + \frac{p_c \lambdac}{1-p_c}.
\end{align*}
\neweditx{Multiplying this with the upper bound from~\eqref{eq:upper-bound-on-E-M} yields the claimed bound.}
The final statement follows easily using 
$1+x \le 2 = 2\lceil x\rceil$ if $0 < x \le 1$ and $1+x \le 2x \le 2\lceil x \rceil$ if $x > 1$, thus $1+x \le 2\lceil x \rceil$ and $1+ p_c \lambdac/(1-p_c) = O(\lceil \lambdac p_c/(1-p_c)\rceil) = O(\lceil \lambdac p_c\rceil)$ since $p_c = 1 - \Omega(1)$. \neweditx{In addition, the factor $1/(1-p_c)$ from~\eqref{eq:upper-bound-on-E-M} is in $O(1)$.}
\end{proof}

We remark that one can easily get a similar, but slightly weaker upper bound for $p_c=1$ 
because when selecting two identical parents 
for crossover, a standard bit mutation is performed. Since we always consider $p_c \le 1 - \Omega(1)$, we leave this for future work. The interested reader is referred to Lemma~1 in~\cite{Dang2017} for a similar  bound holding for the classical \muga with $p_c= \Omega(1)$, including $p_c=1$.

\subsection{Bounding the Expected Time on the Plateau}
\label{sec:Runtime-Analysis-upper-bounds}

Now we use our analysis of population diversity to bound the expected time to reach a global optimum, when starting from a population on the plateau.

The following lemma gives a lower bound on the Hamming distance of individuals if the population diversity is large. In this case, many pairs of large Hamming distances exist.
More specifically, we assume that $S(P_t) \ge 2k\mu^2(1-\eps)$ for a quantity $\varepsilon \coloneqq \varepsilon(k)$ that may depend on $k$. As the maximum possible diversity is $2k\mu(\mu-1)$, the diversity of $P_t$ is by a factor of roughly $1-\eps$ away from the maximum (ignoring the small difference between $\mu^2$ and $\mu(\mu-1)$).
\begin{lemma}
\label{lem:Cross-cheap}
Let $\eps \coloneqq \eps(k) >0$ and let $P_t \subset \plateau$ be a population of size~$\mu$. Suppose that $S(P_t)\ge 2k\mu^2(1-\eps)$. Then there are at least $\mu^2/2$ pairs $(x,y) \in P_t^2$ with $H(x,y) > (1-2\eps)2k$.
\end{lemma}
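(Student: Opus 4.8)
The plan is to prove the statement by a direct averaging argument over the $\mu^2$ ordered pairs of population members, so no drift or probabilistic machinery is needed. First I would record the only structural fact used: since $P_t \subset \plateau$, any two search points $x, y \in P_t$ both have exactly $n-k$ ones, hence $H(x,y) \le 2k$; in particular every one of the $\mu^2$ summands in $S(P_t) = \sum_{i=1}^\mu \sum_{j=1}^\mu H(x_i,x_j)$ is at most $2k$. Let $N := |\{(x,y) \in P_t^2 : H(x,y) > (1-2\eps)2k\}|$ be the quantity to be bounded from below by $\mu^2/2$.

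The key step is to split the sum defining $S(P_t)$ at the threshold $(1-2\eps)2k$: the $N$ pairs above the threshold contribute at most $2k$ each, while the remaining $\mu^2 - N$ pairs contribute at most $(1-2\eps)2k$ each. This gives $S(P_t) \le 2kN + (1-2\eps)2k(\mu^2 - N) = 2k\big(\mu^2(1-2\eps) + 2\eps N\big)$. Combining with the hypothesis $S(P_t) \ge 2k\mu^2(1-\eps)$ and dividing by $2k$ yields $\mu^2(1-\eps) \le \mu^2(1-2\eps) + 2\eps N$, i.e.\ $\eps\mu^2 \le 2\eps N$; since $\eps > 0$ we may cancel $\eps$ to conclude $N \ge \mu^2/2$, as claimed.

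I do not expect any real obstacle here — the argument is a one-line counting estimate once the setup is fixed. The only points worth a brief remark are that (i) the diagonal pairs $(x_i,x_i)$ have Hamming distance $0$ and are harmlessly absorbed among the $\mu^2 - N$ pairs below the threshold, and (ii) when $\eps \ge 1/2$ the threshold $(1-2\eps)2k$ is nonpositive, but the displayed chain of inequalities uses only $\eps > 0$ together with the trivial bound $H(x,y) \le 2k$, so it remains valid in that regime as well (and the claim is then immediate anyway). I would also note in passing that using $\mu^2$ rather than the sharper $\mu(\mu-1)$ throughout costs nothing, since the bound on each summand is all that is invoked.
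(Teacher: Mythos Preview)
Your proposal is correct and follows essentially the same approach as the paper: both define the set of ``good'' pairs exceeding the threshold $(1-2\eps)2k$, bound $S(P_t)$ from above by splitting the $\mu^2$ summands at that threshold using $H(x,y)\le 2k$, and then combine with the hypothesis $S(P_t)\ge 2k\mu^2(1-\eps)$ to cancel $\eps$ and conclude $N\ge \mu^2/2$. Your additional remarks on the diagonal pairs and the degenerate regime $\eps\ge 1/2$ are harmless clarifications not present in the paper's proof.
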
 

\begin{proof}
Let $D$ be the set of pairs $(x,y) \in P_t^2$ with $H(x,y) > (1-2\eps)2k$. Note that $H(x,y)\le 2k$ since $x,y$ are in \plateau. Hence, each pair in $D$ has Hamming distance at most $2k$, and each pair in $P_t^2\setminus D$ has distance at most $(1-2\eps)2k$. Hence 
\[
S(P_t) \le |D|\cdot 2k + (\mu^2-|D|)(1-2\eps)2k = 2k\mu^2(1-2\eps) + 4\eps k|D|.
\]
Together with $S(P_t)\ge 2k\mu^2(1-\eps)$, this implies $2k\mu^2\eps \le 4\eps k|D|$, and thus $|D|\ge \mu^2/2$ as required.
\end{proof}

The next \newedit{theorem} gives an upper bound for the expected time until the \mulgavar (Algorithm \ref{alg:steady-state-GA}) finds the global optimum if we use our results from Lemma~\ref{lem:Cross-cheap} and Lemma~\ref{lem:SPt-is-large} in a general setting.

\begin{theorem}
\label{the:Runtime-1}
Consider the \mulgavar as in Algorithm~\ref{alg:steady-state-GA}, using standard bit mutation with mutation probability $\chi/n$ and uniform crossover with crossover probability $p_c$ on $\jump_k$.
Let $k \in o(n)$, $\lambdac \ge 6\sqrt{k}e^{\chi}\ln(\mu)$, $\chi=\Theta(1)$, and $0<\eps \coloneqq \eps(k) <1/4$. 
Suppose $p_c \le \frac{\eps}{3}\frac{2C_2}{2C_2+9\chi/4}$ and $\mu \ge 1 + \frac{3n}{\eps}\neweditx{\frac{2C_1+3p_c}{2C_2}}$ and $k\le \neweditx{(n-k)\eps}/3$, where $C_1,C_2$ are from~\eqref{eq:C1C2}. Then the global optimum is found in
\[
O \left(\lceil\lambdac p_c\rceil \mu(n\log(1/\eps) + \log(\mu)) + \lambdac + \frac{1}{p_c} + 4^{k-\lfloor 8\eps k\rfloor}\left(n/\chi\right)^{\lfloor8\eps k\rfloor} \left(1 + \frac{1}{p_c\lambdac}\right) \right)
\]
expected fitness evaluations.
If, additionally, $\lambdac = \Omega(1/p_c)$,
this simplifies to
\[
O \left(\lceil\lambdac p_c\rceil \mu(n\log(1/\eps) + \log(\mu)) + \lambdac + 4^{k-\lfloor 8\eps k\rfloor}\left(n/\chi\right)^{\lfloor8\eps k\rfloor}\right).
\]
\end{theorem}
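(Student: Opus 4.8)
The plan is to split the runtime into two phases: reaching the plateau, and finding the optimum from the plateau. Theorem~\ref{the:time-to-plateau} already gives $O(\lceil\lambdac p_c\rceil(\mu n + n\log n + \mu\log\mu))$ expected fitness evaluations for the first phase (for standard bit mutation with mutation probability $\Theta(1/n)$ and $p_c \le 1-\Omega(1)$, which holds since $p_c = O(\eps) = O(1)$). If the optimum is hit during this phase we are done, so assume we start a phase with $P_t \subset \plateau$.

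For the second phase, first I would verify that the parameter assumptions of the theorem imply the hypotheses of Corollary~\ref{cor:alphadelta} (hence $\alpha/\delta \ge 2k\mu^2(1-\eps)$ and $1/\delta = O(\mu n)$) and of Lemma~\ref{lem:SPt-is-large}. By Lemma~\ref{lem:SPt-is-large}, after an initial segment of length $\tau_0 = \lceil\ln(1/\eps)/\delta\rceil = O(\mu n\log(1/\eps))$ generations, in any window of $T$ generations a subset of at least $T/4$ of them have $S(P_{t+\tau_0+\tau}) \ge (1-4\eps)2k\mu^2$, with probability at least $1/3$. By Lemma~\ref{lem:Cross-cheap} (applied with $4\eps$ in place of $\eps$), each such "good" generation has at least $\mu^2/2$ ordered parent pairs $(x,y)$ with $H(x,y) > (1-8\eps)2k$; restricting to distinct genotypes costs only a lower-order correction, so a constant fraction of all parent selections picks such a pair. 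For such a pair there are at least $\lfloor 8\eps k\rfloor$ \emph{fewer} than $2k$ differing bit positions; on the $H(x,y) > (1-8\eps)2k$ disagreeing positions uniform crossover already sets each to the "correct" value of the optimum independently with probability $1/2$, and the at most $2k - H(x,y) \le \lfloor 8\eps k\rfloor$ positions where both parents carry a $0$ must be flipped by mutation, contributing a factor $\Theta((\chi/n)^{\lfloor 8\eps k\rfloor})$; the remaining bits must not be touched, contributing $(1-\chi/n)^{\Theta(n)} = \Theta(1)$. Hence a single crossover-plus-mutation offspring in a good generation is the optimum with probability $\Omega(4^{-(k-\lfloor 8\eps k\rfloor)}(\chi/n)^{\lfloor 8\eps k\rfloor})$, and since a crossover generation produces $\lambdac$ such offspring, the success probability of a good crossover generation is $\Omega(\min\{1,\lambdac\} \cdot 4^{-(k-\lfloor 8\eps k\rfloor)}(\chi/n)^{\lfloor 8\eps k\rfloor})$; in particular at least $\Omega(\lambdac \cdot 4^{-(k-\lfloor 8\eps k\rfloor)}(\chi/n)^{\lfloor 8\eps k\rfloor})$ when this is $\le 1$.

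Now a good generation executes crossover with probability $p_c$ independently of the diversity history, so a good generation is a \emph{successful crossover} generation with probability $\Omega(p_c \lambdac 4^{-(k-\lfloor 8\eps k\rfloor)}(\chi/n)^{\lfloor 8\eps k\rfloor})$. Over a window of $T$ generations, conditionally on $\mathcal{E}_T$ we have $\ge T/4$ good generations, so the probability that none of them succeeds is at most $(1 - \Omega(p_c\lambdac 4^{-\cdots}(\chi/n)^{\cdots}))^{T/4}$. Choosing $T = \Theta\big(\tfrac{1}{p_c\lambdac} 4^{k-\lfloor 8\eps k\rfloor}(n/\chi)^{\lfloor 8\eps k\rfloor}\big)$ (and $T \ge 1$), this is at most, say, $1 - 1/2$ conditioned on $\mathcal{E}_T$, hence a window of length $\tau_0 + T$ finds the optimum with probability $\ge (1/3)(1/2) = 1/6$. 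Restarting this argument (each fresh window again starts from some population on the plateau, so Lemma~\ref{lem:SPt-is-large} applies again), the expected number of windows is $O(1)$, giving an expected number of \emph{generations} on the plateau of $O(\tau_0 + T) = O\big(\mu n\log(1/\eps) + \tfrac{1}{p_c\lambdac}4^{k-\lfloor 8\eps k\rfloor}(n/\chi)^{\lfloor 8\eps k\rfloor}\big)$. To convert generations to fitness evaluations, note each generation makes at most $\lambdac$ evaluations, but a generation makes $\lambdac$ evaluations only with probability $p_c$, so by Wald's equation (the number of evaluations per generation being independent of whether that generation is the last) the expected number of evaluations is $O(1 + \lambdac p_c) = O(\lceil\lambdac p_c\rceil)$ times the expected number of generations — except the additive $+1$ in the per-generation count produces a separate additive $\lambdac$ term from multiplying $\lambdac p_c$ into $\tfrac{1}{p_c\lambdac}4^{\cdots}(n/\chi)^{\cdots}$, and we also carry an additive $O(1/p_c + \lambdac)$ to absorb the "$+1$ evaluation per generation" against the $\frac1{p_c\lambdac}$-scaled term. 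Collecting: $\tau_0$ contributes $O(\lceil\lambdac p_c\rceil \mu n\log(1/\eps))$, the plateau-arrival phase and takeover contribute $O(\lceil\lambdac p_c\rceil(\mu n + n\log n + \mu\log\mu)) = O(\lceil\lambdac p_c\rceil\mu(n\log(1/\eps)+\log\mu))$, and the $T$-term contributes $O\big(4^{k-\lfloor 8\eps k\rfloor}(n/\chi)^{\lfloor 8\eps k\rfloor}(1 + \tfrac{1}{p_c\lambdac})\big) + O(\lambdac + 1/p_c)$, matching the claimed bound. For the simplified bound, $\lambdac = \Omega(1/p_c)$ makes $1/(p_c\lambdac) = O(1)$ and absorbs the additive $1/p_c$ into $\lambdac$.

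The main obstacle is handling the rounding and the conversion between generations and fitness evaluations rigorously: the term $\lceil\lambdac p_c\rceil$ must be produced cleanly from $1 + \lambdac p_c$ (using $1+x \le 2\lceil x\rceil$), and one must be careful that pathological parameter regimes (e.g. $\lambdac$ enormous, or $1/p_c$ enormous) do not break the bound — this is exactly why the additive $\lambdac + 1/p_c$ terms appear. A secondary subtlety is that Lemma~\ref{lem:SPt-is-large} only guarantees $\pr(\mathcal{E}_T) \ge 1/3$ for a \emph{single} window and says nothing about independence across windows; the clean fix is to observe that each window begins from a population on the plateau satisfying Situation~\ref{sit:mulga}, so the lemma applies afresh with a fixed success probability $\ge 1/6$ per window, and a geometric-tail argument bounds the expected number of windows by a constant.
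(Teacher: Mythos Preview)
Your approach is essentially the same as the paper's: split into plateau-arrival and plateau phases, invoke Corollary~\ref{cor:alphadelta} and Lemma~\ref{lem:SPt-is-large}, use Lemma~\ref{lem:Cross-cheap} to get at least $\mu^2/2$ good pairs, and argue via windows of length $\tau_0+T$ with constant success probability per window, then multiply generations by $1+\lambdac p_c$ expected evaluations.

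The one genuine gap, which you yourself flag at the end, is the origin of the additive $\lambdac+1/p_c$. Your choice $T=\Theta\bigl(4^{k-\lfloor 8\eps k\rfloor}(n/\chi)^{\lfloor 8\eps k\rfloor}/(p_c\lambdac)\bigr)$ with merely ``$T\ge 1$'' is not sufficient: writing $p^*=\Theta\bigl(4^{\lfloor 8\eps k\rfloor-k}(\chi/n)^{\lfloor 8\eps k\rfloor}\bigr)$ for the single-offspring success probability, when $\lambdac p^*\ge 1$ the per-good-generation success probability saturates at $\Theta(p_c)$, not $\Theta(p_c\lambdac p^*)$, so you need $T=\Omega(1/p_c)$ to have constant probability that \emph{any} good generation even executes crossover. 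The paper fixes this by taking $T=\lceil(16/p_c)\max\{1/(\lambdac p^*),8\}\rceil$; the $\max$ with a constant ensures $T\ge 128/p_c$, which both guarantees enough crossover attempts and enables a Chernoff bound on the number of good-crossover batches. Multiplying this $T$ by $(1+\lambdac p_c)$ then directly produces $O(\lambdac+1/p_c)+O\bigl((1+1/(p_c\lambdac))/p^*\bigr)$ with no hand-waving. Your route (bounding failure by a product over good generations, avoiding Chernoff) works just as well once you set $T=\Theta\bigl((1/p_c)\max\{1/(\lambdac p^*),1\}\bigr)$.

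Two minor points: the batch success probability should read $\Omega(\min\{1,\lambdac p^*\})$, not $\Omega(\min\{1,\lambdac\}\cdot p^*)$; and the remark about ``restricting to distinct genotypes'' is unnecessary, since Lemma~\ref{lem:Cross-cheap} already counts ordered pairs in $P_t^2$, so a uniform pair with replacement is good with probability at least $1/2$.
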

Loosely speaking, the first term $\lceil\lambdac p_c\rceil \mu(n\log(1/\eps) + \log(\mu))$ bounds the expected time to reach a population on the plateau, and to build up sufficient diversity to facilitate a jump to the optimum. The remaining summands $\lambdac + 1/p_c + 4^{k-\lfloor 8\eps k\rfloor}\left(n/\chi\right)^{\lfloor8\eps k\rfloor} \left(1 + 1/(p_c\lambdac)\right)$ bound the expected time to create the global optimum from a pair of diverse parents.

Note that the parameter $\lambdac$ in the \mulgavar is reflected in the first term, as a factor of $\lceil \lambdac p_c\rceil$, since a large value of~$\lambdac$ may slow down the approach to the plateau. The bound contains a summand $+\lambdac$, which only dominates the runtime if $\lambdac$ is excessively large. For moderate or large values of~$k$ (and $\eps$ not too small), the term $4^{k-\lfloor 8\eps k\rfloor}\left(n/\chi\right)^{\lfloor8\eps k\rfloor} \left(1 + 1/(p_c\lambdac)\right)$ dominates the runtime bound. Here $\lambdac$ has no effect if the term $\left(1 + 1/(p_c\lambdac)\right)$ is dominated by $1$, that is, if $\lambdac = \Omega(1/p_c)$, and otherwise even has a positive effect on the runtime bound.
The reason large values of $\lambdac$ are not harmful is that, once the population is on the plateau and sufficiently diverse, generations executing crossover provide $\lambdac$ independent trials for a successful crossover. Although such a generation comes at a cost of $\lambdac$ fitness evaluations, the probability of achieving a successful crossover is amplified by a factor of roughly~$\lambdac$. These two effects balance each other out. If crossover is executed and sufficiently diverse parents are chosen, it is generally beneficial to use a large $\lambdac$ to amplify the probability of creating a global optimum. \begin{proof}[Proof of Theorem~\ref{the:Runtime-1}]
\newedit{Consider the algorithm on $\jump_k'$, so that the process continues after sampling the all-one-string.} According to Theorem~\ref{the:time-to-plateau}, we have that in $O(\lceil \lambdac p_c\rceil(\mu n + n\log(n)+\mu\log\mu))$ \neweditx{expected} fitness evaluations \newedit{$P_t\subset \plateau$}. Note that then all conditions of Situation~\ref{sit:mulga} are met. 
Since by assumption $\mu = \Omega(n)$, we have $n \log n \in O(\mu n)$ and the bound simplifies to a term absorbed by the stated bound:
\begin{equation}
    \label{eq:first-term-in-upper-bound}
    O(\lceil \lambdac p_c\rceil(\mu n + \mu \log \mu)) = O(\lceil\lambdac p_c\rceil \mu(n\log(1/\eps) + \log(\mu)))
\end{equation}
since $\eps < 1/4$ and thus $\log(1/\eps) \ge 2$.
The idea behind analysing the expected remaining time to find the global optimum is as follows. The remaining optimisation time is divided into so-called \emph{trials} of length $\tau_0 + T$, where $\tau_0 := \lceil \ln(1/\eps)/\delta\rceil$ is chosen according to Lemma~\ref{lem:SPt-is-large}. Intuitively, $\tau_0$ reflects an initial time to build up diversity in the population. The remaining time $T$ is chosen in such a way that the algorithm has sufficient chances during the $\tau_0 + T$ generations to generate the optimum, assuming the population is sufficiently diverse. We will choose $T$ in such a way that the probability of creating the optimum during a trial is $\Omega(1)$. If a trial is unsuccessful, we consider another trial.
Thus, in expectation, $O(1)$ trials are sufficient and the expected number of \emph{generations}, starting with a population on the plateau, is $O(\tau_0 + T)$. 
To bound the number of \emph{function evaluations}, we argue as follows. In every generation, the \mulgavar performs $1+p_c(\lambdac-1) \le 1 + p_c\lambdac$ fitness evaluations in expectation as it performs one evaluation for sure and a generation with crossover performs $\lambdac-1$ additional evaluations. Thus, the expected number of function evaluations, starting with a population on the plateau, is 
\begin{equation}
    \label{eq:abstract-upper-bound}
    O((\tau_0 + T)(1+p_c\lambdac)) = O(\lceil \lambdac p_c\rceil \tau_0 + (1+p_c\lambdac)T)
\end{equation}
where in the first term we used $1+p_c \lambdac \le 2\lceil \lambdac p_c\rceil$.

We now simplify this first summand in~\eqref{eq:abstract-upper-bound}. Recall $\tau_0 := \lceil \ln(1/\eps)/\delta\rceil$  and note that by Corollary~\ref{cor:alphadelta} we have $1/\delta = O(\mu n)$, which implies $\tau_0= O(\mu n \log(1/\eps))$. Together, we have 
\[
    \lceil \lambdac p_c \rceil \tau_0 = O(\lceil \lambdac p_c\rceil \mu n \log(1/\eps)),
\]
which is absorbed by the first term of the claimed upper bound~\eqref{eq:first-term-in-upper-bound}.

It remains to define $T$ in such a way that the probability of a successful trial is $\Omega(1)$, and then to bound the remaining term in~\eqref{eq:abstract-upper-bound}, $(1+p_c\lambdac)T$, from above. To this end, we first consider a bound $\popt$ on the probability of crossover creating the optimum if crossover is executed and sufficiently diverse parents are chosen. Building up to invoke Lemma~\ref{lem:SPt-is-large},  by Corollary~\ref{cor:alphadelta} we have $\alpha/\delta \ge (1-\eps)2k\mu^2$ \newedit{as in~\eqref{eq:eps}}. Let $\mathcal{E}$ be the event that during the trial we have $S(P_t) \ge (1-4\eps)2k\mu^2$ for at least $T/4$ generations in the trial. We call such generations \emph{good} generations. Then $\Pr(\mathcal E)\ge 1/3$ by Lemma~\ref{lem:SPt-is-large}. 

By Lemma~\ref{lem:Cross-cheap}, each good generation has at least $\mu^2/2$ pairs $(x_1,x_2)\in P_t^2$ with distance $H(x_1,x_2) > 2k(1-8\eps)$ from each other, which we call \emph{good pairs}. This implies that the number $r$ of positions in which both $x_1$ and $x_2$ have a zero-bit is less than $8\eps k$. Since $r$ is integral, we even have $r \le \lfloor8\eps k\rfloor$. In a good generation, a good pair is picked for crossover with probability at least $p_c/2$. 
We call this a \emph{good-crossover batch} as the algorithm produces a batch of $\lambdac$ offspring via crossing the same good parents. 
With each offspring in such a batch it may create the optimum by (i) setting all $2(k-r)$ differing bit positions to~$1$ via crossover, (ii) correcting the remaining $r$ bits via mutation, and (iii) not flipping any of the correct bits with mutation. The probability of each of these steps is $2^{-2(k-r)} = 4^{r-k}$, $(\chi/n)^r$, and $(1-\chi/n)^{n-r} = \Theta(1)$, respectively. As all these events are independent and $r \le \lfloor8\eps k\rfloor$, the probability of a fixed offspring in a good-crossover batch being the optimum is at least
\[
    \popt \coloneqq 4^{\lfloor8\eps k\rfloor-k} \cdot (\chi/n)^{\lfloor8\eps k\rfloor} \cdot (1-\chi/n)^{n-\lfloor8\eps k\rfloor}.
\]
Now we use $\popt$ to define the time period
\[
    T \coloneqq \left\lceil \frac{\newedity{16}}{p_c} \cdot \max\left\{\frac{1}{\lambdac \popt}, \newedity{8}\right\}\right\rceil.
\]
%

\newedity{Denote by $G$ the number of good-crossover batches during a trial, and let $\mathcal A$ be the event ``$G \ge p_c T/16$.'' We claim that $\Pr(\mathcal E \cap (\neg \mathcal A)) \le 1/6$. To see this, consider an auxiliary process in which the duration of the trial is made infinite, i.e., we just continue the trial beyond the first $\tau+T$ generations. Let $G'$ be the number of good-crossover batches among the first $T/4$ good generations of the auxiliary process. Note that these exist because we made the process infinitely long and Lemma~\ref{lem:SPt-is-large} can be applied repeatedly.}

\newedity{Recall that a good generation begins with a population of high diversity that contains at least $\mu^2/2$ good pairs. The decision to perform crossover, as well as the selection of parents, is independent of decisions made in other generations. Therefore, in every good generation, a good crossover batch occurs independently with probability at least $p_c/2$. This implies that $G'$ stochastically dominates a binomial random variable~$X$ with parameters $T/4$ and $p_c/2$. In particular, $\E[X] = p_c T/8 \ge 16$, owing to the maximum in the definition of~$T$. 
Moreover, we have $(\mathcal{E} \cap (G < p_c T/16)) \subseteq (\mathcal{E} \cap (G' < p_c T/16))$ since $\mathcal{E}$ implies that $G' \le G$.
Therefore,
\begin{align*}
\Pr(\mathcal E \cap (\neg \mathcal A)) & = \Pr(\mathcal E \cap (G < p_c T/16))  \\ & \le \Pr(\mathcal E \cap (G' < p_cT/16)) \\ & 
\le \Pr(G' < p_cT/16) \\ & \le \Pr(X < p_cT/16) \le e^{-\E[X]/8} \le e^{-2} \le \frac{1}{6},
\end{align*}
where the first step in the last line follows from $p_c T/16 = \E[X]/2$ and the Chernoff bound~\cite[(1.10.12)]{doerr-theory-chapter}. In particular,
\begin{align*}
\Pr(\neg( \mathcal E \cap \mathcal A)) \le \Pr(\neg \mathcal E) + \Pr(\mathcal E \cap (\neg \mathcal A)) \le \frac{2}{3} + \frac{1}{6} = \frac{5}{6},
\end{align*}
and thus $\Pr(\mathcal E \cap \mathcal A) \ge 1/6$.
}


\newedity{If $\mathcal A$ occurs then} $G \ge \newedity{p_cT/16} \ge 1/(\lambdac \popt)$, \newedity{hence} at least $\lambdac G \ge 1/\popt$ offspring are created in good-crossover batches. Then the optimum is created during the trial with probability at least 
\[
    1-(1-\popt)^{1/\popt} \ge 1-\frac{1}{e}.
\]
\newedity{To conclude, let $\mathcal B$ be the event that the optimum is created during the trial. Then
\begin{align*}
    \Pr(\mathcal B) \ge \Pr(\mathcal E \cap \mathcal A \cap \mathcal B) & = \Pr(\mathcal B \mid \mathcal E \cap \mathcal A)\cdot \Pr(\mathcal E \cap \mathcal A) \ge (1-1/e)/6.
\end{align*}}
Hence, each trial is successful with probability $\Omega(1)$ and the term $(1+p_c\lambdac)T$ contributing to the time bound is at most
\begin{align*}
    (1+p_c\lambdac)\left(1 + \frac{\newedity{16}}{p_c\lambdac \popt} + \frac{\newedity{128}}{p_c}\right)
    =\;& O\left(\lambdac + \frac{1}{p_c} + \frac{1}{\popt} \left(1 + \frac{1}{p_c\lambdac}\right)\right).
\end{align*}
Plugging in $\popt = \Theta(4^{\lfloor8\eps k\rfloor-k} \cdot (\chi/n)^{\lfloor8\eps k\rfloor})$ completes the proof of the first bound.

The second bound follows easily from the first one. Since $\lambdac = \Omega(1/p_c)$, the term $+1/p_c$ is absorbed by the $+\lambdac$ term and the term $(1+1/(p_c\lambdac))$ disappears since it simplifies to a factor of $O(1)$.
\end{proof}

This general bound from Theorem~\ref{the:Runtime-1} can be rewritten as follows if we set $\eps \coloneqq \eps(k) = 1/(16k)$. With an $\eps$ this small, the possibly dominating term $n^{\lfloor 8\eps k\rfloor}$ becomes $n^{\lfloor 1/2 \rfloor} = n^0 = 1$ and so disappears completely. The only catch is that this only works for a restricted range of $k$, since we need $k \le \neweditx{(n-k)\eps/3}$ or, equivalently, $k^2 \le (n-k)/48$, which in particular implies $k = O(\sqrt{n})$.

\begin{theorem}
\label{the:Runtime-2}
Consider the \mulgavar as in Algorithm~\ref{alg:steady-state-GA}, using standard bit mutation with mutation probability $\chi/n$ and uniform crossover with crossover probability $p_c$ on $\jump_k$.
If $\lambdac \ge 6\sqrt{k}e^{\chi}\ln(\mu)$ and $\lambdac = \Omega(1/p_c)$, $\chi=\Theta(1)$, $p_c \le \frac{1}{48k}\frac{2C_2}{2C_2+9\chi/4}$, $\mu \ge 1 + 48kn\neweditx{\frac{2C_1+3p_c}{2C_2}}$ and $k^2\le (n-k)/48$ where $C_1,C_2$ are from~\eqref{eq:C1C2} the global optimum is found in
\begin{align}
\label{eq:simplified-bound-with-lambda-pc}
O \big(\lceil \lambdac p_c\rceil \mu(n\log(k) + \log(\mu)) + \lambdac + 4^k \big)
\end{align}
expected fitness evaluations. If we fix $\lambdac \coloneqq \lceil \max\{6\sqrt{k}e^{\chi}\ln(\mu), 1/p_c\}\rceil$ and also have the weak requirements $p_c = \Omega(4^{-k})$ 
and $\mu \le 2^{n\log k}$,
this upper bound simplifies to
\begin{align}\label{eq:final_result}
O \big(\mu n\log(\mu) + 4^k \big)
=
\begin{cases}
O \big(\mu n\log(\mu)\big) & \text{if $k \le \log(\mu n \log(\mu))/2$}\\
        O\big(4^k\big) & \text{if $k \ge \log(\mu n \log(\mu))/2$.}
    \end{cases}
\end{align}
\end{theorem}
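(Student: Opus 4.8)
The plan is to derive both statements directly from Theorem~\ref{the:Runtime-1} by instantiating $\eps \coloneqq 1/(16k)$, for which $\lfloor 8\eps k\rfloor = \lfloor 1/2\rfloor = 0$, so that the potentially dominating factor $(n/\chi)^{\lfloor 8\eps k\rfloor}$ collapses to~$1$ and the last summand of Theorem~\ref{the:Runtime-1} becomes just $4^k$. First I would verify that all hypotheses of Theorem~\ref{the:Runtime-1} hold for this $\eps$: the requirement $p_c \le \tfrac{\eps}{3}\cdot\tfrac{2C_2}{2C_2+9\chi/4}$ becomes precisely $p_c \le \tfrac{1}{48k}\cdot\tfrac{2C_2}{2C_2+9\chi/4}$, the requirement $\mu \ge 1 + \tfrac{3n}{\eps}\cdot\tfrac{2C_1+3p_c}{2C_2}$ becomes $\mu \ge 1 + 48kn\cdot\tfrac{2C_1+3p_c}{2C_2}$, and $k \le (n-k)\eps/3$ becomes $k^2 \le (n-k)/48$; the remaining hypotheses $\lambdac \ge 6\sqrt{k}e^{\chi}\ln(\mu)$, $\lambdac = \Omega(1/p_c)$ and $\chi = \Theta(1)$ are assumed, and $\eps < 1/4$ holds for $k \ge 2$ (the cases $k \le 1$ are degenerate, since then $\jump_k$ presents no fitness valley, so we assume $k \ge 2$ throughout). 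Plugging $\eps = 1/(16k)$ into the second (simplified) bound of Theorem~\ref{the:Runtime-1} and using $\log(1/\eps) = \log(16k) = O(\log k)$ for $k \ge 2$ yields~\eqref{eq:simplified-bound-with-lambda-pc}.

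For the cleaner bound~\eqref{eq:final_result} I would substitute the concrete choice $\lambdac \coloneqq \lceil\max\{6\sqrt{k}e^{\chi}\ln(\mu),\, 1/p_c\}\rceil$ into~\eqref{eq:simplified-bound-with-lambda-pc} and simplify term by term. The crucial quantity is $\lceil\lambdac p_c\rceil$: from $\lambdac p_c \le \max\{6\sqrt{k}e^{\chi}\ln(\mu)\cdot p_c,\,1\} + p_c$ together with $p_c \le \tfrac{1}{48k}$, $\chi = O(1)$ and $p_c \le 1$, one gets $\lceil\lambdac p_c\rceil = O(\max\{1,\ \ln(\mu)/\sqrt{k}\})$. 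Feeding this back into the first summand $\lceil\lambdac p_c\rceil\mu(n\log k + \log\mu)$, I would show it is $O(\mu n\log\mu)$ using the elementary facts $\log k = O(\sqrt{k})$ and $\log k/\sqrt{k} = O(1)$ (to handle $\frac{\ln\mu}{\sqrt k}\mu n\log k$), the bound $\log k = O(\log\mu)$ (which follows from $\mu = \Omega(kn)$, itself implied by the hypothesis on $\mu$ and $C_1,C_2 = \Theta(1)$), and the bound $\log\mu \le n\log k = O(n\sqrt{k})$ (which follows from $\mu \le 2^{n\log k}$) --- the last inequality is exactly what tames the term $\mu(\log\mu)^2/\sqrt{k}$ arising in the case $\ln(\mu)/\sqrt{k} > 1$. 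The additive term $+\lambdac = O(\sqrt{k}\ln\mu + 1/p_c)$ is absorbed, because $\sqrt{k}\ln\mu = O(\mu n\log\mu)$ (again by $\mu = \Omega(kn)$) and $1/p_c = O(4^k)$ by the assumption $p_c = \Omega(4^{-k})$. What remains is $O(\mu n\log\mu + 4^k)$.

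Finally I would make the case distinction explicit by comparing the two summands: $4^k = 2^{2k} \ge \mu n\log\mu$ is equivalent to $2k \ge \log(\mu n\log\mu)$, i.e.\ $k \ge \log(\mu n\log\mu)/2$; hence $O(\mu n\log\mu + 4^k)$ equals $O(\mu n\log\mu)$ when $k \le \log(\mu n\log\mu)/2$ and $O(4^k)$ when $k \ge \log(\mu n\log\mu)/2$, which is~\eqref{eq:final_result}.

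I expect the main obstacle to be the bookkeeping of the middle paragraph: establishing $\lceil\lambdac p_c\rceil\mu(n\log k + \log\mu) = O(\mu n\log\mu)$ requires juggling the upper bound $p_c \le 1/(48k)$, the two-sided constraints $\mu = \Omega(kn)$ and $\mu \le 2^{n\log k}$, and several small numeric inequalities relating $\log k$ to $\sqrt{k}$, with a case split on whether $\ln(\mu)/\sqrt{k}$ exceeds~$1$ applied to each sub-term. None of this is deep, but it is easy to be sloppy, and it is precisely where the parameter constraints of the theorem are consumed.
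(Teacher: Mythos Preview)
Your proposal is correct and follows essentially the same approach as the paper: instantiate $\eps = 1/(16k)$ in Theorem~\ref{the:Runtime-1} so that $\lfloor 8\eps k\rfloor = 0$, then for the second statement bound $\lceil\lambdac p_c\rceil = 1 + O(\log(\mu)/\sqrt{k})$ via $p_c = O(1/k)$, use $\log\mu \le n\log k$ and $\log(k)/\sqrt{k} = O(1)$ to collapse the first summand to $O(\mu n\log\mu)$, and absorb $+\lambdac$ via $p_c = \Omega(4^{-k})$. The paper's writeup is a bit terser (it bounds $n\log k + \log\mu \le 2n\log k$ in one line rather than splitting into cases on $\ln(\mu)/\sqrt{k}$), but the substance is identical.
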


\begin{proof}
For $\eps = 1/(16k)$ the first statement follows immediately from Theorem~\ref{the:Runtime-1} since $\lfloor8\eps k\rfloor = 0$ and thus $4^{k-\lfloor 8\eps k\rfloor}\left(n/\chi\right)^{\lfloor8\eps k\rfloor} = 4^k$.
The second statement exploits $\lceil \lambdac p_c\rceil \le 1 + O(\sqrt{k}\log(\mu) \cdot p_c) \le 1 + O(\log(\mu)/\sqrt{k})$, which follows from the condition $p_c = O(1/k)$. Along with $\log(\mu) \le n\log k$, the first summand in~\eqref{eq:simplified-bound-with-lambda-pc} simplifies to
\begin{align*}
    \mu \lceil \lambdac p_c\rceil(n\log(k) + \log(\mu))
    \le\;& \mu (1 + O(\log(\mu)/\sqrt{k})\neweditx{)} \cdot 2n\log(k)\\
    =\;& 2\mu n\log(k) + O(\mu n \log(\mu)\log(k)/\sqrt{k})\\
    =\;& O(\mu n \log(\mu))
\end{align*}
where the last step used $\log(k)/\sqrt{k} = O(1)$ and $k \le \mu$.
The additional $+\lambdac$ term is absorbed as by assumption $\lambdac = O(\sqrt{k}\log(\mu) + 1/p_c) = O(\sqrt{k}\log (\mu) + 4^k)$ using the assumption $p_c = \Omega(4^{-k})$. 

The right-hand side of~\eqref{eq:final_result} follows by observing that 
\[
    k \le \log(\mu n \log(\mu))/2 \Leftrightarrow 4^k \le \mu n \log(\mu)
\]
and, depending on $k$, the respective smaller term can be dropped.
\end{proof}

Theorem~\ref{the:Runtime-2} shows that, for small jump sizes $k$, $k \le \log(\mu n \log(\mu))/2$, the bound on the expected optimisation time is $O(\mu n \log(\mu))$. This indirectly depends on $k$ since we require $\mu = \Omega(kn)$. However, for the smallest value of~$\mu$ satisfying the conditions of Theorem~\ref{the:Runtime-2} we have $\mu = \Theta(kn)$ and then $O(\mu n \log(\mu)) = O(kn^2 \log(kn)) = O(n^2\log^2 n)$. Now this bound is independent of~$k$ and it applies to all $k \le \log(\mu n \log(\mu))/2$. 
Here the (upper bound on the) expected time to create the optimum via a crossover of good parents is dominated by the (upper bound on the) expected time to reach the plateau and to build up population diversity. 
For larger~$k$, the converse is true and the expected time to jump to the optimum via crossover dominates the expected optimisation time.
Finally, note that, for constant~$k$, we obtain upper bounds of $O(kn^2 \log(kn)) = O(n^2 \log n)$.

\subsection{Lower Bounds}
\label{sec:lower-bounds}

\begin{algorithm2e}[ht]
\DontPrintSemicolon
  $t \gets 0$\;
  Let $H_0 \subset \plateau$ be arbitrary.\;
  \While{global optimum not found}{
    Decide whether to apply crossover\;
        \uIf{crossover is applied}{
            Select some $x_{1},x_{2}$ from $H_t$\;
            $y \gets \mathrm{crossover}(x_{1},x_{2})$\;
            $z \gets \mutation(y)$\;
        }
        \Else{
            Select some $y$ from $H_t$\;
            $z \gets \mutation(y)$\;
        }
        $f_t \gets \min\{f(x) \mid x \in H_t\}$\;
        \lIf{$f(z) \ge f_t$}{$H_{t+1} \gets H_t \cup \{z\}$}
    $t \gets t+1$
}
\caption{Scheme for elitist black-box algorithms on \plateau.}
\label{alg:black-box-general}
\end{algorithm2e}

\begin{algorithm2e}[ht]
\DontPrintSemicolon
  $t \gets 0$\;
  Let $H_0 \subset \plateau$ be arbitrary.\;
  \While{global optimum not found}{
    Choose $b\in [0,1)$ uniformly at random\;
    \For{$i=1$ \KwTo $\lambda$}{
        \uIf{$b < p_c$}{
            Select some $x_{i, 1},x_{i, 2}$ from $H_t$\;
            $y_i \gets \mathrm{crossover}(x_{i, 1},x_{i, 2})$\;
            $z_i \gets \mutation(y_i)$\;
        }
        \Else{
            Select some $y_i$ from $H_t$\;
            $z_i \gets \mutation(y_i)$\;
        }
        $f_t \gets \min\{f(x) \mid x \in H_t\}$\;
        $H_{t+1} \gets H_t \cup \{z_i \mid i \in [\lambda], f(z_i) \ge f_t\}$\;
    }
    $t \gets t+1$
}
\caption{Scheme of elitist black-box algorithms with crossover probability $p_c \in [0,1]$ and static offspring population size $\lambda$ on \plateau}
\label{alg:black-box-specific}
\end{algorithm2e}

The following theorem gives a lower bound for the expected time to discover the global optimum from the plateau. It is very general as it applies to  \neweditx{a large class of elitist} black-box algorithms using unary unbiased mutation operators~\cite{Lehre2012}. In brief, a black-box algorithm iteratively evaluates the fitness of a search point (the initial one is chosen uniformly at random) and then chooses the next search point to be evaluated based on the history of previously evaluated search points and their fitness. \neweditx{We assume that the algorithm starts with an arbitrary subset of $\plateau$ and that it only accepts search points with at least the same fitness. Note that elitist algorithms starting with a uniform random population may fit into the scheme of Algorithm~\ref{alg:black-box-general} once the whole population has reached the plateau. Instead of considering a population $P_t$, in Algorithm~\ref{alg:black-box-general} we consider a \emph{history} $H_t$ of unbounded size, from which parents can be selected. Algorithm~\ref{alg:black-box-general} shows a scheme of this class of algorithms that employs some unbiased mutation operator.} The class of unary unbiased mutation operators includes standard bit mutation, operators flipping a fixed number of bits, and heavy-tailed mutations~\cite{Doerr2017-fastGA} that choose the number of flipping bits from a heavy-tailed distribution such as a power-law distribution.

\neweditx{In particular, Algorithm~\ref{alg:black-box-general} includes the \mulgavar once the population is on the plateau. Note that parent selection and the decision whether to perform crossover in Algorithm~\ref{alg:black-box-general} can be implemented such that we obtain one generation of the \mulgavar; in case crossover is executed, it would spend the next $\lambdac$ steps generating offspring from the same parents.}

The theorem also gives \neweditx{a lower bound~\eqref{eq:lower-runtime-bound-2}} for ``conventional'' algorithms that execute crossover with probability $p_c$ and create the same number $\lambda$ of offspring, regardless of whether crossover is used or not\neweditx{, as shown in Algorithm~\ref{alg:black-box-specific}. Note that this algorithm is contained in the scheme of Algorithm~\ref{alg:black-box-general} as the latter can simulate the generation of a batch of $\lambda$ offspring in $\lambda$ subsequent steps}. Our \mulgavar intentionally deviates from \neweditx{the scheme of Algorithm~\ref{alg:black-box-specific}}, hence this lower bound does not apply to the \mulgavar.
\begin{theorem}
\label{the:lower-bound}
    Consider any \neweditx{elitist} black-box algorithm $\mathcal{A}$ on $\jump_k$ with $k \le \sqrt{n}/2$ \neweditx{fitting the scheme of Algorithm~\ref{alg:black-box-general}: it} creates offspring by either applying some unary unbiased variation operator to a solution from $\plateau$ or by applying uniform crossover to two parents from $\plateau$, followed by some unary unbiased mutation. Then there is a constant $C > 0$ such that the expected number of function evaluations to find the global optimum is at least $\min\{C \cdot 4^k, \binom{n}{k}\}$.

    Moreover, assume that $\mathcal{A}$ \neweditx{fits the more specific scheme of Algorithm~\ref{alg:black-box-specific}:} in each generation it independently decides whether to execute crossover with probability $p_c \in (0, 1)$ and then creates a batch of $\lambda$ offspring with or without crossover as above, respectively, where $\lambda \in \N$ is an arbitrary parameter. Then the expected  number of function evaluations to find the global optimum is at least
    \begin{align}\label{eq:lower-runtime-bound-2}
        \frac{1}{2} \cdot \min\left\{\frac{\binom{n}{k}}{1-p_c}, \frac{C \cdot 4^k}{p_c}\right\}
    \end{align}
\end{theorem}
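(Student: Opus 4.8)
The plan is to prove both lower bounds by controlling, uniformly over the entire past, the probability that a single offspring equals the optimum $1^n$, and then converting this per\nobreakdash-offspring bound into a bound on the expected number of function evaluations via an elementary hitting\nobreakdash-time argument. The starting point is that on $\jump_k$ every point of $\plateau$ has fitness~$n$ and the only point of higher fitness is $1^n$; hence, by elitism, the history satisfies $H_t\subseteq\plateau$ as long as $1^n$ has not been created, so in every step of Algorithm~\ref{alg:black-box-general} or~\ref{alg:black-box-specific} all selected parents lie in $\plateau$. I will also use the standard property of unary unbiased operators~\cite{Lehre2012} that, conditioned on flipping $m$ bits, the flipped set is uniform among the $m$\nobreakdash-element subsets of $[n]$.

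Consider first a mutation-only offspring $z=\mathrm{mutation}(y)$ with $y\in\plateau$. Since $\ones{z}=n$ forces the mutation to flip precisely the $k$ zero-positions of $y$ and no other bit, $\Pr(z=1^n\mid y)\le 1/\binom nk$. Now consider $z=\mathrm{mutation}(\mathrm{crossover}(x_1,x_2))$ with $x_1,x_2\in\plateau$; let $r$ be the number of positions at which both parents carry a zero and $d=2(k-r)$ the number of disagreeing positions. Uniform crossover copies the ones outside the union of the two zero-sets, copies the zeros on the $r$ common-zero positions, and sets each disagreeing bit to an independent fair coin, so the zero-set $W$ of the crossover output has $\card W=r+B$ with $B\sim\Bin(d,1/2)$; for $z=1^n$ the mutation must flip exactly $W$, which happens with probability at most $1/\binom n{\card W}$. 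Averaging over $B$,
\[
\Pr(z=1^n\mid x_1,x_2)\ \le\ \sum_{b=0}^{d}\binom db\,2^{-d}\,\binom n{r+b}^{-1}.
\]
Since $k\le\sqrt n/2$ we have $r+b\le 2k\le\sqrt n\le n/2$, so the binomial coefficients are increasing in the index; bounding $\binom db\le d^b/b!$ and $\binom n{r+b}\ge(n/2)^{r+b}/(r+b)!$ shows that the $b=0$ term equals $2^{-d}\binom nr^{-1}=4^{-(k-r)}\binom nr^{-1}\le 4^{-k}$ (using $\binom nr\ge(n/r)^r\ge 4^r$) and that consecutive terms decrease by a constant factor from a small index on, so the whole sum is at most $c\cdot 4^{-k}$ for an absolute constant $c\ge 1$. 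In particular, since $\binom nk\ge(n/k)^k\ge 4^k$, also $1/\binom nk\le 4^{-k}\le c\cdot 4^{-k}$.

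For Algorithm~\ref{alg:black-box-general}, each step performs one evaluation and, by the above, creates $1^n$ with probability at most $q:=c\cdot 4^{-k}$ conditioned on the whole past; hence the first hitting time~$T$ obeys $\Pr(T>t)\ge(1-q)^t$ for every~$t$, so $\E(T)\ge\sum_{t\ge0}(1-q)^t=1/q=4^k/c$ (if $q\ge1$ the claimed bound is at most~$1$ and thus trivial). As $4^k/c\le\binom nk$, this is the first bound with $C:=1/c$. For Algorithm~\ref{alg:black-box-specific}, I instead argue per generation: a generation uses $\lambda$ evaluations and, by a union bound over its $\lambda$ offspring (each bounded as above regardless of how the parents in $\plateau$ are chosen), creates $1^n$ with probability at most $\lambda c\cdot 4^{-k}$ if it is a crossover generation and at most $\lambda/\binom nk$ otherwise; averaging over the generation type gives a per-generation success probability of at most $\lambda q'$ with $q':=p_c\,c\,4^{-k}+(1-p_c)/\binom nk$. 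The same hitting-time argument, counting $\lambda$ evaluations per generation, yields an expectation of at least $1/q'$ evaluations, and $q'\le 2\max\{p_c\,c\,4^{-k},(1-p_c)/\binom nk\}$ gives $1/q'\ge\frac12\min\{C\cdot 4^k/p_c,\ \binom nk/(1-p_c)\}$ with the same $C=1/c$, which is~\eqref{eq:lower-runtime-bound-2}.

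I expect the main obstacle to be the estimate of $\sum_{b}\binom db\,2^{-d}\binom n{r+b}^{-1}$: one must verify, uniformly over $0\le r\le k$, that the $b=0$ term is at most $4^{-k}$ and that the ratio of consecutive terms stays below an absolute constant $<1$ from some small index onwards, which is precisely where $k\le\sqrt n/2$ (equivalently $k^2\le n/4$) enters, keeping all indices below $n/2$ and making the tail summable. A lesser, bookkeeping-type point is to make the two reductions precise: that elitism really pins $H_t$ to $\plateau$ before $1^n$ appears, that Algorithm~\ref{alg:black-box-general} subsumes the \mulgavar on the plateau (a crossover generation being simulated by $\lambdac$ consecutive steps on the same parents), and that the scheme of Algorithm~\ref{alg:black-box-specific}, with a fixed offspring batch size independent of whether crossover is used, is exactly the class for which~\eqref{eq:lower-runtime-bound-2} is claimed.
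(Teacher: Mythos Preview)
Your proposal is correct and follows essentially the same route as the paper: bound the per-offspring probability of hitting $1^n$ (at most $1/\binom nk$ for mutation-only and $O(4^{-k})$ for crossover-plus-mutation via the sum over the number of disagreeing bits set correctly), then convert this into a hitting-time lower bound, with the second claim handled identically by a union bound over the $\lambda$ offspring and averaging over the crossover decision. For the sum you flag as the obstacle, the paper avoids your crude bounds $\binom db\le d^b/b!$ and $\binom n{r+b}\ge(n/2)^{r+b}/(r+b)!$ and instead computes the exact ratio of consecutive terms, $\frac{d-b+1}{b}\cdot\frac{r+b}{n-r-b+1}$ in your notation, observes both factors are monotone, and shows the ratio is at most $\frac{(k+1)^2}{2(n-k)}\le\tfrac12$ for all $b\ge1$ under $k\le\sqrt n/2$; hence the whole sum is at most $2T_0\le 2\cdot4^{-k}$, giving $C=1/2$.
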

In particular, the bound~\eqref{eq:lower-runtime-bound-2} in Theorem~\ref{the:lower-bound} applies to all \muga algorithms analysed on $\jump_k$ in previous work~\cite{Jansen2002,Koetzing2011a,Dang2017,Doerr2024} and yields a lower bound of $\Omega(4^k/p_c)$ for all $k \le \sqrt{n}/2$ and $p_c \ge (4k/n)^k$ (as then $4^k/p_c \le (n/k)^k \le \binom{n}{k}$). Note that~\eqref{eq:lower-runtime-bound-2} does not apply to the \mulgavar since this algorithm only creates one offspring instead of $\lambda$ offspring in case of mutation. 
Since the \mulgavar has an upper bound of $O(4^k)$ for all $k \ge \log(\mu n \log(\mu))/2$, Theorem~\ref{the:lower-bound} proves that it has an asymptotically optimal performance for such~$k$, and that it is provably faster than the standard \muga by a factor of at least $1/p_c$.
\begin{proof}[Proof of Theorem~\ref{the:lower-bound}]
    Consider a generation executing crossover on two individuals $x_1,x_2 \in P_t$ with Hamming distance $2d$ on the plateau. (The number must be even since the number of $0/1$-bits and $1/0$ bits in $x_1,x_2$ must be equal when both are in \plateau.) \newedit{Let} $y$ be an offspring created by crossover and standard bit mutation with mutation probability~$p_m$. Note that $2d \le 2k$ by the triangle inequality, because both parents have distance  $k$ from the optimum. 

Now, if crossover sets $i$ of the $2d$ bits to one in which the parents differ, the offspring has $k+d-i$ zeros. To create the optimum, all zeros and no ones must be flipped. It is well known (see, e.\,g\neweditx{.,}\ Lemma~1 in~\cite{Doerr2020}) that every unbiased mutation operator can be described as follows: pick a radius $r$ (deterministically or according to some probability distribution) and then flip $r$ different bits chosen uniformly at random. Assuming that the mutation operator picks radius $r=k+d-i$ (otherwise it is impossible to create the optimum), the probability of choosing precisely all zeros to be flipped is $1/\binom{n}{r} = 1/\binom{n}{k+d-i}$.
Thus, the probability of $y$ being the optimum is 
\begin{align}\label{eq:prob-of-hitting-opt}
\pr[y = \vec 1] \neweditx{\; \leq \;} \sum\nolimits_{i=0}^{2d} \binom{2d}{i}2^{-2d} \cdot \frac{1}{\binom{n}{k+d-i}} = 4^{-d} \sum\nolimits_{i=0}^{2d} \binom{2d}{i}/\binom{n}{k+d-i}.
\end{align}
(This generalises~\cite[Lemma~2]{Dang2017} to arbitrary unbiased mutation operators.)
Denoting the summands in~\eqref{eq:prob-of-hitting-opt} as $S_0, \dots, S_{2d}$, we have, for all $i \in \{1, \dots, 2d\}$,
\begin{align*}
    \frac{S_i}{S_{i-1}} = \frac{\binom{2d}{i}}{\binom{2d}{i-1}} \cdot \frac{\binom{n}{k+d-i+1}}{\binom{n}{k+d-i}}= \frac{2d-i+1}{i} \cdot \frac{n-k-d+i}{k+d-i+1}= \frac{2d-i+1}{k+d-i+1} \cdot \frac{n-k-d+i}{i}
\end{align*}
and we observe that both factors are non-increasing with~$i$. Thus, the minimum of $S_i/S_{i-1}$ is attained for $i=2d$ and we get
\begin{align*}
    \frac{S_i}{S_{i-1}} \ge\;& \frac{1}{k-d+1} \cdot \frac{n-k+d}{2d} \ge \frac{n-k}{2d(k-d+1)}.
\end{align*}
The denominator is maximised for $d = (k+1)/2$, which implies 
\begin{align*}
    \frac{S_i}{S_{i-1}} \ge\;& \frac{n-k}{(k+1)((k-1)/2+1)} = \frac{2(n-k)}{(k+1)^2} \ge 2
\end{align*}
by assumption on~$k$.
Hence, the whole sum is dominated by the term for $i=2d$ and 
\begin{align}
\label{eq:probability-of-y-being-optimum}
\pr[y = \vec 1] = O\left(4^{-d} /\binom{n}{k-d} \right),
\end{align}
which is maximised for $2d=2k$ where $\binom{n}{k-d}=1$. Therefore, in a crossover step followed by mutation, for any two parents on the plateau, the probability that the offspring is the optimum is $O(4^{-k})$.

In generations without crossover, the parent has $k$ zeros and the probability of creating the optimum by mutation is at most $1/\binom{n}{k}$. Since in every offspring creation the probability of creating the optimum is bounded by $\max\{4^{-k}/C, 1/\binom{n}{k}\}$, where $C >0$ is the reciprocal of the implicit constant in $O(4^{-d})$, the expected number of fitness evaluations to generate the optimum is at least $\min\{C \cdot 4^k, \binom{n}{k}\}$ as claimed.

The second statement follows from noticing that, taking a union bound over $\lambda$ offspring, the probability of one generation creating the optimum is at most
\[
    \lambda\left((1-p_c) \cdot \frac{1}{\binom{n}{k}} + p_c \cdot C \cdot 4^{-k}\right)
    \le 2\lambda \max\left\{(1-p_c) \cdot \frac{1}{\binom{n}{k}}, \ p_c \cdot C \cdot 4^{-k}\right\}
\]
and so the expected number of generations is at least
\[
    \frac{1}{2\lambda} \cdot \min\left\{\frac{\binom{n}{k}}{1-p_c}, \frac{C \cdot 4^k}{p_c}\right\}.
\]
Multiplying by $\lambda$ yields the claimed bound on the expected number of function evaluations.
\end{proof}

Note that the condition $k^2\le (n-k)/48$ in Theorem~\ref{the:Runtime-2} implies $k \le \sqrt{n}/2$ (as required by the lower bound in Theorem~\ref{the:lower-bound}) and $4^k \le \binom{n}{k}$. Thus, we conclude that in this regime the upper bound of Theorem~\ref{the:Runtime-2} is tight.
\begin{corollary}
    \label{cor:Corollary-4-to-k}
Consider the \mulgavar on $\jump_{k}$ with $k, \mu, \lambdac$ and $p_c$ meeting the conditions of the second statement of Theorem~\ref{the:Runtime-2}.
If $k \ge \log(\mu n \log(\mu))$ then the expected optimisation time on $\jump_{k}$ is $\Theta(4^k)$.
\end{corollary}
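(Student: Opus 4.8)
The plan is to obtain $\Theta(4^k)$ by pairing the upper bound that Theorem~\ref{the:Runtime-2} already provides with the lower bound of Theorem~\ref{the:lower-bound}, after a short check that the parameter ranges match up. For the upper bound, the hypotheses of the corollary are exactly those of the second statement of Theorem~\ref{the:Runtime-2}, which yields expected optimisation time $O(\mu n\log(\mu)+4^k)$. Since $\log$ is the binary logarithm and $k\ge \log(\mu n\log(\mu))$, we have $4^k=2^{2k}\ge(\mu n\log(\mu))^2\ge \mu n\log(\mu)$, so the bound collapses to $O(4^k)$; this is just the lower branch of the case distinction~\eqref{eq:final_result}, which needs only $k\ge\log(\mu n\log(\mu))/2$.

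For the lower bound, the first step is to check that the conditions of Theorem~\ref{the:Runtime-2} supply the hypotheses of Theorem~\ref{the:lower-bound}: from $48k^2\le n-k$ we get $k^2<n/4$, hence $k\le\sqrt n/2$, and also $n\ge 48k^2+k\ge 49k$, hence $\binom nk\ge(n/k)^k\ge 4^k$. Theorem~\ref{the:lower-bound} then gives a lower bound of $\min\{C\cdot 4^k,\binom nk\}=\Omega(4^k)$ on the expected number of function evaluations for every elitist black-box algorithm fitting the scheme of Algorithm~\ref{alg:black-box-general}, and, as remarked below Theorem~\ref{the:lower-bound}, once its entire population lies on \plateau the \mulgavar is precisely such an algorithm. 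Combining this with the $O(4^k)$ upper bound gives $\Theta(4^k)$.

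The point that needs care — and which I expect to be the only genuine obstacle — is the transition from ``the \mulgavar once its population is on \plateau'' to ``the \mulgavar started from a uniform random population'', since Theorem~\ref{the:lower-bound} as stated bounds only the former. I would close this gap by arguing that the optimum is created before the population reaches \plateau only with probability $o(1)$. The corollary's parameter conditions force $\log\mu=O(\sqrt n)$ (from $2^k\ge\mu n\log(\mu)$ and $k\le\sqrt n/2$), so a Chernoff bound shows that with high probability every individual of the initial population has between $n/8$ and $n-k$ one-bits; elitism keeps all individuals in that range until $1^n$ is created, because search points in the ``gap'' $n-k<\ones{x}<n$ have fitness below $k$ and are never accepted. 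A computation in the spirit of the proof of Theorem~\ref{the:lower-bound} then shows that every single offspring creation before $1^n$ is found — be it a mutation of one such parent or a crossover of two such parents followed by mutation — produces $1^n$ with probability $O(4^{-k})$, the bottleneck being two complementary \plateau parents (parents further from the optimum only make $1^n$ harder to reach). Since the expected number of function evaluations until the plateau is reached is $O(\mu n\log(\mu))$ by Theorem~\ref{the:time-to-plateau} (with the same simplification as in the proof of Theorem~\ref{the:Runtime-2}) and $4^k\ge(\mu n\log(\mu))^2$, summing this per-creation probability over the expected number of creations bounds the probability of an early optimum by $O(\mu n\log(\mu))\cdot O(4^{-k})=o(1)$. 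Hence, with probability $1-o(1)$ the population reaches \plateau before finding the optimum, and from that point on the \mulgavar is an instance of Algorithm~\ref{alg:black-box-general} and therefore needs $\Omega(4^k)$ further evaluations in expectation irrespective of the prior history; together this yields $(1-o(1))\cdot\Omega(4^k)=\Omega(4^k)$ for the total expected number of function evaluations, matching the upper bound.
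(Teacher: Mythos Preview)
Your approach coincides with the paper's: combine the $O(4^k)$ upper bound from Theorem~\ref{the:Runtime-2} with the $\Omega(4^k)$ lower bound from Theorem~\ref{the:lower-bound}, after checking that $k^2\le (n-k)/48$ gives $k\le\sqrt n/2$ and $4^k\le\binom{n}{k}$. The paper's entire argument is the single sentence preceding the corollary, so on this level your proof is the same, only more explicit.

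Where you go beyond the paper is in addressing the discrepancy between the hypothesis $H_0\subset\plateau$ of Theorem~\ref{the:lower-bound} and the uniform random initialisation of the \mulgavar. The paper simply leaves this implicit (cf.\ the remark before Algorithm~\ref{alg:black-box-general} that elitist algorithms ``may fit into the scheme \ldots\ once the whole population has reached the plateau''); you rightly flag it and sketch a resolution. Your sketch is sound in structure---Chernoff plus elitism to confine the population to $\{x:\ones{x}\in[n/4,n-k]\}$ until $1^n$ is found, then bound the probability of an early optimum by a union bound over the $O(\mu n\log\mu)$ pre-plateau evaluations---but one step is asserted rather than proved: that a uniform crossover of two parents each with at least $k$ zeros (but not necessarily exactly $k$) followed by an unbiased mutation yields $1^n$ with probability $O(4^{-k})$. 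The proof of Theorem~\ref{the:lower-bound} establishes this only for parents on \plateau, where the Hamming distance is even and at most $2k$; the computation there, and in particular the bound~\eqref{eq:probability-of-y-being-optimum}, does not directly carry over to parents with $j_1,j_2>k$ zeros. The claim is plausible (more zeros in the parents shift the offspring's zero count upward and make $1^n$ harder to hit), but making it rigorous would require reworking the ratio test from the proof of Theorem~\ref{the:lower-bound} for general $j_1,j_2\ge k$, or a monotonicity/coupling argument. Since the paper itself does not close this gap, your proof is already at least as complete as the paper's; just be aware that the ``bottleneck being two complementary \plateau parents'' line is a heuristic, not yet a proof.
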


\section{Extensions to Other Functions}
\label{sec:extensions}

At first glance, it might seem that our diversity estimates and the resulting runtimes are highly specific to $\jump_k$. However, these insights can also be applied in other contexts. Consider a fitness function in which there is some search point $x^*$ such that all search points with the same Hamming distance $k$ from~$x^*$ have the same fitness. Let us denote this set by $S$. Once the \mulgavar has reached a population $P_t \subset S$, it will behave as on $\jump_k'$ until a search point outside of $S$ is accepted. The reason is that the \mulgavar is agnostic to the bit values 0 and 1, and thus it shows the same dynamic behaviour on $S$ (all search points with Hamming distance~$k$ to~$x^*$) as on $\plateau$ (all search points with Hamming distance~$k$ to $1^n$). Thus, the diversity shows the same dynamics in both scenarios up to the point where any search point $x \notin S$ is accepted. If no other search point has the same fitness as search points in $S$, this implies that $f(x) > f(s)$ for all $s \in S$ and a strict fitness improvement was found. We can then easily derive upper bounds on the expected time to find fitness improvements. This argument can possibly be iterated or be integrated in a custom analysis. 

\subsection{Generalised Jump Functions}

As an example, we consider the function $\jump_{k, \delta}$~\cite{BamburyBD24} that was proposed independently as \textsc{JumpOffset} in~\cite{Rajabi2024,WITT202318}. It is defined like $\jump_k$, however the fitness valley has a width of $\delta$. 
\[
    \jump_{k, \delta} \coloneqq
    \begin{cases}
        \ones{x} & \text{if $\ones{x} \in [0, \dots, n-k] \cup [n-k+\delta, \dots, n]$\neweditx{,}}\\
        -\ones{x} & \text{otherwise.}
    \end{cases}
\]
\neweditx{The original $\jump_k$ function is similar to the special case where $\delta = k$, except for the fact that in $\jump_k$ both cases differ by additive terms of $+k$ and $+n$, respectively. However, the relative order of fitness values for search points with less than $n-k+\delta$ ones is identical for $\jump_k$ and $\jump_{k,k}$. 
Formally, for all $i, j \in \{0, \dots, n-k+\delta-1\}$, all $x_i$ with $\ones{x_i} = i$, and all $x_j$ with $\ones{x_j} =j$, we have
\[  
    \jump_k(x_i) \le \jump_k(x_j) \Leftrightarrow \jump_{k,k}(x_i) \le \jump_{k,k}(x_j)    
\]
and the equivalence remains valid when replacing ``$\le$'' with ``$<$'', ``$=$'', ``$\ge$'', or ``$>$'' on both sides.
Thus, as}
long as no point with at least $n-k+\delta$ ones has been found, every algorithm \neweditx{selecting a new population purely based on comparisons of search points} behaves as on $\jump_k$. Consequently, our analysis applies. The probability of uniform crossover applied to parents of Hamming distance $2k$ yielding a search point with at least $n-k+\delta$ ones is 
\[
    \sum_{i=0}^{k-\delta} \binom{2k}{k+\delta+i} \cdot 4^{-k} \ge \binom{2k}{k+\delta} \cdot 4^{-k} = \frac{(2k)!}{(k+\delta)!(k-\delta)!} \cdot 4^{-k}.
\]
Using this probability instead of $4^{-k}$ in our previous analysis, we obtain the following result. 
In case of small gaps $\delta$, specifically $\delta = O(\sqrt{k})$, the above probability is $\Omega(1/\sqrt{k})$ (see, e.\,g.\ Corollary~1.4.12 in~\cite{doerr-theory-chapter}). 
Then the expected waiting time for a successful jump is $O(\sqrt{k})$, which vanishes in the $O(\mu n \log(\mu))$ term.  
Note that the expected time for the \mulgavar to reach the optimum after completing the jump is also bounded by $O(\mu n \log(\mu))$ since Theorem~\ref{the:time-to-plateau} also applies to $\jump_k$ with $k=0$, which is \onemax, and the plateau equals $\{1^n\}$. Hence, we obtain the following theorem.
\begin{theorem}
\label{the:Runtime-Jump-k-delta}
Consider the \mulgavar on $\jump_{k, \delta}$ with $k, \mu, \lambdac$ and $p_c$ meeting the conditions of the second statement of Theorem~\ref{the:Runtime-2}.
Then the expected optimisation time on $\jump_{k, \delta}$ is 
\begin{align}
\label{eq:bound-for-jump-k-delta}
O \left(\mu n\log(\mu) + 4^k \cdot \binom{2k}{k+\delta}^{-1} \right).
\end{align}
For $\delta = O(\sqrt{k})$ this simplifies to $O(\mu n \log \mu)$. 
\end{theorem}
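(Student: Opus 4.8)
The plan is to reduce the analysis on $\jump_{k,\delta}$ (for $\delta\le k$) to the one already carried out for $\jump_k$, changing only the probability of a successful jump across the valley. First I would observe that, as long as no search point with at least $n-k+\delta$ ones has been accepted, the \mulgavar makes exactly the same decisions on $\jump_{k,\delta}$ as on $\jump_k$: on $\{x:\ones{x}\le n-k\}$ the function $\jump_{k,\delta}$ is \onemax (just as $\jump_k$ is \onemax up to the additive constant $k$), the set of search points with $\ones{x}\in[n-k+1,n-k+\delta-1]$ forms a strict fitness valley ($-\ones{x}<0$, hence below the plateau value $n-k$ and below the upper region), and the fitness-comparison equivalence stated before the theorem guarantees that replacement selection behaves identically. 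Consequently, Theorem~\ref{the:time-to-plateau} with the simplifications from the proof of Theorem~\ref{the:Runtime-2} shows that a population $P_t\subset\plateau$ is reached within $O(\mu n\log\mu)$ expected fitness evaluations, and from then on all hypotheses of Situation~\ref{sit:mulga} hold; I fix $\eps:=1/(16k)$, so that $\lfloor 8\eps k\rfloor=0$.

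Next I would re-run the plateau phase of the proof of Theorem~\ref{the:Runtime-1} essentially verbatim. The diversity machinery (Lemma~\ref{lem:crossover-specific}, Corollaries~\ref{cor:diversity-GA-lower-bound} and~\ref{cor:alphadelta}, Lemma~\ref{lem:SPt-is-large}, Lemma~\ref{lem:Cross-cheap}) only uses $P_t\subset\plateau$ and is blind to which above-plateau search points we are aiming at, so in any trial a constant fraction of generations are good, and each good generation contains at least $\mu^2/2$ pairs of parents at Hamming distance $>2k(1-8\eps)=2k-1$, hence exactly $2k$ by integrality, i.e.\ with disjoint zero-sets ($r=0$). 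The only quantity that changes is $\popt$: on $\jump_{k,\delta}$ an offspring escapes the plateau (and is accepted, being a strict improvement) once it has at least $n-k+\delta$ ones, and for a good pair uniform crossover sets each of the $2k$ disagreeing bits independently to one with probability $1/2$, so this happens with probability $\sum_{i=0}^{k-\delta}\binom{2k}{k+\delta+i}4^{-k}\ge\binom{2k}{k+\delta}4^{-k}$, while the subsequent mutation flips nothing with probability $(1-\chi/n)^n=\Theta(1)$; hence $\popt$ is replaced by $\popt_\delta=\Omega\big(\binom{2k}{k+\delta}4^{-k}\big)$. Feeding $\popt_\delta$ into the definition of the trial length $T$ and into the final estimate $(1+p_c\lambdac)T=O\big(\lambdac+1/p_c+(1+1/(p_c\lambdac))/\popt_\delta\big)$ from that proof, using $\lambdac=\Omega(1/p_c)$ and simplifying the leading term as in Theorem~\ref{the:Runtime-2}, yields that the plateau is escaped within $O\big(\mu n\log\mu+4^k\binom{2k}{k+\delta}^{-1}\big)$ expected fitness evaluations.

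Finally, once an accepted search point has at least $n-k+\delta$ ones, the algorithm can reach $1^n$ by \onemax-like hill-climbing entirely inside the region $\{x:\ones{x}\ge n-k+\delta\}$ (progress never requires re-entering the valley), and by elitism the population never loses a point from that region; hence Theorem~\ref{the:time-to-plateau} applied with $k=0$ (which is \onemax, with plateau $\{1^n\}$), simplified as before, bounds this last phase by $O(\mu n\log\mu)$. Summing the three phases gives $O\big(\mu n\log\mu+4^k\binom{2k}{k+\delta}^{-1}\big)$. For $\delta=O(\sqrt k)$ the central-binomial estimate $\binom{2k}{k+\delta}=\Omega(4^k/\sqrt k)$ (cf.\ \cite[Corollary~1.4.12]{doerr-theory-chapter}) makes the jump term $O(\sqrt k)$, which is $O(\mu n\log\mu)$ since $\mu=\Omega(kn)$, so the bound collapses to $O(\mu n\log\mu)$.

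The main work is the re-inspection in the second paragraph: one has to check that every lemma invoked in the proof of Theorem~\ref{the:Runtime-1} still applies with the escape event ``$\ones{x}\ge n-k+\delta$'' replacing ``$x=1^n$'' — in particular that offspring landing inside the valley or below the plateau are rejected for exactly the reasons ``downward'' offspring are rejected in the $\jump_k$ analysis, so the plateau equilibrium for $S(P_t)$ is untouched — and that, under the parameter conditions inherited from Theorem~\ref{the:Runtime-2}, the additive $+\lambdac$ and $+1/p_c$ terms are absorbed into the claimed bound (using $1/p_c\le\lambdac$ and $\sqrt k\log\mu=O(\mu n\log\mu)$, as in that proof). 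The only genuinely new ingredient, the lower bound on $\popt_\delta$, is immediate once one notes that a good generation supplies a pair of parents at the maximal possible Hamming distance $2k$.
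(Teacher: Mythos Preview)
Your proposal is correct and follows essentially the same approach as the paper: reduce the analysis on $\jump_{k,\delta}$ to the $\jump_k$ analysis by exploiting the fitness-comparison equivalence below $n-k+\delta$ ones, replace the success probability $4^{-k}$ by $\popt_\delta \ge \binom{2k}{k+\delta}4^{-k}$ for complementary parent pairs (Hamming distance exactly $2k$, which is forced by $\lfloor 8\eps k\rfloor = 0$), and finish with Theorem~\ref{the:time-to-plateau} for the \onemax-like hill-climb. Your write-up is in fact more explicit than the paper's own argument, which is given tersely in the text preceding the theorem and simply says ``using this probability instead of $4^{-k}$ in our previous analysis, we obtain the following result''.
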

This upper bound compares favourably against the expected optimisation time of the \EA on $\jump_{k, \delta}$~\cite{BamburyBD24} (for $\delta \ge 3$ and $k \le n - \omega(\sqrt{n})$), which equals
\[
    (1+o(1))\left(\frac{en}{\delta}\right)^\delta \binom{k}{\delta}^{-1}.
\]

\subsection{Hurdle Functions}

We further consider the \hurdle function class that forces EAs to perform multiple jumps. It was introduced
by Pr{\"u}gel-Bennett~\cite{PRUGELBENNETT2004135}
as an example
class where genetic algorithms with crossover
outperform hill climbers. It is also an example of a problem with a ``big valley structure'' found in many combinatorial optimisation problems~\cite{Ochoa2016,Reeves1999}.

\begin{figure}[htb]
\centering{}
\begin{tikzpicture}[scale=0.9]
    \begin{axis}[
        width=12cm,
        height=8cm,
        grid=both,
        xlabel={$\ones{x}$},
        ylabel={$\hurdle(x)$},
        xmin=0, xmax=20,
        ymin=-5, ymax=0,
        ytick distance=1,
        xtick distance=1,
    ]
    \addplot[
        blue,
        mark=*,
        thick,
        mark options={scale=0.8},
        samples at={0,1,...,20},
    ]
    expression{
        -ceil((20-x)/5) - (mod((20-x),5)/5)
    };
    \end{axis}
\end{tikzpicture}
\caption{Plot of the \hurdle function with hurdle width $w=5$ and $n=20$.}
\label{fig:hurdle}
\end{figure}

It comes with a parameter $w \coloneqq w(n) \in \N$ called the \emph{hurdle width} that determines the gap between neighbouring local optima with respect to the number of zeros in the bit string, denoted as $\neweditx{\zeros{x}}$. Now \hurdle is defined as 
$$
\hurdle(x) = - \left\lceil{\frac{\neweditx{\zeros{x}}}{w}}\right\rceil - \frac{\rem(\neweditx{\zeros{x}},w)}{w}.
$$
where $\rem(z(x),w)$ is the remainder of $z(x)$ divided by $w$. Figure~\ref{fig:hurdle} shows a sketch of the \hurdle function for hurdle width $w=5$, where the $x$ axis reflects the number of ones (instead of zeros) to show the similarity with the $\jump_k$ function.

It is evident from Figure~\ref{fig:hurdle} and it was formally shown in~\cite{Nguyen2019} that $1^n$ is the only global optimum, all search points with $iw$ zeros, $i \in \N$, are local optima, and all search points with at most $iw-w$ zeros have a strictly larger fitness than all search points with $iw$ zeros. 

\citet{Nguyen2019} gave an asymptotically tight upper bound on the expected optimisation time of the \EA on the \hurdle function of
\[
    O(n \log n) + \sum_{i=1}^{\lfloor n/w \rfloor} \frac{en^w}{i^w} = O(n^w).
\]
Here the term $\frac{en^w}{i^w}$ is an upper bound on the expected time to improve the fitness from a search point of Hamming distance $iw$ to the optimum. Note that these upper bounds decrease drastically with increasing Hamming distance, and that the expected time to overcome the last hurdle at Hamming distance $1 \cdot w$ dominates the expected optimisation time.

Note that, once the population only contains search points with exactly $iw$ zeros, for any value $i \in \N$, the algorithm temporarily behaves as on $\jump_{iw, w}$ since only search points with $iw$ or at most $iw-w$ zeros are accepted. Hence, we can easily re-use our previous bound for $\jump_{iw, w}$. The only catch is that Theorem~\ref{the:Runtime-2} requires $k^2\le (n-k)/48$. This is implied by $k \le \sqrt{n}/7$ if $n$ is large enough. Our general bound from Theorem~\ref{the:Runtime-1} makes looser assumptions and only requires $k = o(n)$, but it provides weaker guarantees on the population diversity. For local optima far from the global optimum, this is not a huge issue since even diversity far below the maximum value suffices to escape the local optimum. After all, at a Hamming distance $iw$ from the optimum, it suffices to create a search point at Hamming distance $(i-1)w$ by creating a surplus of $w$ ones during crossover. However, for the sake of simplicity, we restrict our attention to tackling hurdles at Hamming distance at most $\sqrt{n}/7$ to the optimum. As discussed for the \EA, these hurdles are by far the hardest obstacles and we conjecture that the expected time to overcome these last hurdles dominates the overall expected optimisation time.

Now, assuming the population consists of search points with $iw$ zeros, the expected time to improve the fitness on \hurdle equals the expected time to improve the fitness on $\jump_{k, \delta}$ with $k \coloneqq iw$ and $\delta \coloneqq w$. Theorem~\ref{the:Runtime-Jump-k-delta} yields an upper bound of
\[
    O\left(\mu n \log(\mu) + 4^{iw} \cdot \binom{2iw}{iw+w}^{-1}\right)
\]
for this expected time. Note that the term $O(\mu n \log \mu)$ also covers the expected time for reaching a population of local optima. 
For $i=1$ we have $4^{iw} \cdot \binom{2iw}{iw+w}^{-1} = 4^w$. For $i \ge 2$, 
using Corollary~1.4.11 in~\cite{doerr-theory-chapter}, we can simplify the binomial coefficient as follows.
\begin{align*}
    4^{iw} \binom{2iw}{iw+w}^{-1} =\;& 4^{iw} \cdot \Theta\left(\sqrt{\frac{(iw+w)(iw-w)}{2iw}}\right) \left(\frac{iw+w}{2iw}\right)^{iw+w} \left(\frac{iw-w}{2iw}\right)^{iw-w}\\
    =\;& \Theta\left(\sqrt{\frac{(iw+w)(iw-w)}{2iw}}\right) \left(\frac{i+1}{i}\right)^{iw+w} \left(\frac{i-1}{i}\right)^{iw-w}\\
    =\;& \Theta\left(\sqrt{iw}\right) \left(\left(\frac{i+1}{i}\right)^{i+1} \left(\frac{i-1}{i}\right)^{i-1}\right)^w.
\end{align*}
The term in brackets is simplified as
\begin{align*}
    \left(\frac{i+1}{i}\right)^{i+1} \left(\frac{i-1}{i}\right)^{i-1}
    =\;& \frac{i+1}{i-1} \cdot \left(\frac{(i+1)(i-1)}{i^2}\right)^{i}\\
    =\;& \frac{i+1}{i-1} \cdot \left(\frac{i^2-1}{i^2}\right)^{i}\\
    \le\;& \frac{i+1}{i-1} = 1 + \frac{2}{i-1}.
\end{align*}
Adding up $O(\mu n \log(\mu) + 4^w)$ for $i=1$ and $O(\mu n \log(\mu) + \Theta\left(\sqrt{iw}\right) (1+2/(i-1))^w)$ for all $i \in \{2, \dots, \lfloor \sqrt{n}/(7w)\rfloor\}$, we get a total upper bound of 
\begin{align*}
    & O\left(\frac{\mu n^{3/2} \log(\mu)}{w}  + 4^w + \sum_{i=2}^{\lfloor \sqrt{n}/(7w)\rfloor} \sqrt{iw} \left(1 + \frac{2}{i-1}\right)^w\right)\\
    =\;& O\left(\frac{\mu n^{3/2} \log(\mu)}{w}  + 4^w + \sum_{i=2}^{w} \sqrt{iw} \left(1 + \frac{2}{i-1}\right)^w + \sum_{i=w+1}^{\lfloor \sqrt{n}/(7w)\rfloor} \sqrt{iw} \left(1 + \frac{2}{i-1}\right)^w\right)\\
    =\;& O\left(\frac{\mu n^{3/2} \log(\mu)}{w}  + 4^w + \sum_{i=2}^{w} w \cdot 3^w + \sum_{i=w+1}^{\lfloor \sqrt{n}/(7w)\rfloor} n^{1/4} \left(1 + \frac{2}{w}\right)^w\right)\\
    =\;& O\left(\frac{\mu n^{3/2} \log(\mu)}{w}  + 4^w + w^2 \cdot 3^w + \frac{\sqrt{n}}{w} \cdot n^{1/4} \cdot e^2\right)\\
    =\;& O\left(\frac{\mu n^{3/2} \log(\mu)}{w}  + 4^w + w^2 \cdot 3^w + \frac{n^{3/4}}{w}\right).
\end{align*}
Note that the term $+n^{3/4}/w$ can be absorbed by $+\mu n^{3/2} \log(\mu)/w$ and further, since $w^2 \cdot 3^w = O(4^w)$, the term $w^2 \cdot 3^w$ is absorbed by $+4^w$.
Hence, we have shown the following. \begin{theorem}
    The expected optimisation time of the \mulgavar on \hurdle with hurdle width $w$, using parameters $\mu, \lambdac$ and $p_c$ as in the second statement of Theorem~\ref{the:Runtime-2}, when starting with a population of search points that all have at most $\sqrt{n}/7$ zeros, is at most
    \[
        O\left(\frac{\mu n^{3/2} \log(\mu)}{w} + 4^w\right).
    \]
\end{theorem}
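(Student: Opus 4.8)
The plan is to decompose the run into \emph{phases} indexed by the hurdle level and to reduce the analysis of each phase to the generalised jump bound of Theorem~\ref{the:Runtime-Jump-k-delta}. By the structural properties of \hurdle~\cite{Nguyen2019} recalled above, the local optima are precisely the search points with $iw$ zeros ($i\in\N$), and every search point with at most $(i-1)w$ zeros is strictly fitter than every search point with $iw$ zeros. I would say the algorithm is in \emph{phase}~$i$ while its best-so-far \hurdle-fitness lies in $[-i,-(i-1))$, equivalently while the minimum number of zeros in the population lies in $((i-1)w,iw]$. By elitism the phase index never increases; since we start from a population with at most $\sqrt n/7$ zeros, we begin in some phase $i\le i_{\max}\coloneqq\lceil\sqrt n/(7w)\rceil$, and leaving phase~$1$ means $1^n$ has been found. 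Thus it suffices to bound, for each $i\in\{1,\dots,i_{\max}\}$, the expected time spent in phase~$i$, and to sum up.

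For a fixed phase~$i$ I would argue in two steps. First, from an arbitrary phase-$i$ population the algorithm within $O(\mu n\log\mu)$ generations either leaves phase~$i$ or reaches a population contained in $S_i\coloneqq\{x:\zeros{x}=iw\}$: this follows by a takeover/hill-climbing argument in the style of the proof of Theorem~\ref{the:time-to-plateau}, using that a best individual of a phase-$i$ population that is not yet contained in $S_i$ either already lies on the local optimum $S_i$ (so repeatedly replicating it pushes the whole population onto $S_i$ within $O(\mu\log\mu)$ replications) or has between $(i-1)w+1$ and $iw-1$ zeros on the descending valley slope, where every single zero-to-one flip is a strict improvement, so that hill-climbing reaches the next lower local optimum, which already leaves phase~$i$. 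Second, once the population lies in $S_i$, the reasoning from the start of Section~\ref{sec:extensions} applies: $S_i$ is the set of points at Hamming distance $iw$ from $1^n$, the \mulgavar is agnostic to bit values, and the acceptance rule of \hurdle on populations inside $S_i$ accepts exactly the offspring with $iw$ or at most $(i-1)w$ zeros, which is the same rule as $\jump_{iw,w}$ on its plateau. Coupling the two runs by the same randomness, the \hurdle run therefore agrees with a $\jump_{iw,w}$ run until a point with at most $(i-1)w$ zeros is accepted, which is exactly leaving phase~$i$ and, on $\jump_{iw,w}$, leaving the plateau. Hence the time to leave phase~$i$ from a population in $S_i$ is at most the time for $\jump_{iw,w}$, started from the same population, to find its optimum, which by Theorem~\ref{the:Runtime-Jump-k-delta} with $k\coloneqq iw$ and $\delta\coloneqq w$ is $O(\mu n\log\mu+4^{iw}\binom{2iw}{iw+w}^{-1})$; here $iw\le i_{\max}w=O(\sqrt n)$ guarantees $(iw)^2\le(n-iw)/48$ for large $n$, and the parameter conditions of Theorem~\ref{the:Runtime-2} hold for every $k=iw\le\sqrt n/7$. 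Combining the two steps, the expected time in phase~$i$ is $O(\mu n\log\mu+4^{iw}\binom{2iw}{iw+w}^{-1})$.

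Summing over $i\in\{1,\dots,i_{\max}\}$ gives $O\!\big(i_{\max}\,\mu n\log\mu+\sum_{i=1}^{i_{\max}}4^{iw}\binom{2iw}{iw+w}^{-1}\big)$, and $i_{\max}\,\mu n\log\mu=O(\mu n^{3/2}\log(\mu)/w)$. For the remaining sum, the $i=1$ term equals $4^w$; for $i\ge2$ I would use Corollary~1.4.11 in~\cite{doerr-theory-chapter} to write $4^{iw}\binom{2iw}{iw+w}^{-1}=\Theta(\sqrt{iw})\big((1+1/i)^{i+1}(1-1/i)^{i-1}\big)^w\le\Theta(\sqrt{iw})(1+2/(i-1))^w$, and split the sum at $i=w$: for $2\le i\le w$ each term is $O(w\cdot 3^w)$ (using $\sqrt{iw}\le w$ and $1+2/(i-1)\le3$), contributing $O(w^2 3^w)$; for $w<i\le i_{\max}$ each term is $O(n^{1/4})$ (using $\sqrt{iw}=O(n^{1/4})$ and $(1+2/(i-1))^w\le e^2$), contributing $O(i_{\max}\,n^{1/4})=O(n^{3/4}/w)$. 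Since $w^2 3^w=O(4^w)$ (exponential beats polynomial) and $n^{3/4}/w=O(\mu n^{3/2}\log(\mu)/w)$, the total collapses to the claimed $O(\mu n^{3/2}\log(\mu)/w+4^w)$.

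The main obstacle is the first, structural step of the per-phase argument: making precise that from an arbitrary phase-$i$ population, which may be a mixture of hurdle levels and may not yet contain any local optimum of level~$i$, the algorithm reliably reaches a population inside $S_i$ within the $O(\mu n\log\mu)$ budget, so that Theorem~\ref{the:Runtime-Jump-k-delta} can be invoked. Care is also needed to check that a jump out of phase~$i$ skipping several levels at once is harmless (it only removes phases) and that the \hurdle/$\jump_{iw,w}$ coupling is valid precisely up to the first acceptance of a point with at most $(i-1)w$ zeros. The concluding summation is essentially the computation already carried out in the text preceding the theorem and is routine.
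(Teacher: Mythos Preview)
Your proposal is correct and follows essentially the same approach as the paper: a phase decomposition indexed by hurdle level~$i$, reduction on each level to Theorem~\ref{the:Runtime-Jump-k-delta} with $k=iw$ and $\delta=w$ via the coupling outlined at the start of Section~\ref{sec:extensions}, and the identical summation (split at $i=w$, Corollary~1.4.11 of~\cite{doerr-theory-chapter}, absorbing $w^23^w$ into $4^w$ and $n^{3/4}/w$ into the first term). One small slip to fix: your two characterisations of phase~$i$ are not equivalent, since the only \hurdle fitness in $[-i,-(i-1))$ is $-i$, attained exactly at $iw$ zeros, whereas a population whose minimum number of zeros lies strictly in $((i-1)w,iw)$ has best fitness below $-i$; relatedly, from a slope point with $iw-1$ zeros a single one-to-zero flip is also improving (to the local optimum $S_i$), so hill-climbing may land you on $S_i$ rather than leave the phase---but this falls into your other subcase and the $O(\mu n\log\mu)$ bound still holds.
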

Note that for large enough~$w$ the term $4^w$ dominates the above bound. This makes sense as, like for $\jump_k$, the final jump is the hardest one. Then the resulting time bound is much smaller than the expected runtime of $\Theta(n^w)$ proven for the \EA~\cite{Nguyen2019}.

\section{Conclusions}

We have presented the first runtime bounds for a \muga variant that show strong benefits of crossover in the regime of large crossover probabilities $p_c$ without explicit diversity-enhancing mechanisms. Our analysis has shown that the population evolves near-perfect diversity, where most pairs of parents have the maximum Hamming distance of~$2k$. 
We believe that our analysis can be easily transferred to a \mulgavar when omitting mutation after crossover as, unlike the work in~\cite{Dang2017}, the optimum can be reached without mutation. 



A limitation of our approach is that we had to modify the original \muga to include $\lambdac$ competing offspring. An important task for future work is therefore to show similar results for the original \muga. Another research question is whether the dependency on $p_c$ can be further relaxed, possibly up to $p_c=1$ as in~\cite{Dang2017,Doerr2024}. For this one might need to carefully consider the possible advantages that crossover can provide for population diversity.

We have also shown that our approach can be applied to other functions where we have plateaus of all search points with the same Hamming distance from some fixed search point~$x^*$. We have demonstrated this for 
the $\jump_{k, \delta}$ benchmark~\cite{BamburyBD24} 
that contains a fitness valley of a specified width~$\delta$, and for the function \textsc{Hurdle}~\cite{PRUGELBENNETT2004135} that contains many peaks.

We leave the following open problems and research questions for future work.
\begin{enumerate}
    \item Can our upper bound for the \mulgavar be transferred to the original \muga, when allowing for a factor of $1/p_c$ in the term $4^k$? We suspect that this might be true, but the examples in Lemmas~\ref{lem:negative-standard-GA} and~\ref{lem:negative-standard-GA-2} suggest that this would require more fine-grained information about the population $P_t$ than provided by the diversity $S(P_t)$. 
    \item Can the conditions on $p_c$ be relaxed? This could possibly be achieved with stronger guarantees on how generations with crossover affect the population diversity (see Lemma~\ref{lem:crossover-specific}).
    \item Our analysis requires a large population size of $\mu = \Omega(kn)$, although empirical results~\cite{Li2023,Dang2017} suggest that small populations are highly effective as well. Can our results be extended towards smaller population sizes?
    \item Related to this, can the upper bound $O(\mu n \log(\mu)) = O(n^2 \log^2 n)$ for $k = O(\log n)$ and $\mu = \Theta(kn)$ be improved further?
    \item Do our diversity estimates and runtime bounds translate to ($\mu$+$\lambda$)~GAs in which more than one new search point can enter the population in the same generation? The algorithm might still create a larger number $\lambdac \gg \lambda$ of offspring in generations with crossover.
\end{enumerate}

\subsection*{Acknowledgements}

The authors thank the participants of Dagstuhl seminar 24271 ``\emph{Theory of Randomized Optimization Heuristics}'' for fruitful discussions that helped to refine the results and improve the presentation of our work.

\bibliographystyle{abbrvnat}
\bibliography{references}

\begin{thebibliography}{36}
\providecommand{\natexlab}[1]{#1}
\providecommand{\url}[1]{\texttt{#1}}
\expandafter\ifx\csname urlstyle\endcsname\relax
  \providecommand{\doi}[1]{doi: #1}\else
  \providecommand{\doi}{doi: \begingroup \urlstyle{rm}\Url}\fi

\bibitem[Badkobeh et~al.(2015)Badkobeh, Lehre, and Sudholt]{Badkobeh2015}
G.~Badkobeh, P.~K. Lehre, and D.~Sudholt.
\newblock Black-box complexity of parallel search with distributed populations.
\newblock In \emph{Proceedings of Foundations of Genetic Algorithms (FOGA
  2015)}, pages 3--15, 2015.
\newblock \doi{10.1145/2725494.2725504}.

\bibitem[Bambury et~al.(2024)Bambury, Bultel, and Doerr]{BamburyBD24}
H.~Bambury, A.~Bultel, and B.~Doerr.
\newblock An extended jump functions benchmark for the analysis of randomized
  search heuristics.
\newblock \emph{Algorithmica}, 86\penalty0 (1):\penalty0 1--32, 2024.
\newblock \doi{10.1007/S00453-022-00977-1}.

\bibitem[Dang et~al.(2016)Dang, Friedrich, Krejca, Kötzing, Lehre, Oliveto,
  Sudholt, and Sutton]{Dang2016}
D.-C. Dang, T.~Friedrich, M.~S. Krejca, T.~Kötzing, P.~K. Lehre, P.~S.
  Oliveto, D.~Sudholt, and A.~M. Sutton.
\newblock Escaping local optima with diversity-mechanisms and crossover.
\newblock In \emph{Proceedings of the Genetic and Evolutionary Computation
  Conference ({GECCO} 2016)}, pages 645--652, 2016.
\newblock URL \url{https://doi.org/10.1145/2908812.2908956}.

\bibitem[Dang et~al.(2018)Dang, Friedrich, K{\"o}tzing, Krejca, Lehre, Oliveto,
  Sudholt, and M.Sutton]{Dang2017}
D.-C. Dang, T.~Friedrich, T.~K{\"o}tzing, M.~S. Krejca, P.~K. Lehre, P.~S.
  Oliveto, D.~Sudholt, and A.~M.Sutton.
\newblock Escaping local optima using crossover with emergent diversity.
\newblock \emph{IEEE Transactions on Evolutionary Computation}, 22\penalty0
  (3):\penalty0 484--497, 2018.
\newblock \doi{10.1109/TEVC.2017.2724201}.

\bibitem[Doerr(2020)]{doerr-theory-chapter}
B.~Doerr.
\newblock Probabilistic tools for the analysis of randomized optimization
  heuristics.
\newblock In \emph{Theory of Evolutionary Computation: Recent Developments in
  Discrete Optimization}, pages 1--87. Springer, Cham, 2020.
\newblock URL \url{https://doi.org/10.1007/978-3-030-29414-4_1}.

\bibitem[Doerr(2021)]{DoerrNegativeMultiplicativeDrift}
B.~Doerr.
\newblock Lower bounds for non-elitist evolutionary algorithms via negative
  multiplicative drift.
\newblock \emph{Evolutionary Computation}, 29\penalty0 (2):\penalty0 305--329,
  2021.
\newblock \doi{10.1162/evco_a_00283}.

\bibitem[Doerr and Neumann(2020)]{doerr-neumann-book}
B.~Doerr and F.~Neumann, editors.
\newblock \emph{Theory of Evolutionary Computation: Recent Developments in
  Discrete Optimization}.
\newblock Springer, Cham, 2020.
\newblock \doi{10.1007/978-3-030-29414-4}.

\bibitem[Doerr et~al.(2017)Doerr, Le, Phuoc, Makhmara, and
  Nguyen]{Doerr2017-fastGA}
B.~Doerr, H.~P. Le, H.~Phuoc, R.~Makhmara, and T.~D. Nguyen.
\newblock Fast genetic algorithms.
\newblock In \emph{Proceedings of the Genetic and Evolutionary Computation
  Conference ({GECCO} 2017)}, pages 777--784, 2017.
\newblock \doi{10.1145/3071178.3071301}.

\bibitem[Doerr et~al.(2020)Doerr, Doerr, and Yang]{Doerr2020}
B.~Doerr, C.~Doerr, and J.~Yang.
\newblock Optimal parameter choices via precise black-box analysis.
\newblock \emph{Theoretical Computer Science}, 801:\penalty0 1--34, 2020.
\newblock \doi{https://doi.org/10.1016/j.tcs.2019.06.014}.
\newblock URL
  \url{http://www.sciencedirect.com/science/article/pii/S0304397519304013}.

\bibitem[Doerr et~al.(2024)Doerr, Echarghaoui, Jamal, and Krejca]{Doerr2024}
B.~Doerr, A.~Echarghaoui, M.~Jamal, and M.~S. Krejca.
\newblock Runtime analysis of the {($\mu$+1) GA}: Provable speed-ups from
  strong drift towards diverse populations.
\newblock In \emph{Proceedings of the AAAI Conference on Artificial
  Intelligence ({AAAI 2024})}, volume~38, pages 20683--20691, 2024.
\newblock URL \url{https://doi.org/10.1609/aaai.v38i18.30055}.

\bibitem[Friedrich et~al.(2016)Friedrich, K\"{o}tzing, Krejca, Nallaperuma,
  Neumann, and Schirneck]{Friedrich2016}
T.~Friedrich, T.~K\"{o}tzing, M.~S. Krejca, S.~Nallaperuma, F.~Neumann, and
  M.~Schirneck.
\newblock Fast building block assembly by majority vote crossover.
\newblock In \emph{Proceedings of the Genetic and Evolutionary Computation
  Conference ({GECCO} 2016)}, page 661–668, 2016.
\newblock URL \url{https://doi.org/10.1145/2908812.2908884}.

\bibitem[Friedrich et~al.(2023)Friedrich, K\"{o}tzing, Radhakrishnan, Schiller,
  Schirneck, Tennigkeit, and Wietheger]{Friedrich2023}
T.~Friedrich, T.~K\"{o}tzing, A.~Radhakrishnan, L.~Schiller, M.~Schirneck,
  G.~Tennigkeit, and S.~Wietheger.
\newblock Crossover for cardinality constrained optimization.
\newblock \emph{ACM Transactions on Evolutionary Learning}, 3\penalty0
  (2):\penalty0 5:1 -- 5:32, 2023.
\newblock \doi{10.1145/3603629}.
\newblock Article No.: 5.

\bibitem[Hornby and Yu(2007)]{HornbyYuECSurvey}
G.~S. Hornby and T.~Yu.
\newblock {EC} practitioners: results of the first survey.
\newblock \emph{SIGEVOlution}, 2\penalty0 (1):\penalty0 2--8, 2007.
\newblock ISSN 1931-8499.
\newblock \doi{10.1145/1268573.1268574}.
\newblock URL \url{http://doi.acm.org/10.1145/1268573.1268574}.

\bibitem[Jansen(2013)]{Jansen2013}
T.~Jansen.
\newblock \emph{Analyzing Evolutionary Algorithms -- The Computer Science
  Perspective.}
\newblock Springer, Berlin, Heidelberg, 2013.
\newblock \doi{10.1007/978-3-642-17339-4}.

\bibitem[Jansen and Wegener(2002)]{Jansen2002}
T.~Jansen and I.~Wegener.
\newblock On the analysis of evolutionary algorithms -- a proof that crossover
  really can help.
\newblock \emph{Algorithmica}, 34\penalty0 (1):\penalty0 47--66, 2002.
\newblock \doi{10.1007/s00453-002-0940-2}.
\newblock URL \url{https://doi.org/10.1007/s00453-002-0940-2}.

\bibitem[K{\"o}tzing et~al.(2011)K{\"o}tzing, Sudholt, and
  Theile]{Koetzing2011a}
T.~K{\"o}tzing, D.~Sudholt, and M.~Theile.
\newblock How crossover helps in pseudo-{B}oolean optimization.
\newblock In \emph{Proceedings of the Genetic and Evolutionary Computation
  Conference ({GECCO} 2011)}, pages 989--996, 2011.
\newblock \doi{10.1145/2001576.2001711}.

\bibitem[Lehman et~al.(2020)Lehman, Clune, Misevic, Adami, Altenberg, Beaulieu,
  Bentley, Bernard, Beslon, Bryson, Cheney, Chrabaszcz, Cully, Doncieux, Dyer,
  Ellefsen, Feldt, Fischer, Forrest, Fŕenoy, Gagńe, Le~Goff, Grabowski,
  Hodjat, Hutter, Keller, Knibbe, Krcah, Lenski, Lipson, MacCurdy, Maestre,
  Miikkulainen, Mitri, Moriarty, Mouret, Nguyen, Ofria, Parizeau, Parsons,
  Pennock, Punch, Ray, Schoenauer, Schulte, Sims, Stanley, Taddei, Tarapore,
  Thibault, Watson, Weimer, and Yosinski]{Lehman2023}
J.~Lehman, J.~Clune, D.~Misevic, C.~Adami, L.~Altenberg, J.~Beaulieu, P.~J.
  Bentley, S.~Bernard, G.~Beslon, D.~M. Bryson, N.~Cheney, P.~Chrabaszcz,
  A.~Cully, S.~Doncieux, F.~C. Dyer, K.~O. Ellefsen, R.~Feldt, S.~Fischer,
  S.~Forrest, A.~Fŕenoy, C.~Gagńe, L.~Le~Goff, L.~M. Grabowski, B.~Hodjat,
  F.~Hutter, L.~Keller, C.~Knibbe, P.~Krcah, R.~E. Lenski, H.~Lipson,
  R.~MacCurdy, C.~Maestre, R.~Miikkulainen, S.~Mitri, D.~E. Moriarty, J.-B.
  Mouret, A.~Nguyen, C.~Ofria, M.~Parizeau, D.~Parsons, R.~T. Pennock, W.~F.
  Punch, T.~S. Ray, M.~Schoenauer, E.~Schulte, K.~Sims, K.~O. Stanley,
  F.~Taddei, D.~Tarapore, S.~Thibault, R.~Watson, W.~Weimer, and J.~Yosinski.
\newblock The surprising creativity of digital evolution: A collection of
  anecdotes from the evolutionary computation and artificial life research
  communities.
\newblock \emph{Artificial Life}, 26\penalty0 (2):\penalty0 274--306, 2020.
\newblock ISSN 1064-5462.
\newblock \doi{10.1162/artl_a_00319}.

\bibitem[Lehre and Witt(2012)]{Lehre2012}
P.~K. Lehre and C.~Witt.
\newblock Black-box search by unbiased variation.
\newblock \emph{Algorithmica}, 64\penalty0 (4):\penalty0 623--642, 2012.
\newblock \doi{10.1007/s00453-012-9616-8}.

\bibitem[Lengler et~al.(2024)Lengler, Opris, and Sudholt]{Lengler2024}
J.~Lengler, A.~Opris, and D.~Sudholt.
\newblock Analysing equilibrium states for population diversity.
\newblock \emph{Algorithmica}, 86\penalty0 (7):\penalty0 2317–2351, 2024.
\newblock \doi{10.1007/s00453-024-01226-3}.

\bibitem[Li and K\"{o}tzing(2023)]{Li2023}
X.~Li and T.~K\"{o}tzing.
\newblock Experimental analyses of crossover and diversity on {Jump}.
\newblock In \emph{Proceedings of the Companion Conference on Genetic and
  Evolutionary Computation ({GECCO} 2023)}, page 823–826, 2023.
\newblock URL \url{https://doi.org/10.1145/3583133.3590657}.

\bibitem[Miikkulainen()]{VenturebeatArticle}
R.~Miikkulainen.
\newblock Evolutionary computation will drive the future of creative {AI}.
\newblock URL
  \url{https://venturebeat.com/ai/evolutionary-computation-will-drive-the-future-of-creative-ai/}.

\bibitem[Miikkulainen(2021)]{Miikkulainen2021}
R.~Miikkulainen.
\newblock Creative {AI} through evolutionary computation: Principles and
  examples.
\newblock \emph{SN Computer Science}, 2\penalty0 (3):\penalty0 163, 2021.
\newblock ISSN 2661-8907.
\newblock \doi{10.1007/s42979-021-00540-9}.

\bibitem[Neumann and Witt(2010)]{BookNeuWit}
F.~Neumann and C.~Witt.
\newblock \emph{Bioinspired Computation in Combinatorial Optimization --
  Algorithms and Their Computational Complexity}.
\newblock Springer, Berlin, Heidelberg, 2010.
\newblock \doi{10.1007/978-3-642-16544-3}.

\bibitem[Nguyen and Sudholt(2020)]{Nguyen2019}
P.~T.~H. Nguyen and D.~Sudholt.
\newblock Memetic algorithms outperform evolutionary algorithms in multimodal
  optimisation.
\newblock \emph{Artificial Intelligence}, 287\penalty0 (1):\penalty0 103345,
  2020.
\newblock \doi{10.1016/j.artint.2020.103345}.

\bibitem[Ochoa and Veerapen(2016)]{Ochoa2016}
G.~Ochoa and N.~Veerapen.
\newblock Deconstructing the big valley search space hypothesis.
\newblock In \emph{Evolutionary Computation in Combinatorial Optimization
  (EvoCOP 2016)}, pages 58--73, 2016.
\newblock \doi{10.1007/978-3-319-30698-8_5}.

\bibitem[Oliveto et~al.(2022)Oliveto, Sudholt, and Witt]{Oliveto2022}
P.~S. Oliveto, D.~Sudholt, and C.~Witt.
\newblock Tight bounds on the expected runtime of a standard steady state
  genetic algorithm.
\newblock \emph{Algorithmica}, 84\penalty0 (6):\penalty0 1603--1658, 2022.
\newblock \doi{10.1007/s00453-021-00893-w}.
\newblock URL \url{https://doi.org/10.1007/s00453-021-00893-w}.

\bibitem[Opris et~al.(2024)Opris, Lengler, and Sudholt]{Opris2024}
A.~Opris, J.~Lengler, and D.~Sudholt.
\newblock A tight {O}($4^k/p_c$) runtime bound for a ($\mu$+1)~{GA} on {Jump}
  for realistic crossover probabilities.
\newblock In \emph{Proceedings of the Genetic and Evolutionary Computation
  Conference (GECCO~2024)}, page 1605–1613, 2024.
\newblock URL \url{https://doi.org/10.1145/3638529.3654120}.

\bibitem[Pr{\"u}gel-Bennett(2004)]{PRUGELBENNETT2004135}
A.~Pr{\"u}gel-Bennett.
\newblock When a genetic algorithm outperforms hill-climbing.
\newblock \emph{Theoretical Computer Science}, 320\penalty0 (1):\penalty0 135
  -- 153, 2004.
\newblock \doi{10.1016/j.tcs.2004.03.038}.

\bibitem[Rajabi and Witt(2024)]{Rajabi2024}
A.~Rajabi and C.~Witt.
\newblock Stagnation detection in highly multimodal fitness landscapes.
\newblock \emph{Algorithmica}, 86\penalty0 (9):\penalty0 2929--2958, Sep 2024.
\newblock ISSN 1432-0541.
\newblock \doi{10.1007/s00453-024-01249-w}.
\newblock URL \url{https://doi.org/10.1007/s00453-024-01249-w}.

\bibitem[Reeves(1999)]{Reeves1999}
C.~R. Reeves.
\newblock Landscapes, operators and heuristic search.
\newblock \emph{Annals of Operations Research}, 86\penalty0 (0):\penalty0
  473--490, 1999.
\newblock \doi{10.1023/A:1018983524911}.

\bibitem[Ren et~al.(2024)Ren, Qiu, Bian, Li, and Qian]{Ren2024}
S.~Ren, Z.~Qiu, C.~Bian, M.~Li, and C.~Qian.
\newblock Maintaining diversity provably helps in evolutionary multimodal
  optimization.
\newblock In \emph{Proceedings of the International Joint Conference on
  Artificial Intelligence (IJCAI 2024)}, pages 7012--7020, 2024.
\newblock \doi{10.24963/ijcai.2024/775}.

\bibitem[Rowe and Aishwaryaprajna(2019)]{Rowe2019}
J.~E. Rowe and Aishwaryaprajna.
\newblock The benefits and limitations of voting mechanisms in evolutionary
  optimisation.
\newblock In \emph{Proceedings of Foundations of Genetic Algorithms (FOGA
  2019)}, pages 34--42, 2019.
\newblock \doi{10.1145/3299904.3340305}.

\bibitem[Whitley et~al.(2018)Whitley, Varadarajan, Hirsch, and
  Mukhopadhyay]{Whitley2018}
D.~Whitley, S.~Varadarajan, R.~Hirsch, and A.~Mukhopadhyay.
\newblock Exploration and exploitation without mutation: Solving the {Jump}
  function in $\theta(n)$ time.
\newblock In \emph{International Conference on Parallel Problem Solving from
  Nature (PPSN 2018)}, pages 55--66, 2018.
\newblock \doi{10.1007/978-3-319-99259-4_5}.

\bibitem[Wineberg and Oppacher(2003)]{wineberg2003underlying}
M.~Wineberg and F.~Oppacher.
\newblock The underlying similarity of diversity measures used in evolutionary
  computation.
\newblock In \emph{Proceedings of the Genetic and Evolutionary Computation
  Conference ({GECCO} 2003)}, pages 1493--1504, 2003.
\newblock URL \url{https://dl.acm.org/doi/abs/10.5555/1756582.1756604}.

\bibitem[Witt(2006)]{Witt2006}
C.~Witt.
\newblock Runtime analysis of the {($\mu$+1) EA} on simple pseudo-{Boolean}
  functions.
\newblock \emph{Evolutionary Computation}, 14\penalty0 (3):\penalty0 484--497,
  2006.
\newblock \doi{10.1162/106365606776022751}.

\bibitem[Witt(2023)]{WITT202318}
C.~Witt.
\newblock How majority-vote crossover and estimation-of-distribution algorithms
  cope with fitness valleys.
\newblock \emph{Theoretical Computer Science}, 940:\penalty0 18--42, 2023.
\newblock \doi{https://doi.org/10.1016/j.tcs.2022.08.014}.
\newblock URL
  \url{https://www.sciencedirect.com/science/article/pii/S0304397522004881}.

\end{thebibliography}

\appendix

\section{Details on Adapted Runtime Bounds in Table~\ref{tab:overview-runtime-results}}

In this short section we give further details on adaptations made to published bounds from~\cite{Koetzing2011a} and~\cite{Doerr2024} included in Table~\ref{tab:overview-runtime-results}.

\Citet{Koetzing2011a} showed a bound of 
\[
    O(\mu n \log(n) + e^{6k}  \mu^{k+2}  n)
\]
for $p_c \le k/n$ and for a mutation operator which never creates duplicates of the parent. Inspecting the proof of \cite[Theorem~7]{Koetzing2011a}, we found a minor bug. The probability of event E6 of crossover happening at least once during $n/k$ generations and no mutation leading to an accepted offspring is bounded from below by
\begin{equation*}
\label{eq:Koetzing-inequality}
    1 - (1-p_c)^{n/k} \cdot \left(1 - \frac{k}{n}\right)^{k/n} = \Omega(1).
\end{equation*}
There are two typos in the expression on the left-hand side; \neweditx{the statement} should read
\[
    (1 - (1-p_c)^{n/k}) \cdot \left(1 - \frac{k}{n}\right)^{n/k} \neweditx{\;= \Omega(1)}.
\]
\neweditx{This corrected statement holds} for $p_c = \Theta(k/n)$, but for smaller $p_c$ we only have $1-(1-p_c)^{n/k} = \Theta(p_c \cdot n/k)$ (cf.\ \cite[Lemma~10]{Badkobeh2015}). This leads to an additional factor of $1/(p_c \cdot n/k) = k/(np_c)$ in the asymptotic bound, which then becomes
\[
    O(\mu n \log(n) + e^{6k}  \mu^{k+2}  k/p_c).
\]

\Citet{Doerr2024} very recently proved an upper bound of 
\begin{align*}
    O\Big(& \mu n \log(\mu) + n \log(n) + \frac{(n/k+\mu)\log(\mu)}{(n/\chi)^{-k+1}\min(\exp(p_c\mu/(2048e)), (n/\chi)^{+k-1})} + \left(\frac{n}{\chi}\right)^{k-1}\Big)
\end{align*}
for $2 \le \mu = o(n)$.
The third summand can be written as
\begin{align*}
    \frac{(n/k+\mu)\log(\mu)}{\min((n/\chi)^{-k+1}\exp(p_c\mu/(2048e)), 1)} \le \frac{(n/k+\mu)\log(\mu)}{(n/\chi)^{-k+1}\exp(p_c\mu/(2048e))} + \frac{(n/k+\mu)\log(\mu)}{1}.
\end{align*}
Since $(n/k + \mu)\log(\mu) = O(\mu n \log(\mu))$, the last term can be omitted and we obtain a simplified bound of
\begin{align*}
    O\left( \mu n \log(\mu) + n \log(n) + \left(\frac{n}{\chi}\right)^{k-1}\left(1 + 
     \frac{(n/k+\mu)\log(\mu)}{\exp(p_c\mu/(2048e))}\right)\right).
\end{align*}

\section{Proofs for Negative Results on the Drift of the Original \protect\muga}

Here we give proofs for negative results on the drift of the population diversity in the original \muga that were omitted from the main part to keep the paper streamlined.

\subsection{Proof of Lemma~\ref{lem:negative-standard-GA}}

We prove Lemma~\ref{lem:negative-standard-GA}, which shows that our drift bounds in generations with crossover derived for the \mulgavar do not hold for the original \muga.

\begin{proof}
Note that $H(x_i,x_j)=2\sqrt{k}$ for $i \in \{1, \ldots , \mu/4\}$ and $j \in \{\mu/4+1, \ldots , \mu\}$ and $H(x_i,x_j)=0$ otherwise. Therefore \begin{align}\label{eq:counterexample_population_diversity}
S(P_t) = 4\sqrt{k} \cdot \mu/4 \cdot 3\mu/4 = 3\sqrt{k}\mu^2/4,
\end{align}which becomes $o(k\mu^2)$ if $k \in \omega(1)$. Note further $P_t \neweditx{\; \subseteq \;} \plateau$. Let $y$ be an offspring which is generated by uniform crossover of two parents chosen uniformly at random from $P_t$ and $z$ be the outcome of mutation on $y$. Let $E_1$ be the event that the two parents of $y$ are $0^{\sqrt{k}}1^{n-k}0^{k-\sqrt{k}}$, $E_2$ be the event that those are $1^{\sqrt{k}}0^{\sqrt{k}}1^{n-k-\sqrt{k}}0^{k-\sqrt{k}}$ and $E_3$ be the event that one parent is $0^{\sqrt{k}}1^{n-k}0^{k-\sqrt{k}}$ and the other one $1^{\sqrt{k}}0^{\sqrt{k}}1^{n-k-\sqrt{k}}0^{k-\sqrt{k}}$.  
Let $Q_t:=(P_t \setminus \{x_d\}) \cup \{y\}$ and $P_{t+1}=(P_t \setminus \{x_d\}) \cup \{z\}$ where $d \in [\mu]$ is chosen uniformly at random. Then by the law of total probability
\begin{align*}
\E(S(P_{t+1}) \mid S(P_t)) &= \sum_{i=1}^3\E(S(P_{t+1}) \mid S(P_t),E_i)\neweditx{\Pr}(E_i).
\end{align*}
Note that $\neweditx{\Pr}(E_1)=1/16$, $\neweditx{\Pr}(E_2)=9/16$ and $\neweditx{\Pr}(E_3)=6/16$ and if $E_1$ or $E_2$ occurs, crossover does not have any effect. 
Let $V$ be the event that $z$ is on \plateau and let $\overline{V}$ denote its complementary event. Note that the diversity does not change if $z$ is not on \plateau since it is discarded in this case. Therefore, again by the law of total probability for $i \in \{1,2,3\}$,
\begin{align*}
&\E(S(P_{t+1}) \mid S(P_t),E_i)\\
&= \E(S(P_{t+1}) \mid S(P_t),E_i,V)\Pr(V \mid E_i) + \E(S(P_{t+1}) \mid S(P_t),\overline{V})\Pr(\overline{V} \mid E_i)\\
&= \E(S(P_{t+1}) \mid S(P_t),V,E_i)\Pr(V \mid E_i) + S(P_t)(1-\Pr(V \mid E_i)).
\end{align*}
To simplify notation, we drop the conditions on the event $E_i$ in the following occurring in the expectation with respect to $S(P_{t+1})$, $S(Q_t)$ and $\Pr(V)$. Suppose that $E_i$ occurs for $i \in \{1,2\}$. For $j \in \{0, \ldots , k\}$ let $B_j$ be the event that $j$ ones and $j$ zeros are flipped and let $q_j:=\neweditx{\Pr}(B_j \mid E_i)$. Note that $z$ is on \plateau if and only if one of the $B_j$ occurs. Hence, we obtain again by the law of total probability
\begin{align*}
\E(S(P_{t+1}) \mid S(P_t)) &= S(P_t)(1-\Pr(V)) + \sum_{j=0}^{\neweditx{k}}\E(S(P_{t+1}) \mid S(P_t),B_j)q_j.
\end{align*}
Further,
$$\E(S(P_{t+1}) \mid S(P_t),B_j) \leq \E(S(Q_t) \mid S(P_t)) + 2(\mu-1)\cdot 2j=\E(S(Q_t) \mid S(P_t)) + 4(\mu-1)j$$
by Lemma~\ref{lem:diversity-general2} applied with parameters $P_1:=P_{t+1}$ and $P_2:=Q_t$ and summing over the $2j$ distinct bit positions of $y$ and $z$. Since $B_j$ involves flipping $j$ zeros, $q_j$ is bounded by the probability of flipping $j$ many bits. Therefore, we see that the sum $\sum_{j=0}^{\neweditx{k}} jq_j$ is upper bounded by the expected number of bits which are flipped in the whole bit string during standard bit mutation, which is $\chi$. So we obtain
\begin{align*}
\E(S(P_{t+1}) \mid S(P_t)) &= S(P_t)(1-\Pr(V)) + \sum_{j=0}^{\neweditx{k}}\E\neweditx{(}(S(Q_t) \mid S(P_t)) + 4(\mu-1)j)q_j\\
&\leq S(P_t)(1-\Pr(V)) + \E(S(Q_t) \mid S(P_t))\Pr(V) + 4(\mu-1)\chi
\end{align*}
no matter if $E_1$ or $E_2$ occurs. In the following we determine $\E(S(Q_t) \mid S(P_t),E_i)$ for $i \in \{1,2\}$ and start with $i=1$\neweditx{,} which implies that $y=0^{\sqrt{k}}1^{n-k}0^{k-\sqrt{k}}$.
If $x_d=0^{\sqrt{k}}1^{n-k}0^{k-\sqrt{k}}$ then $\E(S(Q_t) \mid S(P_t),E_1) = S(P_t)$. Otherwise, $Q_t$ contains $\mu/4+1$ individuals of the form $0^{\sqrt{k}}1^{n-k}0^{k-\sqrt{k}}$ and $3\mu/4-1$ many of the form $1^{\sqrt{k}}0^{\sqrt{k}}1^{n-k-\sqrt{k}}0^{k-\sqrt{k}}$. \
Since $x_d=0^{\sqrt{k}}1^{n-k}0^{k-\sqrt{k}}$ with probability $1/4$ and two distinct individuals have Hamming distance $2\sqrt{k}$\neweditx{,} which contribute 
$4\sqrt{k}$ to the total diversity (because every pair of different individuals is counted twice), we obtain
\begin{align*}
\E(S(Q_t) \mid S(P_t),E_1) &= \frac{S(P_t)}{4} + \frac{3}{4} \cdot 4\sqrt{k} \cdot (3\mu/4-1)\cdot(\mu/4+1)\\
&= \frac{S(P_t)}{4} + \frac{3S(P_t)}{4}+\frac{3\sqrt{k}(\mu-2)}{2} = S(P_t) + \frac{3\sqrt{k}(\mu-2)}{2}.
\end{align*}
For $i=2$ we obtain in a similar way
\begin{align*}
\E(S(Q_t) \mid S(P_t),E_2) &= \frac{3 S(P_t)}{4} + \frac{1}{4} \cdot 4\sqrt{k} \cdot (3\mu/4+1)\cdot(\mu/4-1)\\
&= \frac{3 S(P_t)}{4} + \frac{S(P_t)}{4}-\frac{\sqrt{k}(\mu+2)}{2} = S(P_t) - \frac{\sqrt{k}(\mu+2)}{2}.
\end{align*}
Suppose that $E_3$ occurs. Denote by $F_{\ell}$ the event that the mutation flips $\ell$ bits. It is well known that 
\[
    \Prob(F_{\ell}) = \binom{n}{\ell}\left(\frac{\chi}{n}\right)^\ell \left(1 - \frac{\chi}{n}\right)^{n-\ell} \le \frac{\chi^\ell}{\ell!}. 
\]
Suppose that we flip $\ell$ bits.  Then we have deterministically $|S(P_{t+1})- S(P_t)| \le 8\sqrt{k}\mu + 2\ell \mu$, which is composed as follows. Removing any search point changes the diversity by at most $4\sqrt{k}\mu$. Adding $y$ changes the diversity by at most $4\sqrt{k}\mu$. And mutating $\ell$ bits in $y$ adds at most $2\ell \mu$. Hence, for $\ell \geq \sqrt{k}$,
\begin{align*}
\sum_{\ell = \lceil\sqrt{k}\rceil}^\infty \Pr(F_\ell) \cdot \E(|S(P_{t+1})-S(P_t)| \mid F_\ell) \le\;& \sum_{\ell = \lceil\sqrt{k}\rceil}^\infty\Pr(F_\ell)\cdot(8\sqrt{k}\mu + 2\ell\mu)\\
\le\;& \sum_{\ell = \lceil\sqrt{k}\rceil}^\infty \frac{\chi^\ell}{\ell!}\cdot(8\sqrt{k}\mu + 2\ell\mu) = o(\sqrt{k} \mu)
\end{align*}
using $k = \omega(1)$.
So assume that the mutation flips $0\le \ell\le \sqrt{k}$ bits. In the following, we condition on $\ell$ and on the positions $b_1,\ldots,b_\ell$ of those bits. Then each of the $2\sqrt{k}$ positions in $z$ still has probability $1/2$ to be $0$ or $1$, independently of each other. (If some of these positions are mutated, this switches the role of $0$ and $1$, but this does not matter due to symmetry, cf.\ Lemma~8 in~\cite{Oliveto2022}.) Hence, if we denote by $N$ the number of one-bits in $z$ in the first $2\sqrt{k}$ positions, then $N$ follows a binomial distribution $N\sim \Bin(2\sqrt{k},1/2)$.
Now, for every value $\ell$ and \neweditx{all} values $b_1,\ldots,b_\ell$ there exists $N_0$ such that the offspring $z$ is in \plateau if and only if $N=N_0$. Therefore, in each such case the probability that the offspring is accepted is $\Pr(N = N_0) \le \max_{j\in \{0, \ldots , 2\sqrt{k}\}}\{\Pr(N = j)\} \le 2k^{-1/4}$ for $k=\omega(1)$. (The maximum grows asymptotically like $(1+o(1))\cdot 2/\sqrt{\pi \sqrt{2k}}$ by the Stirling formula.) Hence, we also have $\Pr(V \mid F_\ell) \le 2k^{\neweditx{-}1/4}$ for every $0\le \ell\le \sqrt{k}$. On the other hand, for every $0\le \ell\le \sqrt{k}$ we have $|S(P_{t+1})- S(P_t)| \le 8\sqrt{k}\mu + 2\ell \mu \le 10\sqrt{k}\mu$. Therefore,
\begin{align*}
\sum_{\ell = 0}^{\lfloor{\sqrt{k}}\rfloor} \Pr(F_\ell) \cdot \E(|S(P_{t+1})-S(P_t)| \mid F_\ell) & \le 10\sqrt{k}\mu\cdot \sum_{\ell = 0}^{\lfloor{\sqrt{k}}\rfloor}\Pr(V \mid F_\ell )\cdot\Pr(F_\ell) \\
& \le 10\sqrt{k}\mu \cdot 2k^{-1/4}\cdot\sum_{\ell = 0}^{\lfloor{\sqrt{k}}\rfloor}\Pr(F_\ell) \\
& \le 20 k^{1/4}\mu.
\end{align*}

\neweditx{Hence, we obtain by the law of total probability
\begin{align*}
\E(S(P_{t+1})- S(P_t) \mid S(P_t),E_3) \leq\;& \sum_{\ell=0}^{\lfloor{\sqrt{k}}\rfloor} \Pr(F_\ell) \cdot \E(|S(P_{t+1})-S(P_t)| \mid F_\ell)\\
&+  \sum_{\ell=\lceil{\sqrt{k}}\rceil}^{\infty} \Pr(F_\ell) \cdot \E(|S(P_{t+1})-S(P_t)| \mid F_\ell)\\
\leq\;& o(\sqrt{k} \mu) + 20k^{1/4} \mu.
\end{align*}
}
Adding up the three cases for $E_1$, $E_2$ and $E_3$, we obtain in total\neweditx{, along with $\Pr(V \mid E_1) = \Pr(V \mid E_2) = (1-\chi/n)^n \geq e^{-\chi}/2$ for $n$ sufficiently large,} 
\begin{align*}
\E(S(P_{t+1}) \mid S(P_t)) - S(P_t) &\leq \frac{1}{16}\cdot\left( \frac{3 \sqrt{k}(\mu-2)}{2} \neweditx{\mathrel{\cdot} \Pr(V \mid E_1)} + 4(\mu-1) \chi\right)\\
& + \frac{9}{16}\cdot\left(-\frac{\sqrt{k}(\mu+2)}{2}\neweditx{\mathrel{\cdot}\Pr(V \mid E_2)} + 4(\mu-1)\chi\right)\\
&+ \frac{6}{16}\left(o(\sqrt{k}\mu)+20k^{1/4}\mu\right)\\
&= - \neweditx{\frac{3 \sqrt{k} \mu e^{-\chi}}{32}} + o(\sqrt{k}\mu),
\end{align*}
which gives a loss in the expected diversity of at least $\neweditx{(1-o(1)) \cdot 3 \sqrt{k} \mu e^{-\chi}/32}$ in total for $k \in \omega(1)$.
\end{proof}

\subsection{Proof of Lemma~\ref{lem:negative-standard-GA-2}}

Finally, we prove Lemma~\ref{lem:negative-standard-GA-2}, which states that a negative drift is possible for the population diversity in the original \muga.
\begin{proof}
    The formula for $S(P_t)$ was already shown in~\eqref{eq:counterexample_population_diversity}. For $E(S(P_{t+1}))$ we first consider a mutation-only step (\neweditx{which} occurs with probability $(1-p_c)$). We imagine a mutation step being split in two steps: At first one individual chosen uniformly at random is cloned and then the respective number of bits in the cloned individual is flipped. Note that cloning can be seen as 
    \emph{boring crossover}~\cite{Friedrich2023} where two parents are chosen (successively) at random and one parent uniformly at random as offspring is taken. By Theorem~26 in~\cite{Lengler2024} this crossover operator is diversity-neutral.
    Hence, by Lemma~\ref{lem:crossover-general} and Lemma~\ref{lem:diversity-general2}, if the offspring is on \plateau,
    \[
    \E(S(P_{t+1}) \mid S(P_t)) \leq \left(1-\frac{2}{\mu^2}\right)S(P_t) +4(\mu-1) \sum_{j=0}^k j q_j
    \]
    where $q_j$ denotes the probability of flipping $j$ ones and $j$ zeros. 
    Obviously, $q_j$ is upper-bounded by the probability of flipping exactly $j$ zeros and an arbitrary number of ones. Hence, the sum $\sum_{j=0}^kj q_j$ is upper-bounded by the expected number of zero-bits that are flipped, which is $k\chi/n$. Therefore, 
    \begin{align}
    \label{eq:plateau-and-not}
    \E(S(P_{t+1}) \mid S(P_t)) &\leq \left(1-\frac{2}{\mu^2}\right)S(P_t) +\frac{4(\mu-1) k\chi}{n} \leq S(P_t) +\frac{4\mu k \chi}{n}
    \end{align}
    for a mutation step. Note that inequality~\eqref{eq:plateau-and-not} is trivially fulfilled if the offspring is not on \plateau. Together with Lemma~\ref{lem:negative-standard-GA} and the law of total probability, we obtain the result.
\end{proof}

\end{document}